\documentclass{article}
\usepackage[margin=1in]{geometry}

\usepackage[nospace,noadjust]{cite}
\usepackage{amsmath,amssymb,amsfonts,times}
\usepackage{graphicx}
\usepackage{textcomp}
\usepackage{xcolor}
\usepackage{subfigure}
\usepackage{epsfig}
\usepackage{multirow}
\usepackage{algorithm}
\usepackage{algorithmicx}
\usepackage{algpseudocode}
\usepackage{array}
\usepackage{epstopdf}
\usepackage{color}
\usepackage{diagbox}
\usepackage{bm}
\usepackage{tcolorbox}
\usepackage{pdfsync}
\usepackage{makecell}

\usepackage{url}
\usepackage[hidelinks]{hyperref}
\usepackage{amsmath,amsthm,amssymb,amsbsy}
\usepackage{paralist}
\usepackage{xcolor}
\usepackage{color}
\usepackage{graphicx}
\graphicspath{{./figs/}}
\usepackage{comment}
\usepackage{multirow}
\usepackage[toc, page]{appendix}

\usepackage{graphicx}
\usepackage{fancyhdr}
\usepackage{cite}
\usepackage{cleveref}

\newtheorem{lemma}{Lemma}
\newtheorem{definition}{Definition}

\newtheorem{theorem}{Theorem}

\theoremstyle{remark}

\theoremstyle{problem}

\renewcommand{\O}{\mathbb{O}}
\newcommand{\R}{\mathbb{R}}
\def \real    { \mathbb{R} }

\newcommand{\e}{\begin{equation}}
\newcommand{\ee}{\end{equation}}
\newcommand{\en}{\begin{equation*}}
\newcommand{\een}{\end{equation*}}
\newcommand{\eqn}{\begin{eqnarray}}
\newcommand{\eeqn}{\end{eqnarray}}
\newcommand{\bmat}{\begin{bmatrix}}
\newcommand{\emat}{\end{bmatrix}}

\DeclareMathAlphabet\mathbfcal{OMS}{cmsy}{b}{n}

\newcommand{\E}{\operatorname{\mathbb{E}}}



\newcommand{\vct}[1]{\boldsymbol{#1}}
\newcommand{\mtx}[1]{\boldsymbol{#1}}



\newcommand{\<}{\langle}
\renewcommand{\>}{\rangle}



%






\DeclareMathOperator*{\argmin}{\text{arg~min}}

\def \st {\operatorname*{s.t.\ }}

\newcommand{\wh}{\widehat}

\newcommand{\wt}{\widetilde}
\newcommand{\ol}{\overline}

\newcommand{\calA}{\mathcal{A}}
\newcommand{\calB}{\mathcal{B}}
\newcommand{\calC}{\mathcal{C}}
\newcommand{\calD}{\mathcal{D}}

\newcommand{\calH}{\mathcal{H}}
\newcommand{\calI}{\mathcal{I}}

\newcommand{\calM}{\mathcal{M}}

\newcommand{\calP}{\mathcal{P}}

\newcommand{\calS}{\mathcal{S}}

\newcommand{\calX}{\mathcal{X}}
\newcommand{\calY}{\mathcal{Y}}
\newcommand{\calZ}{\mathcal{Z}}

\newcommand{\vh}{\vct{h}}

\newcommand{\vr}{\vct{r}}

\newcommand{\vx}{\vct{x}}
\newcommand{\vy}{\vct{y}}

\newcommand{\mA}{\mtx{A}}
\newcommand{\mB}{\mtx{B}}

\newcommand{\mD}{\mtx{D}}
\newcommand{\mE}{\mtx{E}}

\newcommand{\mG}{\mtx{G}}

\newcommand{\mR}{\mtx{R}}

\newcommand{\mU}{\mtx{U}}
\newcommand{\mV}{\mtx{V}}

\newcommand{\mX}{\mtx{X}}
\newcommand{\mY}{\mtx{Y}}
\newcommand{\mZ}{\mtx{Z}}

\newcommand{\mSigma}{\mtx{\Sigma}}

\newcommand{\mId}{{\bf I}}

\setcounter{MaxMatrixCols}{20}

\graphicspath{{./figs/}}

\newlength{\imgwidth}
\setlength{\imgwidth}{3.125in}

\newboolean{twoColVersion}
\setboolean{twoColVersion}{false}
\newcommand{\twoCol}[2]{\ifthenelse{\boolean{twoColVersion}} {#1} {#2} }

\hyphenation{op-tical net-works semi-conduc-tor}
\hyphenation{different algori-thm}

\voffset 0.5cm

\title{\LARGE \bf A Scalable Factorization Approach for \\ High-Order Structured Tensor Recovery}

\author{Zhen Qin, Michael B. Wakin and Zhihui Zhu\thanks{ZQ (email: qin.660@osu.edu) and ZZ (email: zhu.3440@osu.edu)  are with the Department of Computer Science and Engineering, the Ohio State University;  and MBW (email: mwakin@mines.edu) is with the Department of Electrical Engineering, Colorado School of Mines.}
}

\begin{document}

\maketitle

\begin{abstract}

Tensor decompositions, which represent an $N$-order tensor using approximately $N$ factors of much smaller dimensions, can significantly reduce the number of parameters. This is particularly beneficial for high-order tensors, as the number of entries in a tensor grows exponentially with the order. Consequently, they are widely used in signal recovery and data analysis across domains such as signal processing, machine learning, and quantum physics. A computationally and memory-efficient approach to these problems is to optimize directly over the factors using local search algorithms such as gradient descent, a strategy known as the factorization approach in matrix and tensor optimization. However, the resulting optimization problems are highly nonconvex due to the multiplicative interactions between factors, posing significant challenges for convergence analysis and recovery guarantees. Existing studies often provide suboptimal guarantees with respect to tensor order $N$. For instance, the guarantee for Tucker decomposition appears to deteriorate exponentially with $N$, while the guarantee for tensor-train decomposition in sensing problems requires a Restricted Isometry Property (RIP) that scales with $N$. Moreover, most existing work primarily focuses on individual tensor decompositions and relies on case-by-case analysis.

In this paper, we present a unified framework for the factorization approach to solving various tensor decomposition problems. Specifically, by leveraging the canonical form of tensor decompositions—where most factors are constrained to be orthonormal to mitigate scaling ambiguity—we apply Riemannian gradient descent (RGD) to optimize these orthonormal factors on the Stiefel manifold. Under a mild condition on the loss function, we establish a Riemannian regularity condition for the factorized objective and prove that RGD converges to the ground-truth tensor at a linear rate when properly initialized. Notably, both the initialization requirement and the convergence rate scale polynomially rather than exponentially with $N$, improving upon existing results for Tucker and tensor-train format tensors.

\end{abstract}
\begin{keywords}
Convergence analysis, Riemannian gradient descent, High-order structured tensor recovery, Factorization approach, Riemannian regularity condition, Scalability.
\end{keywords}

\section{Introduction}
\label{introduction}

As generalizations of matrices to higher dimensions, tensors play a crucial role in a wide range of scientific and engineering domains, including applications such as system identification \cite{liu2010interior, zhang2021designing}, quantum state tomography \cite{francca2021fast, lidiak2022quantum,qin2024quantum}, collaborative filtering \cite{li2017mixture, yu2020recovery}, principal component analysis \cite{hotelling1933analysis}, biological sequencing data analysis \cite{faust2012microbial}, network analysis \cite{sewell2015latent}, image processing \cite{salmon2014poisson}, deep networks \cite{khrulkov2017expressive}, network compression or tensor networks \cite{stoudenmire2016supervised, yang2017tensor,novikov2015tensorizing, tjandra2017compressing, yu2017long, ma2019tensorized}, recommendation systems \cite{frolov2017tensor}, and more. However, for an $N$-order tensor $\calX\in\R^{d_1\times \cdots \times d_N}$, the size increases exponentially with respect to the order $N$, posing a significant challenge in storage and processing. To address this issue, tensor decomposition has emerged as a popular approach for compactly representing tensors. Three commonly used tensor decompositions are the Tucker \cite{Tucker66}, tensor train (TT) \cite{Oseledets11} and canonical polyadic (CP) \cite{Bro97} decompositions.

\paragraph{Tucker decomposition}
In order to efficiently achieve a compact representation of a tensor with a low multilinear rank, the Tucker decomposition method has gained widespread adoption. Specifically, Tucker decomposition is commonly used to compress data into a smaller tensor (known as the core tensor) and identify the subspaces spanned by the fibers (represented by the column spaces of the factor matrices). Tucker decomposition represents the $(s_1,\dots,s_N)$-th element of $\calX$ as
\begin{eqnarray}
    \label{Thr element of Tucker tensor in intro}
    \calX(s_1,\dots,s_N)=\sum_{i_1=1}^{r_1^\text{tk}}\cdots\sum_{i_N=1}^{r_N^\text{tk}} \mU_1(s_1,i_1)\cdots \mU_N(s_N,i_N)\calS(i_1,\dots,i_N),
\end{eqnarray}
where $\mU_i\in\R^{d_i\times r_i^\text{tk}}$, $i\in[N]$ are column-wise orthonormal factor matrices and $\calS\in\R^{r_1^\text{tk}\times r_2^\text{tk}\times \cdots \times r_N^\text{tk}}$ is a core tensor. The $N$-tuple $(r_1^\text{tk},\dots,r_N^\text{tk})$ denotes the multilinear rank of the tensor in the Tucker format. We define the compact form of the Tucker tensor as follows:
\begin{eqnarray}
    \label{The compact form of Tucker tensor in intro}
    \calX= [\![\calS; \mU_1, \mU_2,\dots, \mU_N ]\!],
\end{eqnarray}
which reduces the number of parameters from $O(\ol d^N)$ to  $O(N\ol d \ol r^\text{tk} + {\ol r^\text{tk}}^N)$ for $\ol d = \max_{i}d_i$ and $\ol r^\text{tk} = \max_{i}r_i^\text{tk}$.
Given any tensor, a higher-order singular value decomposition (HOSVD) \cite{sidiropoulos2000uniqueness} can be employed to find the Tucker decomposition with quasi-optimality, facilitating various applications such as neuroimaging analysis \cite{li2017parsimonious,li2018tucker}, imaging and video processing \cite{das2021hyperspectral}, and longitudinal relational data analysis \cite{hoff2015multilinear}.

\paragraph{Tensor train (TT) decomposition} To further reduce the number of parameters, TT decomposition provides an efficient representation  with only linear complexity in $N$ and has been widely adopted in large-scale quantum state tomography \cite{cramer2010efficient,lanyon2017efficient,wang2020scalable,verstraete2004matrix,pirvu2010matrix,werner2016positive,jarkovsky2020efficient}, deep networks \cite{khrulkov2017expressive}, network compression  \cite{stoudenmire2016supervised, novikov2015tensorizing, yang2017tensor, tjandra2017compressing, yu2017long, ma2019tensorized} and more.
 Specifically, TT decomposition represents the $(s_1,\dots,s_N)$-th element of $\calX$ as
\begin{eqnarray}
    \label{The element of TT in intro}
    \calX(s_1,\dots,s_N)=\prod_{i=1}^N{\mX_i(:,s_i,:)}=\prod_{i=1}^N{\mX_i(s_i)},
\end{eqnarray}
where tensor factors $\mX_i \in\R^{r_{i-1}^\text{tt}\times d_i \times r_i^\text{tt}}, i\in[N]$ with $r_0^\text{tt}=r_N^\text{tt}=1$. In \eqref{The element of TT in intro}, we denote $\mX_{i}(:,s_i,:)$---one slice of $\mX_i$ with the second index being fixed as $s_i$---by  ${\mX_i(s_i)}\in\R^{r_{i-1}\times r_{i}}$ since it will be extensively used.
To simplify the notation, we write the TT format with factors $\{\mX_i\}$ as
\begin{eqnarray}
    \label{The compact form of TT in intro}
    \calX = [\mX_1,\dots,\mX_N ].
\end{eqnarray}
The storage of the TT format is $O(N\ol d{\ol r^\text{tt}}^2)$ for $\ol d = \max_{i}d_i$ and $\ol r^\text{tt} = \max_{i} r_i^\text{tt}$.
The decomposition of the tensor $\calX$ into the form of \eqref{The element of TT in intro} is generally not unique: not only are the factors $\mX_i(s_i)$ not unique, but also the dimensions of these factors can vary. To introduce the factorization with the smallest possible dimensions $\vr^\text{tt} = (r_1^\text{tt},\ldots,r_{N-1}^\text{tt})$, for convenience, for each $\mX_i$, we put $\{ \mX_i(s_i)\}_{s_i=1}^{d_i}$ together into the following form $L(\mX_i)=\begin{bmatrix}\mX_i^\top(1)  \cdots  \mX_i^\top(d_i) \end{bmatrix}^\top\in\R^{d_i r_{i-1}^\text{tt}\times r_i^\text{tt}}$, where $L(\mX_i)$ is often called the left unfolding of $\mX_i$, if we view $\mX_i$ as a tensor. We say the decomposition \eqref{The element of TT in intro} is minimal if the rank of the left unfolding matrix $L(\mX_i)$ is $r_i^\text{tt}$. The dimensions $\vr^\text{tt} = (r_1^\text{tt},\dots, r_{N-1}^\text{tt})$ of such a minimal decomposition are called the TT ranks of $\calX$. According to \cite{holtz2012manifolds}, there is exactly one set of ranks $\vr^\text{tt}$ for which $\calX$ admits a minimal TT decomposition.
For any TT format $\calX$ of form \eqref{The element of TT in intro}, there always exists a factorization such that $L(\mX_i)$ are orthonormal matrices for all $i \in [N-1]$ (i.e., $L^\top(\mX_i)L(\mX_i) = \mId_{r_i^\text{tt}}$), which is called the left-canonical form or left orthogonal decomposition.

It is worth noting that the tensor train singular value decomposition (TT-SVD), also referred to as the sequential SVD algorithm, offers an efficiently attainable quasi-optimal solution to the left-orthogonal TT decomposition. This technique, introduced in \cite{Oseledets11}, involves employing the SVD \cite{cai2010singular} and reshape operations in an alternating manner.

\paragraph{Special case: Orthogonal CP decomposition} The CP decomposition  requires the least amount of storage by representing the $(s_1,\dots,s_N)$-th element of $\calX$ as
\begin{eqnarray}
    \label{The element in the CP decomposition  in intro}
    \calX(s_1,\dots,s_N)  = \sum_{i=1}^{r^{\text{cp}}} \lambda_i \mB_1(s_1,i)\cdots \mB_N(s_N,i),
\end{eqnarray}
where $\{\lambda_i\}_{i=1}^{r^{\text{cp}}}$ are scalars and $\mB_i\in\R^{d_i\times r^\text{cp}}, i\in[N]$ are factor matrices. Hence, the storage of the CP format is $O(N\ol d r^\text{cp})$ for $\ol d = \max_{i}d_i$. Note that as stated in \cite{cichocki2015tensor}, we can introduce an $N$-order diagonal core $\calD\in\R^{r^\text{cp}\times\cdots\times r^\text{cp}}$ to further define the CP decomposition. Here, $\calD(i,\dots,i) = \lambda_i$ represents the $i$-th diagonal element, while the rest of the elements remain zero.
By comparing \eqref{Thr element of Tucker tensor in intro} with \eqref{The element in the CP decomposition  in intro}, we can observe that if matrices $\mB_i$ in \eqref{The element in the CP decomposition  in intro} are column-wise orthonormal, the CP decomposition will be a special case of the Tucker decomposition, with a diagonal core tensor $\calD$.
While the CP decomposition does not inherently possess an orthonormal structure, the orthogonal CP decomposition \cite{sorensen2012canonical,cheng2016probabilistic,tang2025revisit} has demonstrated enhanced accuracy in recovering latent factors by imposing a column-wise orthonormal constraint on the factors $\mB_i$.  In this paper, we opt to omit the discussion on the orthogonal CP decomposition, as the convergence analysis of the Tucker format can be applied to it without the need for additional exploration.

\paragraph{Factorization approach}
To unify all the tensor decompositions, we use $\calX = [\mZ_1,\dots,\mZ_{\wt N}]\in\R^{d_1\times\cdots \times d_N}$ to denote an $N$-th order tensor parameterized by tensor factors $\mZ_1,\ldots,\mZ_{\wt N}$ with $\wt N\geq N$. For the Tucker decomposition
, we have $\wt N = N+1$, $\mZ_i = \mU_i$, $i=1,\dots, N$ and $\mZ_{\wt N} = \calS$, while for the TT decomposition, we have $\wt N = N$ and $\mZ_i = L(\mX_i)$, $i=1,\dots, N$. To exploit the compact representation, the factorization approach for tensor analysis and recovery problems attempts to optimize the tensor factors by solving
\begin{eqnarray}
    \label{Unified loss function for structured tensor-1}
    \begin{split}
\min_{\mZ_i, i\in[\wt N]} H(\mZ_1,\dots,\mZ_{\wt N})  = h([\mZ_1,\dots,\mZ_{\wt N}]) = h(\calX)
    \end{split}
\end{eqnarray}
with an appropriate loss function $h:\R^{d_1\times \cdots \times d_N}\rightarrow \R$ that depends on the specific problem.

\paragraph{Limitation of existing works} Various iterative algorithms have been proposed for the factorized problem \eqref{Unified loss function for structured tensor-1}, including gradient descent \cite{cai2019nonconvex}, penalized alternating minimization \cite{hao2021sparse}, regularized gradient descent \cite{Han20}, Riemannian gradient descent on the Grassmannian manifold \cite{XiaTC19}, scaled gradient descent \cite{TongTensor21}, and Riemannian gradient descent on the Stiefel manifold \cite{qin2024guaranteed}. A detailed introduction to these algorithms can be found in the related works section. However, existing investigations encounter two fundamental challenges: $(i)$ The majority of existing algorithms for CP or Tucker recovery are tailored for $3$-order, leaving the impact of the tensor order $N$ on initialization requirements and convergence rates largely unexamined. While \cite{Han20} and \cite{qin2024guaranteed} extend to high-order tensors, their theoretical guarantees exhibit notable limitations. Specifically, in \cite{Han20}, both the convergence rate and initialization conditions are susceptible to exponential deterioration with respect to $N$ due to the reliance on an approximately orthonormal structure. In contrast, \cite{qin2024guaranteed} establishes that for tensor train sensing, the initialization requirement and convergence rate of Riemannian gradient descent scale polynomially with $N$. However, this approach employs the $(N+3)\ol r^\text{tt}$-Restricted Isometry Property (RIP), which is significantly more stringent than empirical observations suggest. Moreover, its generalizability to broader tensor recovery problems remains unclear. $(ii)$ The analytical frameworks employed across different algorithms vary considerably, impeding a systematic and unified understanding of tensor recovery. This lack of cohesion complicates theoretical analysis and prevents a direct comparison of different approaches. Therefore, developing a unified analytical framework is imperative for advancing the theoretical foundations of tensor recovery.

\subsection{Our contribution}

In this paper, we aim to develop a unified algorithm and convergence analysis for the factorized problem \eqref {Unified loss function for structured tensor-1} that can be applied to different tensor decompositions. Inspired by the canonical form in the Tucker decomposition and TT decomposition, we assume that the first $\wt N-1$ factors are orthonormal, i.e.,  $\mZ_i^\top\mZ_i = \mId,  i\in[{\wt N}-1]$. Imposing an orthonormal structure on the tensor factors confers several advantages. Firstly, it ensures that the rank of the Tucker format \cite{cichocki2015tensor} and TT format \cite{holtz2012manifolds} remains minimal. Additionally, this constraint improves identifiability \cite{cai2022provable} and mitigates scaling ambiguity \cite{chen2021deep}. For instance, such form is unique up to the insertion of orthogonal matrices between factors. Ultimately, the orthonormal structure leads to more relaxed uniqueness properties \cite{anandkumar2014tensor}. Thus, we solve the following constrained problem
\begin{eqnarray}
    \label{Unified loss function for structured tensor}
    \begin{split}
\min_{\mZ_i, i\in[\wt N]} H(\mZ_1,\dots,\mZ_{\wt N}) & = h([\mZ_1,\dots,\mZ_{\wt N}]),\\
\st \ \mZ_i^\top\mZ_i&=\mId, \  i \in[{\wt N}-1].
    \end{split}
\end{eqnarray}
Since we can view the orthonormal structure as a point on the Stiefel manifold in the Riemannian space,  we solve \eqref{Unified loss function for structured tensor} by a (hybrid) Riemannian gradient descent (RGD) \cite{qin2024guaranteed}:
\begin{equation}
\begin{split}
    & \mZ_i^{(t+1)} = \text{Retr}_{\mZ_i}\Big({{\mZ}_i^{(t)}}-\frac{\mu}{\gamma}\calP_{\text{T}_{\mZ_i} \text{St}}\big(\nabla_{\mZ_i}H(\mZ_1^{(t)},\dots,\mZ_{\wt N}^{(t)}) \big) \Big), \ i \leq {\wt N}-1,\\
    &\mZ_{\wt N}^{(t+1)} ={\mZ_{\wt N}^{(t)}}-\mu\nabla_{\mZ_{\wt N}}H(\mZ_1^{(t)},\dots,\mZ_{\wt N}^{(t)}),
\end{split}
\label{RGD for unified intro}
\end{equation}
where $\mu>0$ is the learning rate for $\{\mZ_i\}_{i=1}^{\wt N}$ and $\gamma>0$ controls the ratio between the learning rates for $\mZ_{\wt N}$ and $\{\mZ_i\}_{i \leq {\wt N}-1}$.
The discrepant learning rates in \eqref{RGD for unified intro} are used to accelerate the convergence rate of $\mZ_{\wt N}$ since the energy of $\mZ_{\wt N}$ and $\{\mZ_i\}_{i \leq {\wt N}-1}$ could be unbalanced. We refer to \Cref{Stiefel and RGD} for definitions of the polar decomposition-based
retraction $\text{Retr}_{\mZ_i}(\cdot)$ and projection $\calP_{\text{T}_{\mZ_i} \text{St}}(\cdot)$ onto the tangent space to the Stiefel manifold.

To analyze the convergence behavior of RGD, we rely on the so-called Riemannian regularity condition  for the factorized objective $H(\mZ_1,\dots,\mZ_{\wt N})$, which will be introduced in detail in \Cref{Riemannian regularity condition for general loss}.
Given the Riemannian regularity condition,  we provide a unified convergence analysis for \eqref{RGD for unified intro} and establish a linear convergence  of the RGD algorithm for solving the factorized problem with a suitable initialization. We summarize this result in the informal theorem below.
\begin{theorem}(informal version)
\label{convergence result of unified tensor}
Assume that the loss function  $H$ in \eqref{Unified loss function for structured tensor} satisfies the Riemannian regularity condition with {a target solution $\calX^\star = [\mZ_1^\star,\dots, \mZ_{\wt N}^\star]$ with ${\mZ_i^\star}^\top\mZ_i^\star = \mId,  i\in[{\wt N}-1]$}. If appropriately initialized, the RGD algorithm in \eqref{RGD for unified intro} converges to $\{\mZ_i^\star\}$ at a linear rate.
\end{theorem}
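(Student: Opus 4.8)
The plan is to carry out the familiar ``regularity condition implies linear convergence'' argument, adapted to the hybrid Riemannian--Euclidean iteration \eqref{RGD for unified intro} and to the rotational ambiguity intrinsic to the canonical factorizations. First I would introduce a distance that quotients out the ambiguity and is weighted to match the discrepant learning rates. Let $\calG$ be the group of transformations that leaves the map $\{\mZ_i\}\mapsto[\mZ_1,\dots,\mZ_{\wt N}]$ invariant while preserving orthonormality of the first $\wt N-1$ factors --- orthogonal change of basis $\mU_i\mapsto\mU_i\mR_i$ compensated in the core $\calS$ for the Tucker format, insertion of orthogonal matrices between consecutive factors for the TT format --- and set
\[
\dist^2(\{\mZ_i\},\{\mZ_i^\star\}) \;=\; \min_{g\in\calG}\Big(\gamma\sum_{i=1}^{\wt N-1}\big\|\mZ_i - (g\cdot\mZ^\star)_i\big\|_F^2 \;+\; \big\|\mZ_{\wt N} - (g\cdot\mZ^\star)_{\wt N}\big\|_F^2\Big).
\]
The weight $\gamma$ mirrors the ratio between the $\mu/\gamma$ and $\mu$ step sizes in \eqref{RGD for unified intro}, so that the cross terms produced by one iteration line up exactly with the single inner product that appears in the Riemannian regularity condition.

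Next I would invoke the Riemannian regularity condition of \Cref{Riemannian regularity condition for general loss} in the working form: there are $\alpha,\beta>0$ and a neighborhood $\calN$ of the target such that, for all $\{\mZ_i\}\in\calN$ with $\mZ_i^\top\mZ_i=\mId$ and with $g^\star$ the minimizer defining $\dist$,
\[
\sum_{i=1}^{\wt N-1}\tfrac1\gamma\big\langle\calP_{\mathrm{T}_{\mZ_i}\mathrm{St}}(\nabla_{\mZ_i}H),\,\mZ_i-(g^\star\cdot\mZ^\star)_i\big\rangle + \big\langle\nabla_{\mZ_{\wt N}}H,\,\mZ_{\wt N}-(g^\star\cdot\mZ^\star)_{\wt N}\big\rangle \;\ge\; \alpha\,\dist^2 + \beta\,\big\|\grad H\big\|^2,
\]
where $\grad H$ collects the projected gradients on the Stiefel blocks together with the Euclidean gradient on $\mZ_{\wt N}$; I would also use the companion bound $\|\grad H\|\le L\,\dist$ on $\calN$, coming either from the regularity hypothesis itself or from local Lipschitz continuity of $\nabla h$ together with boundedness of orthonormal factors and of the core on $\calN$. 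From \Cref{Stiefel and RGD} I would record two properties of the polar-decomposition retraction: the tangent-space projection $\calP_{\mathrm{T}_{\mZ_i}\mathrm{St}}$ is a nonexpansive linear projection, and $\|\mathrm{Retr}_{\mZ_i}(\eta)-(\mZ_i+\eta)\|_F\le c_0\|\eta\|_F^2$ for $\eta$ in the tangent space, so the manifold step deviates from its Euclidean linearization only at second order.

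The core step is the one-step contraction. Since $\dist$ is a minimum over $\calG$, I would bound $\dist^2$ at iterate $t+1$ from above by plugging in the transformation $g^\star$ optimal at iterate $t$, and expand each squared block. For $i\le\wt N-1$ the retraction identity gives $\|\mZ_i^{(t+1)}-(g^\star\cdot\mZ^\star)_i\|_F^2 = \|\mZ_i^{(t)}-(g^\star\cdot\mZ^\star)_i\|_F^2 - \tfrac{2\mu}{\gamma}\langle\calP_{\mathrm{T}}(\nabla_{\mZ_i}H),\mZ_i^{(t)}-(g^\star\cdot\mZ^\star)_i\rangle + \tfrac{\mu^2}{\gamma^2}\|\calP_{\mathrm{T}}(\nabla_{\mZ_i}H)\|_F^2 + \text{(retraction remainder)}$, with the remainder bounded by $O(\mu^2\|\grad H\|^2\dist + \mu^3\|\grad H\|^3 + \mu^4\|\grad H\|^4)$; for $i=\wt N$ the same expansion holds exactly with no remainder. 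Summing with the $\gamma$-weights and applying the regularity condition to the linear terms yields
\[
\dist^2\big(t+1\big) \;\le\; \big(1-2\mu\alpha\big)\,\dist^2\big(t\big) \;-\; 2\mu\beta\,\|\grad H\|^2 \;+\; \mu^2\|\grad H\|^2 \;+\; O\!\big(\mu^2\|\grad H\|^2\dist + \mu^3\|\grad H\|^3 + \mu^4\|\grad H\|^4\big).
\]
Using $\|\grad H\|\le L\dist$ on $\calN$, every term after the first carries an extra factor of $\mu$ and an extra factor of either $\dist$ or $\mu$, so choosing $\mu$ small enough (in terms of $\alpha,\beta,L,c_0$) and $\calN$ of small enough radius makes $-2\mu\beta\|\grad H\|^2$ plus a fraction of the $-2\mu\alpha\dist^2$ slack dominate all of them, giving $\dist^2(t+1)\le(1-\mu\alpha)\dist^2(t)$. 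An induction then closes the argument: if the initialization obeys $\dist^2(0)\le\rho^2$ with $\rho$ the radius of $\calN$ --- precisely the ``appropriate initialization'' requirement --- the contraction keeps every iterate in $\calN$, so $\dist^2(t)\le(1-\mu\alpha)^t\dist^2(0)$ for all $t$; finally, since $\{\mZ_i\}\mapsto[\mZ_1,\dots,\mZ_{\wt N}]$ is Lipschitz on the relevant bounded set (orthonormal Stiefel factors, bounded core), $\|\calX^{(t)}-\calX^\star\|_F\le C\,\dist(t)\to0$ at a linear rate.

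The step I expect to be the main obstacle is the one-step bound itself: arranging the $\gamma$-weighting so the cross terms reproduce exactly the inner product in the regularity condition despite the two different learning rates; handling the ambiguity group $\calG$ in a format-independent way, so that ``use the optimal $g^\star$ from iterate $t$'' is legitimate and the compensating action on the core $\mZ_{\wt N}$ is tracked correctly; and showing the retraction remainders are genuinely higher order, which hinges on the a priori estimate $\|\grad H\|\lesssim\dist$ throughout $\calN$ --- itself resting on the smoothness half of the regularity hypothesis.
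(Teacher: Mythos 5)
Your overall skeleton is the paper's: the $\gamma$-weighted distance that quotients out the rotation ambiguity, the upper bound on $\dist^2$ at step $t+1$ obtained by freezing the optimal rotation from step $t$, the exact expansion of the squared blocks so that the cross terms match the inner product in the Riemannian regularity condition and the quadratic terms match its $a_3$-weighted gradient norm, and the induction that keeps iterates in the basin. The one place you genuinely diverge is the retraction. The paper invokes the nonexpansiveness of the polar-decomposition retraction toward \emph{any} point on the Stiefel manifold (\Cref{NONEXPANSIVENESS PROPERTY OF POLAR RETRACTION_1}, from \cite{LiSIAM21}): since $(\mZ_i^\star,\wt\mR_i)$ is itself orthonormal, $\|\text{Retr}_{\mZ_i}(\mZ_i+\veta)-(\mZ_i^\star,\wt\mR_i)\|_F\le\|\mZ_i+\veta-(\mZ_i^\star,\wt\mR_i)\|_F$, so the retraction contributes \emph{no} remainder and the contraction $(1-2a_2\mu)$ for $\mu\le 2a_3$ drops out immediately. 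You instead use the generic second-order bound $\|\text{Retr}_{\mZ_i}(\veta)-(\mZ_i+\veta)\|_F\le c_0\|\veta\|_F^2$ and then absorb the resulting $O(\mu^2\|\grad H\|^2\dist+\cdots)$ terms using $\|\grad H\|\lesssim\dist$. This route also closes (and your auxiliary gradient bound does follow from the regularity condition itself, via Cauchy--Schwarz against the $a_3$ term, so no extra hypothesis is truly needed), but it costs you a smaller admissible step size, a shrunken neighborhood, and a degraded rate constant, all of which the nonexpansiveness lemma avoids for free. Two small bookkeeping points: your ``working form'' of the regularity condition puts a $1/\gamma$ on the Stiefel inner products, whereas the expansion of the $\gamma$-weighted distance with step $\mu/\gamma$ produces cross terms with \emph{no} $\gamma$ factor and quadratic terms with a $1/\gamma$ factor --- exactly the normalization in \Cref{definition of regularity condition unified} --- so you should align these before summing; and the final passage from $\dist$ to $\|\calX^{(t)}-\calX^\star\|_F$ is \Cref{LOWER BOUND OF TWO DISTANCES main paper general} in the paper rather than a generic Lipschitz claim.
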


To establish the Riemannian regularity condition for general cases, we introduce some assumptions on the general loss function $h$.  Drawing inspiration from prior work \cite{Han20}, we will consider a category of loss functions that satisfies the restricted correlated gradient (RCG) condition which will be introduced in \Cref{Riemannian regularity condition for general loss}. Under the RCG condition, we prove that the Riemannian regularity condition holds for orthogonal CP, Tucker, and TT decompositions. 
Furthermore, as will be discussed in detail in \Cref{Riemannian regularity condition for general loss}, the results on the initialization requirement and the  convergence speed  only depend polynomially with respect to the tensor order $N$.
In the context of tensor sensing, under the Restricted Isometry Property (RIP) condition, the RCG condition can be established with the number of measurements scaling linearly with $N$, thereby improving upon the previously required quadratic scaling in $N$ \cite{qin2024guaranteed}. Moreover, the theoretical guarantees can be extended to both regular tensor factorization and tensor completion. Finally, our experimental results substantiate the effectiveness of our theoretical findings.

\subsection{Related work}
Over the past decade, numerous iterative algorithms have been proposed for optimizing matrix/tensor recovery \cite{jain2013low,Tu16,Wang17,Zhu18TSP,Li20,tong2021accelerating,Ma21TSP,TongTensor21,dong2023fast,wang2016tensor,yuan2019high,Rauhut17, rauhut2015tensor,chen2019non,kressner2014low,budzinskiy2021tensor,wang2019tensor,cai2022provable,luo2021low,Han20,XiaTC19,qin2024guaranteed}. One common approach is the projected gradient descent (PGD) algorithm \cite{Rauhut17, rauhut2015tensor, chen2019non, kressner2014low, budzinskiy2021tensor, wang2019tensor,luo2022tensor,qin2024computational} which optimizes the entire tensor in each iteration and applies a projection method to map the iterates to the appropriate tensor decomposition space.  There are two typical algorithms: iterative hard thresholding (IHT) \cite{Rauhut17, rauhut2015tensor, chen2019non,luo2022tensor,qin2024computational}, and the Riemannian gradient descent algorithm applied to the fixed-rank manifold \cite{kressner2014low, budzinskiy2021tensor, wang2019tensor}. In the latter method, the low-rank Tucker/TT decomposition can be viewed as a point in the Riemannian space, specifically the fixed-rank manifold. Both IHT and the Riemannian approach offer local convergence analyses of PGD with HOSVD/TT-SVD for Tucker/TT-based recovery problems. However, PGD requires exceptionally high storage memory, as the entire tensor must be stored during each iteration.

Factorization approaches address this issue by directly estimating the factors within the tensor decomposition, often through gradient-based algorithms \cite{cai2019nonconvex,hao2021sparse,Han20, XiaTC19,TongTensor21, qin2024guaranteed}. The works \cite{cai2019nonconvex,hao2021sparse} analyze the convergence rate of gradient descent and penalized alternating minimization for estimating factors in $3$-order CP decomposition. However, the relationship between convergence rate, initialization requirements, and tensor order $N$ remains unclear. In the Tucker-based recovery problem, \cite{Han20} introduced the regularized gradient descent algorithm by incorporating regularization into the loss function, which fosters an approximately orthonormal structure. However, the theoretical convergence rate and initial conditions could degrade exponentially in relation to $N$, as the estimated factors are not strictly orthonormal but only approximately weighted orthogonal matrices. To achieve a precise orthonormal structure in Tucker recovery, the work \cite{XiaTC19} treated orthonormal factors as points residing on the Grassmannian manifold within the Riemannian space. This concept led to the development of the Grassmannian gradient descent algorithm. However, a comprehensive convergence analysis for this technique remains absent. A scaled gradient descent method is proposed \cite{TongTensor21} to improve dependence on the condition number of the ground-truth tensor. However, its scalability concerning the tensor order $N$ has yet to be established.

The very recent work \cite{qin2024guaranteed} proposed a similar RGD algorithm for the TT sensing problem. However, the analysis is specific to TT decomposition and requires the $(N+3)\ol r^\text{tt}$-RIP condition. In contrast, our work provides a unified approach for analyzing the convergence rate of the factorization method for structured tensors, including orthogonal CP, Tucker, TT, and other tensor networks that can be represented in canonical form. Moreover, it accommodates a class of loss functions that satisfy the RCG condition. When applied to the TT sensing problem, our improved analysis only requires the $4\ol r^\text{tt}$-RIP. A summary comparing our method with previous works is provided in \Cref{Comparison among different algorithms}.

\begin{table*}[!ht]
\renewcommand{\arraystretch}{1.8}
\begin{center}
\caption{Comparison of existing factorization approaches with the proposed method, where GD denotes gradient descent, AM represents alternating minimization and RCG   indicates the restricted correlated gradient property defined in {Definition}~\ref{The defi in the RCG conditoin}. }
\label{Comparison among different algorithms}
{\begin{tabular}{|c||c|c|c|c|c|} \hline  {Algorithm} &{Decomposition}  &{Order} & {Loss} & {\makecell{Convergence \\ in terms of $N$}} & {\makecell{Initialization \\ in terms of $N$}}
\\\hline {GD \cite{cai2019nonconvex}} &  CP & $3$ & $l_2$ & unknown & unknown
\\\hline {Penalized AM \cite{hao2021sparse}} &  CP &  $3$ &  $l_2$ & unknown & unknown
\\\hline
{Regularized GD \cite{Han20}} &  Tucker &  $N$ &  RCG & exponential & exponential
\\\hline
{Grassmannian GD \cite{XiaTC19}} & Tucker &  $3$ &  $l_2$ & unknown & unknown
\\\hline
{Scaled GD \cite{TongTensor21}} & Tucker &  $3$ &  $l_2$ & unknown & unknown
\\\hline
{RGD \cite{qin2024guaranteed}} & TT &  $N$ &  $l_2$ & polynomial & polynomial
\\\hline
{Our method} & \makecell{orthogonal CP, \\ Tucker, TT} &  $N$ &  RCG & polynomial & polynomial
\\\hline
\end{tabular}}
\end{center}
\end{table*}

{\bf Notation}: We use calligraphic letters (e.g., $\calY$) to denote tensors,  bold capital letters (e.g., $\mY$) to denote matrices, except for $\mX_i$ which denotes the $i$-th $3$-order tensor factors in the TT format, bold lowercase letters (e.g., $\vy$) to denote vectors, and italic letters (e.g., $y$) to denote scalar quantities.  Elements of matrices and tensors are denoted in parentheses, as in Matlab notation. For example, $\calY(s_1, s_2, s_3)$ denotes the element in position $(s_1, s_2, s_3)$ of the $3$-order tensor $\calY$.
The inner product of $\calA\in\R^{d_1\times\dots\times d_N}$ and $\calB\in\R^{d_1\times\dots\times d_N}$ can be denoted as $\<\calA, \calB \> = \sum_{s_1=1}^{d_1}\cdots \sum_{s_N=1}^{d_N} \calA(s_1,\dots,s_N)\calB(s_1,\dots,s_N) $.
The vectorization of  $\calX\in\R^{d_1\times\dots\times d_N}$, denoted as $\text{vec}(\calX)$, transforms the tensor $\calX$ into a vector. The $(s_1, \dots, s_N)$-th element of $\calX$ can be found in the vector $\text{vec}(\calX)$ at the position $s_1 + d_1(s_2-1) + \cdots + d_1d_2 \cdots d_{N-1}(s_N-1)$. We define the $(s_1, \dots, s_N)$-th element of the mode-$i$ tensor-matrix product $\calS\times_i \mU_i\in \R^{r_1^\text{tk}\times \cdots\times r_{i-1}^\text{tk}\times d_i\times r_{i+1}^\text{tk}\times \cdots r_N^\text{tk}}$ as $(\calS\times_i \mU_i)({s_1,\dots, s_N})=\sum_{j=1}^{r_i^\text{tk}}\calS({s_1,\dots, s_{i-1},j,s_{i+1},\dots, s_N})\mU_i({s_i, j})$. Furthermore, we introduce an alternative notation for the Tucker decomposition: $[\![\calS; \mU_1, \mU_2,\dots, \mU_N ]\!] = \calS\times_1\mU_1\times_2\cdots\times_N \mU_N $.
$\|\calX\|_F = \sqrt{\<\calX, \calX \>}$ is the Frobenius norm of $\calX$.
$\|\mX\|$ and $\|\mX\|_F$ respectively represent the spectral norm and Frobenius norm of $\mX$.
$\sigma_{i}(\mX)$ is the $i$-th singular value of $\mX$.
$\|\vx\|_2$ denotes the $l_2$ norm of $\vx$.
$\times_i$ is the  mode-$i$ tensor-matrix product.
$\otimes$ is the kronecker product.
$\ol \otimes$ denotes the kronecker product between submatrices in two block matrices. Its detailed definition and property are shown in {Appendix} \ref{Technical tools used in proofs}.
For a positive integer $K$, $[K]$ denotes the set $\{1,\dots, K \}$.
For two positive quantities $a,b\in \real$,  $b = O(a)$ means $b\leq c a$ for some universal constant $c$; likewise, $b = \Omega(a)$ represents $b\ge ca$ for some universal constant $c$.

Next, we define singular values and condition numbers for the Tucker and TT formats, respectively. (1) Based on the matricization operator $\calM_i(\calS\times_1\mU_1\times_2\cdots\times_N \mU_N)=\mU_i\calM_i(\calS)(\mU_{i-1}\otimes\cdots\otimes\mU_1\otimes\mU_N\otimes\cdots\otimes\mU_{i+1})^\top\in\R^{d_i\times \Pi_{j\neq i}d_j}$, we can define the largest and smallest singular value of $\calX$ in the Tucker format as $\overline{\sigma}_{\text{tk}}(\calX)=\max_{i=1}^{N}\|\calM_i(\calX)\|$ and $\underline{\sigma}_{\text{tk}}(\calX)=\min_{i=1}^{N} \sigma_{r_i^\text{tk}}(\calM_i(\calX))$. Additionally, we can define the condition number of $\calX$ in the Tucker format as $\kappa_\text{tk}(\calX)=\frac{\overline{\sigma}_{\text{tk}}(\calX)}{\underline{\sigma}_{\text{tk}}(\calX)}$.
(2) We note that $r_i^\text{tt}$ also equals the rank of the $i$-th unfolding matrix $\calX^{\<i\>}\in\R^{(d_1\cdots d_k)\times (d_{k+1}\cdots d_N)}$ of the tensor $\calX$, where the $(s_1\cdots s_i, s_{i+1}\cdots s_N)$-th element\footnote{ Specifically, $s_1\cdots s_i$ and $s_{i+1}\cdots s_N$ respectively represent the $(s_1+d_1(s_2-1)+\cdots+d_1\cdots d_{i-1}(s_i-1))$-th row and $(s_{i+1}+d_{i+1}(s_{i+2}-1)+\cdots+d_{i+1}\cdots d_{N-1}(s_N-1))$-th column.
} of $\calX^{\<i\>}$ is given by $\calX^{\<i\>}(s_1\cdots s_i,  s_{i+1}\cdots s_N) = \calX(s_1,\dots, s_N)$.
With the $i$-th unfolding matrix $\calX^{\<i\>}$\footnote{We can also define the $i$-th unfolding matrix as $\calX^{\< i \>} = \mX^{\leq i}\mX^{\geq i+1}$, where each row of the left part $\mX^{\leq i}$ and each column of the right part $\mX^{\geq i+1}$ can be represented as $\mX^{\leq i}(s_1\cdots s_i,:) = \mX_1(s_1)\cdots \mX_i(s_i)$ and $\mX^{\geq i+1}(:,s_{i+1}\cdots s_{N}) = \mX_{i+1}(s_{i+1})\cdots \mX_N(s_{N})$. When factors are in left-orthogonal form, we have ${\mX^{\leq i}}^\top\mX^{\leq i} = \mId_{r_i^\text{tt}}$ and $\sigma_j(\calX^{\< i\>}) = \sigma_j( \mX^{\geq i+1} ), j\in[N-1]$.} and TT ranks, we can further define its smallest singular value $\underline{\sigma}_\text{tt}(\calX)=\min_{i=1}^{N-1}\sigma_{r_i^\text{tt}}(\calX^{\<i\>})$, its largest singular value $\overline{\sigma}_\text{tt}(\calX)=\max_{i=1}^{N-1}\sigma_{1}(\calX^{\<i\>})$ and condition number $\kappa_\text{tt}(\calX)=\frac{\overline{\sigma}_\text{tt}(\calX)}{\underline{\sigma}_\text{tt}(\calX)}$.
In the end, we define $\ol d = \max_{i}d_i$, $\ol r^\text{tk} = \max_{i}r_i^\text{tk}$, and $\ol r^\text{tt} = \max_{i} r_i^\text{tt}$.
Without specifying a particular tensor decomposition, we use the notations ${\overline{\sigma}(\calX)}$ and $\ol r$ to simultaneously represent the cases of $\overline{\sigma}_\text{tk}(\calX)$ with $\ol r^\text{tk}$ and $\overline{\sigma}_\text{tt}(\calX)$ with $\ol r^\text{tt}$.

\section{Local Convergence of Riemannian Gradient Descent for General Loss Functions}
\label{LocalConvergenceof General loss}

In this section, we will study the convergence of Riemannian gradient descent (RGD) \eqref{RGD for unified intro} for a general loss function $h$ that satisfies the restricted correlated gradient (RCG) condition.  

\subsection{Riemannian gradient descent}
\label{Stiefel and RGD}

We begin by introducing some useful results about the Stiefel manifold within the Riemannian space. The Stiefel manifold, denoted as $\text{St}(m,n)=\{\mG\in\R^{m \times n}: \mG^\top\mG = \mId_{n}\}$, consists of all $m \times n$ column-wise orthonormal matrices. We can regard $\text{St}(m,n)$ as an embedded submanifold of a Euclidean space and further define $\text{T}_{\mG} \text{St}:=\{\mA\in\R^{m\times n}: \mA^\top\mG+\mG^\top\mA={\bm 0} \}$  as the tangent space to the Stiefel manifold $\text{St}(m,n)$ at the point $\mG \in \text{St}(m,n)$. Given an arbitrary matrix $\mB\in\R^{m\times n}$, the projection of $\mB$ onto $\text{T}_{\mG} \text{St}$ can be expressed as follows \cite{absil2008optimization}:
\begin{eqnarray}
    \label{Projection to the Tangent space on the Stiefel manifold unified}
    \calP_{\text{T}_{\mG} \text{St}}(\mB)=\mB-\frac{1}{2}\mG(\mB^\top\mG+\mG^\top\mB),
\end{eqnarray}
and its orthogonal complement is
\begin{eqnarray}
    \label{orthogonal complement to the Tangent space on the Stiefel manifold unified}
    \calP_{\text{T}_{\mG} \text{St}}^{\perp}(\mB)=\frac{1}{2}\mG(\mB^\top\mG+\mG^\top\mB).
\end{eqnarray}
Note that when we have a gradient $\mB$ defined in the Hilbert space, we can use the projection operator \eqref{Projection to the Tangent space on the Stiefel manifold unified} to compute the Riemannian gradient $\calP_{\text{T}_{\mG} \text{St}}(\mB)$ on the tangent space of the Stiefel manifold. For an update $\wh \mG = \mG - c \calP_{\text{T}_{\mG} \text{St}}(\mB)$ with a positive constant $c$, we can utilize the polar decomposition-based retraction to project it back onto the Stiefel manifold:
\begin{eqnarray}
    \label{polar decomposition-based retraction sec2}
    \text{Retr}_{\mG}(\wh \mG)=\wh \mG(\wh \mG^\top\wh \mG)^{-\frac{1}{2}}.
\end{eqnarray}

Recall the (hybrid) RGD algorithm in \eqref{RGD for unified intro}:
\begin{equation}
\begin{split}
    & \mZ_i^{(t+1)} =\text{Retr}_{\mZ_i}\Big({{\mZ}_i^{(t)}}-\frac{\mu}{\gamma}\calP_{\text{T}_{\mZ_i} \text{St}}\big(\nabla_{\mZ_i}H(\mZ_1^{(t)},\dots,\mZ_{\wt N}^{(t)}) \big) \Big), \ i \leq {\wt N}-1,\\
    &\mZ_{\wt N}^{(t+1)} ={\mZ_{\wt N}^{(t)}}-\mu\nabla_{\mZ_{\wt N}}H(\mZ_1^{(t)},\dots,\mZ_{\wt N}^{(t)}).
\end{split}
\label{RGD for unified}
\end{equation}
For the Tucker decomposition, $\wt N = N+1$, $\mZ_i = \mU_i, i = 1\in [N]$, $\mZ_{\wt N} = \calS$, and we set $\gamma = \overline{\sigma}_{\text{tk}}^2(\calX^\star)$, while for the TT decomposition, $\wt N = N$, $\mZ_i = L(\mX_i), i \in [N]$, and we set $\gamma = \overline{\sigma}_{\text{tt}}^2(\calX^\star)$. Here $\calX^\star = [\mX_1^\star,\ldots,\mX_{\wt N}^\star]$ denotes the tensor formed by a target solution $\{\mX_i^\star\}$ of \eqref{Unified loss function for structured tensor}. The choice of the discrepant learning rate ratio $\gamma$ is to balance the convergence speed of the first $\wt N-1$ factors $\mX_i^\star$ and the last factor $\mX_{\wt N}^\star$: for a Tucker format tensor $\calX^\star = [\![\calS^\star; \mU_1^\star,\dots, \mU_{N}^\star ]\!]$, $\|\mU_i^\star\|^2 = 1$ and $\|\calS^\star\|^2 = \|\calX^\star\|^2 = \overline{\sigma}_{\text{tk}}^2(\calX^\star)$ are always satisfied,  thus we set $\gamma = \overline{\sigma}_{\text{tk}}^2(\calX^\star)$ to balance them; while for a  TT format tensor $\calX^\star = [\mX_1^\star,\dots, \mX_N^\star]$, $\|L(\mX_i^\star)\|^2 = 1$ and $\|\begin{bmatrix}\mX_N^\star(1) &  \cdots &  \mX_N^\star(d_N) \end{bmatrix}\|^2 = \sigma_1^2({\calX^\star}^{\< N-1 \>})\leq \ol{\sigma}^2(\calX^\star)$, and we employ $\gamma = \ol{\sigma}_\text{tt}^2(\calX^\star)$ to balance them.
{To unify the notation, we define $\gamma = \ol{\sigma}^2(\calX^\star)$.} Ultimately, the expressions of gradients for Tucker and TT are respectively shown in \eqref{gradients of Riemannain gradient descent algorithm factorization 1toN general} and \eqref{gradients of Riemannain gradient descent algorithm factorization S general} of {Appendix} \ref{proof of regularity condition of general loss in Tucker format} and \eqref{the gradient of general loss in the TT format} of {Appendix} \ref{proof of regularity condition of general loss in TT format}.

\subsection{Distance measure} Before analyzing the convergence of RGD, we introduce an appropriate metric to quantify the distinctions between factors $\{\mZ_i\}$ and $ \{\mZ_i^\star\}$. Note that there exists rotation ambiguity among $ \{\mZ_i^\star\}$, i.e., $[\mZ_1^\star,\dots,\mZ_{\wt N}^\star] = [(\mZ_1^\star, \mR_1),\dots,(\mZ_{\wt N}^\star, \mR_{\wt N})]$ for orthonormal matrices $\{\mR_i\}_{i=1}^{\wt N}$, where the exact formulation of $(\mZ_i^\star, \wt\mR_i)$ is contingent on the specific tensor decomposition. For instance, for the Tucker format tensor $\calX^\star=[\![\calS^\star; \mU_1^\star, \dots, \mU_{N}^\star ]\!]$, we have rotation ambiguity among factors \cite{zhang2017matrix} as
$
[\![\calS^\star; \mU_1^\star, \dots, \mU_N^\star ]\!] 
=[\![ [\![\calS^\star; \mR_1^\top,\dots, \mR_N^\top  ]\!];   \mU_1^\star\mR_1, \dots, \mU_N^\star\mR_N    ]\!]
$ for any orthonormal matrices $\mR_i\in\O^{r_i^\text{tk}\times r_i^\text{tk}}, i\in[N]$. Here we have $(\mZ_i^\star, \mR_i) = \mU_i^\star\mR_i, i=1,\dots, N$ and $(\mZ_{N+1}^\star, \mR_{N+1}) =  [\![\calS^\star; \mR_1^\top,\dots, \mR_N^\top  ]\!] $.  Similarly, there also exists rotation ambiguity among factors in the TT format $\calX^\star = [{\mX}_1^\star,\dots,{\mX}_N^\star ]$ as $\Pi_{i=1}^N\mX_i^\star(s_i) = \Pi_{i=1}^N\mR_{i-1}^\top\mX_i^\star(s_i)\mR_i$ for any orthonormal matrix $\mR_i\in\O^{r_i^\text{tt}\times r_i^\text{tt}}$ (with $\mR_0 = \mR_N = 1$). Here, we have
$(\mZ_i^\star, \mR_i) = L_{\mR}({\mX}_i^\star): = \begin{bmatrix}\mR_{i-1}^\top\mX_{i}^\star(1)\mR_i\\ \vdots \\ \mR_{i-1}^\top\mX_{i}^\star(d_i)\mR_i\end{bmatrix} , i=1,\dots, N$. Thus, we propose the following measure to capture the distance between two sets of factors:
\begin{eqnarray}
\label{BALANCED NEW DISTANCE BETWEEN TWO Weight matrices unified}
\text{dist}^2(\{\mZ_i\},\{\mZ^\star_i\}) &\!\!\!\!=\!\!\!\!& \min_{\mR_i, i\in[{\wt N}]}\sum_{i=1}^{{\wt N}-1} \gamma \|\mZ_i- (\mZ_i^\star, \mR_i)\|_F^2 + \|\mZ_{\wt N}- (\mZ_{\wt N}^\star, \mR_{\wt N})\|_F^2\nonumber\\
&\!\!\!\!=\!\!\!\!&\sum_{i=1}^{{\wt N}-1} \gamma \|\mZ_i- (\mZ_i^\star, \wt\mR_i)\|_F^2 + \|\mZ_{\wt N}- (\mZ_{\wt N}^\star, \wt\mR_{\wt N})\|_F^2,
\end{eqnarray}
where the weight $\gamma$ (set to be $\overline{\sigma}^2(\calX^\star)$ in \eqref{RGD for unified}) is used to balance the two terms since the energy of
 $\{\mZ_i\}_{i \leq {\wt N}-1}$ and $\mZ_{\wt N}$  could be unbalanced.

The following result establishes a connection between $\text{dist}^2(\{\mZ_i\},\{\mZ^\star_i\})$ and $\|\calX-\calX^\star\|_F^2$. First, we respectively define $\ol\sigma(\calX)$ and $\kappa(\calX)$ as the maximum singular value and condition number of $\calX\in\R^{d_1\times \cdots \times d_N}$.
\begin{lemma}
\label{LOWER BOUND OF TWO DISTANCES main paper general}
For any two tensors $\calX$ and $\calX^\star $, we assume that  $\overline{\sigma}(\calX)\leq \frac{3\overline{\sigma}(\calX^\star)}{2}$. Then $\text{dist}^2(\{\mZ_i\},\{\mZ^\star_i\})$ and $\|\calX-\calX^\star\|_F^2$ satisfy
\begin{eqnarray}
    \label{LOWER BOUND OF TWO DISTANCES_1 main paper general}
  \zeta_1(\wt{N},\kappa(\calX^\star))\text{dist}^2(\{\mZ_i\} \leq \|\calX-\calX^\star\|_F^2    \leq \frac{9\wt{N}}{4}\text{dist}^2(\{\mZ_i\},\{\mZ^\star_i\}).
\end{eqnarray}
\end{lemma}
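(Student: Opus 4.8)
The plan is to prove the two inequalities in \eqref{LOWER BOUND OF TWO DISTANCES_1 main paper general} separately, handling each tensor format (Tucker and TT) in a unified telescoping fashion. Throughout I fix the optimal rotations $\{\wt\mR_i\}$ achieving the minimum in \eqref{BALANCED NEW DISTANCE BETWEEN TWO Weight matrices unified}, and I write $\mZ_i^\star$ for $(\mZ_i^\star,\wt\mR_i)$ to lighten notation, so that $\|\calX - \calX^\star\|_F^2$ is to be compared with $\sum_{i=1}^{\wt N-1}\gamma\|\mZ_i - \mZ_i^\star\|_F^2 + \|\mZ_{\wt N} - \mZ_{\wt N}^\star\|_F^2$.

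\textbf{Upper bound.} First I would establish $\|\calX - \calX^\star\|_F^2 \le \tfrac{9\wt N}{4}\,\text{dist}^2(\{\mZ_i\},\{\mZ_i^\star\})$. The idea is a hybrid (telescoping) decomposition: write the difference $\calX - \calX^\star$ as a sum of $\wt N$ terms, where in the $k$-th term only the $k$-th factor is switched from $\mZ_k^\star$ to $\mZ_k$ while factors $1,\dots,k-1$ are the new ones $\mZ_j$ and factors $k+1,\dots,\wt N$ are the old ones $\mZ_j^\star$. Formally, for the Tucker case this is $[\![\calS;\mU_1,\dots,\mU_N]\!] - [\![\calS^\star;\mU_1^\star,\dots,\mU_N^\star]\!] = \sum_k (\text{term switching the }k\text{-th slot})$, and analogously for TT using the left/right unfolding factorization $\calX^{\langle i\rangle} = \mX^{\le i}\mX^{\ge i+1}$. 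By the triangle inequality, $\|\calX - \calX^\star\|_F \le \sum_{k=1}^{\wt N}\|\text{(}k\text{-th hybrid term)}\|_F$. Each hybrid term factors (by orthonormality of the other factors, which preserves Frobenius norm) into $\|\mZ_k - \mZ_k^\star\|_F$ times a bound on the "surrounding" object: for $k < \wt N$ the surrounding object's spectral norm is controlled by $\ol\sigma(\calX)$ or $\ol\sigma(\calX^\star)$, and using the hypothesis $\ol\sigma(\calX) \le \tfrac32\ol\sigma(\calX^\star)$ this gives a factor at most $\tfrac32\ol\sigma(\calX^\star) = \tfrac32\sqrt\gamma$, so $\|\text{term}_k\|_F \le \tfrac32\sqrt\gamma\,\|\mZ_k - \mZ_k^\star\|_F$; for $k = \wt N$ the surrounding factors are all orthonormal and $\|\text{term}_{\wt N}\|_F = \|\mZ_{\wt N} - \mZ_{\wt N}^\star\|_F$. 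Squaring the sum and applying Cauchy--Schwarz (i.e. $(\sum_{k=1}^{\wt N} a_k)^2 \le \wt N\sum_k a_k^2$) converts $\sum_k \tfrac94\gamma\|\mZ_k - \mZ_k^\star\|_F^2 + \|\mZ_{\wt N} - \mZ_{\wt N}^\star\|_F^2$ into the claimed bound, since $\tfrac94 \cdot \wt N \cdot (\text{the balanced sum})$ dominates, giving the factor $\tfrac{9\wt N}{4}$.

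\textbf{Lower bound.} Next I would prove $\zeta_1(\wt N,\kappa(\calX^\star))\,\text{dist}^2(\{\mZ_i\},\{\mZ_i^\star\}) \le \|\calX - \calX^\star\|_F^2$ for a suitable $\zeta_1 > 0$ depending polynomially on $\wt N$ and on $\kappa(\calX^\star)$. This is the harder direction and I expect it to be the main obstacle, because it requires showing that the individual factor discrepancies cannot conspire to cancel in the product. The strategy is: (i) reduce to the case where $\text{dist}^2$ is small compared with $\underline\sigma(\calX^\star)^2$ (the small-discrepancy regime), since if $\text{dist}^2$ is already comparable to $\ol\sigma(\calX^\star)^2$ the bound is easy after noting $\|\calX - \calX^\star\|_F$ is bounded below by a constant times $\underline\sigma(\calX^\star)$ whenever the factors are not near the optimal alignment — actually this case split is itself delicate and I would instead aim for a single clean bound; (ii) for the small-discrepancy regime, use the fact that the optimal $\wt\mR_i$ make each $\mZ_i^{\star\top}\mZ_i$ (or the appropriate Procrustes-aligned quantity) symmetric positive semidefinite, a first-order optimality condition, which lets me write $\mZ_i - \mZ_i^\star$ with its "rotational part" already removed; (iii) then lower-bound $\|\calX - \calX^\star\|_F$ by projecting $\calX - \calX^\star$ onto a well-chosen subspace — e.g. for the term isolating the $k$-th factor difference, contract $\calX - \calX^\star$ against the other (orthonormal) factors of $\calX^\star$, which extracts $\mZ_k - \mZ_k^\star$ multiplied by the $k$-th unfolding of $\calX^\star$ (whose smallest nonzero singular value is $\ge\underline\sigma(\calX^\star)$), plus cross terms that are higher-order in the small discrepancy and hence absorbable. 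Summing these $k$-wise lower bounds and rebalancing with the weight $\gamma = \ol\sigma(\calX^\star)^2$ produces a bound of the form $\underline\sigma(\calX^\star)^2/\ol\sigma(\calX^\star)^2 = 1/\kappa(\calX^\star)^2$ times the balanced distance, divided by a polynomial in $\wt N$ from the number of cross terms — hence $\zeta_1(\wt N,\kappa(\calX^\star)) = \Omega\big(\tfrac{1}{\wt N^{c}\kappa(\calX^\star)^2}\big)$ for a small constant $c$. I would carry out the Tucker and TT cases in parallel, pointing out that the only format-specific ingredient is the identity expressing an unfolding of $\calX$ in terms of one factor and the Kronecker product of the others (already recorded in the Notation section), so the combinatorics of the telescoping and the singular-value bookkeeping are otherwise identical; the detailed constants I would defer to the appendix.
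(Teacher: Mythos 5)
Your upper bound is essentially the paper's argument (telescoping one factor at a time, norm bound $\frac32\ol\sigma(\calX^\star)$ on the surrounding object via the assumption $\ol\sigma(\calX)\le\frac32\ol\sigma(\calX^\star)$, then Cauchy--Schwarz over the $\wt N$ hybrid terms), and that part is fine. The lower bound, however, has a genuine gap: the lemma claims $\zeta_1\,\text{dist}^2\le\|\calX-\calX^\star\|_F^2$ for \emph{any} two tensors subject only to the spectral-norm condition, with no smallness assumption on $\text{dist}$, but your argument is perturbative --- you contract $\calX-\calX^\star$ against the starred factors and declare the cross terms ``higher-order in the small discrepancy and hence absorbable,'' which only works in a small-discrepancy regime. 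Your fallback for the large-discrepancy case (``$\|\calX-\calX^\star\|_F$ is bounded below by a constant times $\underline\sigma(\calX^\star)$ whenever the factors are not near the optimal alignment'') is exactly the kind of quantitative statement the lemma is supposed to deliver, so as written it is circular, and you yourself abandon the case split without replacing it. A secondary problem: the Procrustes optimality condition you invoke in step (ii) (each $\mZ_i^{\star\top}\mZ_i$ symmetric PSD at the optimal $\wt\mR_i$) is not available here, because each $\mR_i$ appears in two terms of the distance (the $i$-th factor term and the core term for Tucker, or two adjacent unfoldings for TT), so the minimizer is not the per-factor Procrustes solution.

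The missing idea that makes the bound global is the subspace-perturbation/Procrustes lemma the paper imports (\Cref{left ortho upper bound}, from the cited works): since each orthonormal factor $\mZ_i$, $i\le\wt N-1$, is an orthonormal basis for the column space of the corresponding matricization/unfolding of $\calX$, one has $\min_{\mR}\|\mZ_i-\mZ_i^\star\mR\|_F\le 2\|\calX-\calX^\star\|_F/\underline\sigma(\calX^\star)$ with no proximity assumption whatsoever. With that in hand, the remaining (core or last) factor difference is controlled not by absorbing higher-order terms but by an exact telescoping identity that expresses it as $\calX-\calX^\star$ plus terms each carrying one factor difference, followed by the triangle inequality and the bound just obtained; this is how the paper reaches the constants $\zeta_1$ in \eqref{LOWER BOUND OF TWO DISTANCES_1 main paper general} (see \Cref{Lower bound of two distance in the tucker tensor_1} for Tucker; the TT case is cited from prior work). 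If you want to salvage your contraction strategy, you would still need this global Procrustes-type inequality to handle configurations where the factors are far from aligned, so you may as well use it from the start.
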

Specifically, we have $ \zeta_1(\wt{N},\kappa(\calX^\star)) = \begin{cases}
\frac{1}{(9N^2+14N+1)\kappa^2_\text{tk}(\calX^\star)}, &  \text{Tucker} \\
\frac{1}{8(N+1+\sum_{i=2}^{N-1}r_i^\text{tt})\kappa_\text{tt}^2(\calX^\star)}, &  \text{TT}
\end{cases}$. The detailed proof for Tucker decomposition is provided in \Cref{Lower bound of two distance in the tucker tensor_1} of {Appendix} \ref{Technical tools used in proofs}. In addition, the result for TT decomposition directly follows \cite[Lemma 1]{qin2024guaranteed}. This relationship can guarantee linear convergence of $\|\calX-\calX^\star\|_F^2$ when  $\text{dist}^2(\{\mZ_i\},\{\mZ^\star_i\})$ converges linearly.

\subsection{Riemannian regularity condition}
\label{Riemannian regularity condition for general loss}

To study the local convergence of RGD, we rely on following the Riemannian regularity condition  for $H$ in \eqref{Unified loss function for structured tensor}.
\begin{definition} (Riemannian regularity condition)
\label{definition of regularity condition unified}
Given a target tensor $\calX^\star = [\mZ_1^\star,\dots, \mZ_{\wt N}^\star]$,
we say that the loss function $H$ in \eqref{Unified loss function for structured tensor} satisfies a Riemannian regularity condition, denoted by $\text{RRC}(a_1,a_2,a_3)$, if for all  $[\mZ_1,\dots,\mZ_{\wt N}]\in\{ [\mZ_1,\dots,\mZ_{\wt N}]:  \text{dist}^2(\{\mZ_i\},\{\mZ^\star_i\})\leq a_1  \}$,  the following inequality holds:
\begin{align}
    \label{definition of regularity condition for gene loss unified}
    &\hspace{0.4cm} \sum_{i=1}^{{\wt N}-1}\bigg\<\mZ_i- (\mZ_i^\star, \wt\mR_i), \calP_{\text{T}_{\mZ_i} \text{St}}\big(\nabla_{\mZ_i}H(\mZ_1,\dots,\mZ_{\wt N}) \big)\bigg\>+\bigg\<\mZ_{\wt N}- (\mZ_{\wt N}^\star, \wt\mR_{\wt N}),  \nabla_{\mZ_{\wt N}}H(\mZ_1,\dots,\mZ_{\wt N}) \bigg\>\nonumber\\
    &\geq a_2\text{dist}^2(\{\mZ_i\},\{\mZ^\star_i\})
    +a_3\bigg(\frac{1}{\gamma}\sum_{i=1}^{{\wt N}-1}\|\calP_{\text{T}_{\mZ_i} \text{St}}\big(\nabla_{\mZ_i}H(\mZ_1,\dots,\mZ_{\wt N}) \big)\|_F^2+\|\nabla_{\mZ_{\wt N}}H(\mZ_1,\dots,\mZ_{\wt N})\|_F^2  \bigg)
\end{align}
for the set of orthonormal matrices $\wt\mR_i, i\in[{\wt N}]$.
\end{definition}
The Riemannian regularity condition \eqref{definition of regularity condition for gene loss unified} extends previous regularity conditions used for studying nonconvex problems~\cite{chen2015solving,candes2015phase,zhu2019linearly,chi2019nonconvex,yonel2020deterministic} to our settings on the Stiefel manifold.
Roughly speaking, the condition in \eqref{definition of regularity condition for gene loss unified} characterizes a form of strong convexity along specific trajectories defined by the tensor decomposition factors, thereby ensuring a positive correlation between the errors $\{ \mZ_i- (\mZ_i^\star, \wt\mR_i) \}_{i=1}^{\wt N}$ and  negative Riemannian search directions $\{ \calP_{\text{T}_{\mZ_i} \text{St}}\big(\nabla_{\mZ_i}H(\mZ_1,\dots,\mZ_{\wt N}) \big) \}_{i=1}^{\wt N}$ with $\calP_{\text{T}_{\mZ_{\wt N}}} \text{St} = \calI$ in
the Riemannian space, i.e., negative Riemannian directions pointing towards the true factors. To establish the above Riemannian regularity condition for general cases, we assume that the loss function $h$ satisfies the Restricted Correlated Gradient (RCG) condition \cite{Han20}.
\begin{definition} (Restricted Correlated Gradient)
\label{The defi in the RCG conditoin}
Let $h$ be a real-valued function. We say $h$ satisfies $\text{RCG}(\alpha,\beta,\calC)$ condition for $\alpha, \beta>0$ and the set $\mathcal{C}$ if
\begin{eqnarray}
    \label{RCG condition}
    \<\nabla h(\calX)-\nabla h(\calX^\star), \calX - \calX^\star\>\geq\alpha\|\calX - \calX^\star\|_F^2+\beta\|\nabla h(\calX)-\nabla h(\calX^\star)\|_F^2
\end{eqnarray}
for any $\calX\in\calC$. Here $\calX^\star$ is some fixed target parameter.
\end{definition}
The RCG condition is a generalization of the strong convexity condition. When $h$ represents the mean squared error (MSE) loss, i.e., $h(\calX) = \|\calX - \calX^\star\|_F^2$ which is commonly used in the convergence analysis of tensor recovery \cite{TongTensor21,cai2022provable,luo2021low,Han20,XiaTC19}, it satisfies the RCG condition with $\alpha = \beta =1$.
Additionally, the RCG condition can also accommodate other typical functions, such as the negative log-likelihood function \cite{Han20}, particularly when the noise follows a Gaussian distribution, and other contexts such as in the tensor sensing problems which will be discussed in the next section. Also, note that the RCG condition only requires \eqref{RCG condition} to hold within a restricted set $\calC$ that includes the target tensor, such as those represented by Tucker  (including orthogonal CP) and TT decompositions.
RCG is sufficient for establishing the Riemannian regularity condition for Tucker and TT decompositions.

\begin{lemma}
Assume that $h$ satisfies $\text{RCG}(\alpha,\beta,\calC)$ condition where $\calC$ contains the subspaces associated with the Tucker and TT decompositions. When $ [\mZ_1,\dots,\mZ_{\wt N}]$ satisfies $\text{dist}^2(\{\mZ_i\},\{\mZ^\star_i\})\leq a_1$ for all $i$, in both the Tucker and TT cases (as summarized in \Cref{summary of all parameters in Riemannian regularity condition}), the constants $a_2$ and $a_3$ in the Riemannian regularity condition follow accordingly.
\begin{table}[!ht]
\renewcommand{\arraystretch}{2}
\begin{center}
\caption{Parameters in the Riemannian regularity conditions for Tucker and TT decompositions}
\label{summary of all parameters in Riemannian regularity condition}
{\begin{tabular}{|c||c|c|}
\hline
Tensor format & Tucker                                                                                  & TT                                                                                                                   \\ \hline
$a_1$         & $\frac{8\alpha\beta\underline{\sigma}_{\text{tk}}^2(\calX^\star)}{9N(N+2)(9N^2+14N+1)}$ & $\frac{\alpha\beta\underline{\sigma}^2_\text{tt}(\calX^\star)}{9(N^2-1)(N+1+\sum_{i=2}^{N-1}r_i^\text{tt})}$         \\ \hline
$a_2$         & $\frac{\alpha}{2(9N^2+14N+1)\kappa^2_\text{tk}(\calX^\star)}$                           & $\frac{\alpha}{16(N+1+\sum_{i=2}^{N-1}r_i^\text{tt})\kappa_\text{tt}^2(\calX^\star)}$ \\ \hline
$a_3$         & $\frac{\beta}{9N+4}$                                                                    & $\frac{\beta}{9N-5}$                                                                                                 \\ \hline
\end{tabular}}{}
\end{center}
\end{table}
\end{lemma}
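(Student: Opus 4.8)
The strategy is to deduce the Riemannian regularity condition for $H$ directly from the $\text{RCG}(\alpha,\beta,\calC)$ condition on $h$, by propagating the RCG inequality through the chain rule that links the factor gradients $\nabla_{\mZ_i}H$ to the ambient tensor gradient $\nabla h(\calX)$, and then using \Cref{LOWER BOUND OF TWO DISTANCES main paper general} to translate $\|\calX-\calX^\star\|_F^2$ into $\text{dist}^2(\{\mZ_i\},\{\mZ_i^\star\})$. Throughout let $\mZ_i^{\star\prime}=(\mZ_i^\star,\wt\mR_i)$ be the optimally rotated target factors attaining the minimum in \eqref{BALANCED NEW DISTANCE BETWEEN TWO Weight matrices unified}, so that $[\mZ_1^{\star\prime},\dots,\mZ_{\wt N}^{\star\prime}]=\calX^\star$ and $\sum_{i=1}^{\wt N-1}\gamma\|\mZ_i-\mZ_i^{\star\prime}\|_F^2+\|\mZ_{\wt N}-\mZ_{\wt N}^{\star\prime}\|_F^2=\text{dist}^2$. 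First I would strip off the tangent-space projections: since $\calP_{\text{T}_{\mZ_i}\text{St}}$ is an orthogonal projector, $\langle\mZ_i-\mZ_i^{\star\prime},\calP_{\text{T}_{\mZ_i}\text{St}}(\nabla_{\mZ_i}H)\rangle=\langle\mZ_i-\mZ_i^{\star\prime},\nabla_{\mZ_i}H\rangle-\langle\calP_{\text{T}_{\mZ_i}\text{St}}^{\perp}(\mZ_i-\mZ_i^{\star\prime}),\nabla_{\mZ_i}H\rangle$, and \eqref{orthogonal complement to the Tangent space on the Stiefel manifold unified} together with $\mZ_i^\top\mZ_i=\mZ_i^{\star\prime\top}\mZ_i^{\star\prime}=\mId$ gives $\calP_{\text{T}_{\mZ_i}\text{St}}^{\perp}(\mZ_i-\mZ_i^{\star\prime})=\tfrac{1}{2}\mZ_i(\mZ_i-\mZ_i^{\star\prime})^\top(\mZ_i-\mZ_i^{\star\prime})$, whose norm is at most $\tfrac{1}{2}\|\mZ_i-\mZ_i^{\star\prime}\|_F^2$. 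Thus this correction is quadratic in the per-factor error and will be absorbed once $\text{dist}^2\le a_1$ with $a_1$ small.

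Next, by the chain rule $\langle\mZ_i-\mZ_i^{\star\prime},\nabla_{\mZ_i}H\rangle=\langle\nabla h(\calX),\calD_i\rangle$, where $\calD_i$ is the tensor obtained from $\calX=[\mZ_1,\dots,\mZ_{\wt N}]$ by replacing its $i$-th factor with $\mZ_i-\mZ_i^{\star\prime}$. Telescoping $\calX-\calX^\star=[\mZ_1,\dots,\mZ_{\wt N}]-[\mZ_1^{\star\prime},\dots,\mZ_{\wt N}^{\star\prime}]$ one factor at a time shows $\sum_{i=1}^{\wt N}\calD_i=(\calX-\calX^\star)+\calE$, where $\calE$ is a finite sum of tensors each containing at least two of the differences $\mZ_j-\mZ_j^{\star\prime}$; on $\text{dist}^2\le a_1$ this yields $\|\calE\|_F\lesssim f(\wt N,\ol\sigma(\calX^\star))\,\text{dist}^2$ for an explicit $f$. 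Hence, modulo the second-order errors of the previous step, the left-hand side of the regularity condition equals $\langle\nabla h(\calX),\calX-\calX^\star\rangle$; and using that $\calX^\star$ is a stationary point of \eqref{Unified loss function for structured tensor} (so that $\calP_{\text{T}_{\mZ_i^\star}\text{St}}(\nabla_{\mZ_i^\star}H)=0$ for $i\le\wt N-1$ and $\nabla_{\mZ_{\wt N}^\star}H=0$), the same chain-rule/telescoping applied at $\calX^\star$ shows $\langle\nabla h(\calX^\star),\calX-\calX^\star\rangle$ is itself only second order, so the quantity reduces further to $\langle\nabla h(\calX)-\nabla h(\calX^\star),\calX-\calX^\star\rangle$ up to another second-order term.

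Now I would invoke the hypotheses. Since $\calX,\calX^\star\in\calC$, $\text{RCG}(\alpha,\beta,\calC)$ lower bounds $\langle\nabla h(\calX)-\nabla h(\calX^\star),\calX-\calX^\star\rangle$ by $\alpha\|\calX-\calX^\star\|_F^2+\beta\|\nabla h(\calX)-\nabla h(\calX^\star)\|_F^2$. By \Cref{LOWER BOUND OF TWO DISTANCES main paper general} (whose hypothesis $\ol\sigma(\calX)\le\tfrac{3}{2}\ol\sigma(\calX^\star)$ follows from $\text{dist}^2\le a_1$), $\|\calX-\calX^\star\|_F^2\ge\zeta_1(\wt N,\kappa(\calX^\star))\,\text{dist}^2$, which produces the $a_2\,\text{dist}^2$ term with $a_2\asymp\alpha\,\zeta_1$. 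For the gradient term, each $\nabla_{\mZ_i}H$ is a contraction of $\nabla h(\calX)$ against orthonormal factors (and, for $i\le\wt N-1$, the core $\mZ_{\wt N}$), so $\|\nabla_{\mZ_{\wt N}}H\|_F\le\|\nabla h(\calX)\|_F$ and $\|\calP_{\text{T}_{\mZ_i}\text{St}}(\nabla_{\mZ_i}H)\|_F\le\ol\sigma(\calX)\|\nabla h(\calX)\|_F$; with $\gamma=\ol\sigma^2(\calX^\star)$ and $\ol\sigma(\calX)\le\tfrac{3}{2}\ol\sigma(\calX^\star)$ this gives $\tfrac{1}{\gamma}\sum_{i=1}^{\wt N-1}\|\calP_{\text{T}_{\mZ_i}\text{St}}(\nabla_{\mZ_i}H)\|_F^2+\|\nabla_{\mZ_{\wt N}}H\|_F^2\lesssim\wt N\,\|\nabla h(\calX)\|_F^2$, so a fixed fraction of $\beta\|\nabla h(\calX)-\nabla h(\calX^\star)\|_F^2$ dominates it, producing the $a_3$ term with $a_3\asymp\beta/\wt N$.

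Finally I would absorb the second-order leftovers: split each by Young's inequality into a multiple of $\text{dist}^2$ — carrying an extra $a_1$-sized factor thanks to $\text{dist}^2\le a_1$ — and a multiple of $\|\nabla h(\calX)\|_F^2$, which is again reducible to $\|\nabla h(\calX)-\nabla h(\calX^\star)\|_F^2$; choosing $a_1$ as in \Cref{summary of all parameters in Riemannian regularity condition} makes each such term strictly smaller than the corresponding main term. All constants must be computed separately for Tucker and TT: the Tucker case exploits the absence of a tangent-space projection on the core and the identities $\|\mU_i^\star\|=1$, $\|\calS^\star\|=\ol\sigma_\text{tk}(\calX^\star)$, whereas the TT case additionally has to track the non-orthonormal right unfolding $\mX^{\ge i+1}$ when expanding $\calX-\calX^\star$, which is why $a_1$ and $a_2$ carry the combinatorial factor $N+1+\sum_{i=2}^{N-1}r_i^\text{tt}$ rather than a purely polynomial-in-$N$ one. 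The hard part will be precisely this bookkeeping: verifying that every higher-order contribution — from the projection/retraction correction, from the multilinear cross terms in $\calE$, and from $\nabla h(\calX^\star)$ — is genuinely dominated on $\{\text{dist}^2\le a_1\}$ with the right powers of $\wt N$, $\kappa(\calX^\star)$, and the TT ranks, since that is exactly what pins down the explicit, format-dependent constants in the table.
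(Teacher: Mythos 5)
Your plan follows essentially the same route as the paper's proofs in Appendices B and C: the same splitting of the cross term into $\langle\nabla h(\calX)-\nabla h(\calX^\star),\,\calX-\calX^\star+\calH\rangle$ minus the normal-component correction $T_1$ (with the identical formula $\calP^{\perp}_{\text{T}_{\mZ_i}\text{St}}(\mZ_i-\mZ_i^{\star\prime})=\tfrac12\mZ_i(\mZ_i-\mZ_i^{\star\prime})^\top(\mZ_i-\mZ_i^{\star\prime})$), the same telescoping identities to bound the second-order tensor $\calH$, the same contraction bounds on the factor gradients against orthonormal factors and the core, and the same use of the distance-equivalence lemma plus the smallness of $a_1$ to absorb the fourth-order terms. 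The only substantive difference is that the paper simply invokes $\nabla_{\calX} h(\calX^\star)=0$ where you argue the corresponding term is second order via stationarity of the constrained factorized problem; otherwise the argument and the format-dependent constants are obtained exactly as you describe.
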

The proof is given in {Appendix} \ref{proof of regularity condition of general loss in Tucker format} and {Appendix} \ref{proof of regularity condition of general loss in TT format}. {It is worth emphasizing that the initialization requirement $a_1$ exhibits only polynomial dependence on $N$, ensuring the scalability of the proposed method. Furthermore, $a_2$, which captures a trajectory-dependent notion of strong convexity, and $a_3$, which governs the gradient norm contribution, both decay at most polynomially with respect to  $N$. This guarantees that the RGD algorithm is free from exponentially deteriorating convergence rates and gradient vanishing phenomena as the problem size grows.}
Note that the above result can be extended to other tensor decompositions, where the constants $a_1$, $a_2$, and $a_3$ depend on the specific decomposition format.

\subsection{Convergence analysis}
\label{Convergence analysis for general loss}

With the $\text{RRC}(a_1,a_2,a_3)$, we now provide a unified convergence analysis for the RGD \eqref{RGD for unified}. Supposing that $\{\mZ_i^{(t)}\}$ satisfies $\text{dist}^2(\{\mZ_i^{(t+1)}\},\{\mZ^\star_i\}) \le a_1$, then
\begin{eqnarray}
\label{expansion dist t+1 unified}
&\!\!\!\!\!\!\!\!& \text{dist}^2(\{\mZ_i^{(t+1)}\},\{\mZ^\star_i\}) = \sum_{i=1}^{{\wt N}-1} \gamma \|\mZ_i^{(t+1)}- (\mZ_i^\star, \wt\mR_i^{(t+1)})\|_F^2 + \|\mZ_{\wt N}^{(t+1)}- (\mZ_{\wt N}^\star, \wt\mR_{\wt N}^{(t+1)})\|_F^2\nonumber\\
&\!\!\!\!\leq \!\!\!\!&\sum_{i=1}^{{\wt N}-1} \gamma \|\mZ_i^{(t+1)}- (\mZ_i^\star, \wt\mR_i^{(t)})\|_F^2 + \|\mZ_{\wt N}^{(t+1)}- (\mZ_{\wt N}^\star, \wt\mR_{\wt N}^{(t)})\|_F^2\nonumber\\
&\!\!\!\!\leq \!\!\!\!& \sum_{i=1}^{{\wt N}-1} \gamma \|\mZ_i^{(t)}-\frac{\mu}{\gamma}\calP_{\text{T}_{\mZ_i} \text{St}}\big(\nabla_{\mZ_i}H(\mZ_1^{(t)},\dots,\mZ_{\wt N}^{(t)}) \big)- (\mZ_i^\star, \wt\mR_i^{(t)})\|_F^2\nonumber\\
&\!\!\!\!\!\!\!\!& + \|\mZ_{\wt N}^{(t)}-\mu\nabla_{\mZ_{\wt N}}H(\mZ_1^{(t)},\dots,\mZ_{\wt N}^{(t)})- (\mZ_{\wt N}^\star, \wt\mR_{\wt N}^{(t)})\|_F^2\nonumber\\
&\!\!\!\!=\!\!\!\!& \text{dist}^2(\{\mZ_i^{(t)}\},\{\mZ^\star_i\}) - 2\mu \bigg(\sum_{i=1}^{{\wt N}-1}\bigg\<\mZ_i^{(t)}- (\mZ_i^\star, \wt\mR_i^{(t)}), \calP_{\text{T}_{\mZ_i} \text{St}}\big(\nabla_{\mZ_i}H(\mZ_1^{(t)},\dots,\mZ_{\wt N}^{(t)}) \big)\bigg\>\nonumber\\
&\!\!\!\!\!\!\!\!&+\bigg\<\mZ_{\wt N}^{(t)}- (\mZ_{\wt N}^\star, \wt\mR_{\wt N}^{(t)}),  \nabla_{\mZ_{\wt N}}H(\mZ_1^{(t)},\dots,\mZ_{\wt N}^{(t)}) \bigg\>  \bigg) + \mu^2\bigg(\frac{1}{\gamma}\sum_{i=1}^{{\wt N}-1}\|\calP_{\text{T}_{\mZ_i} \text{St}}\big(\nabla_{\mZ_i}H(\mZ_1^{(t)},\dots,\mZ_{\wt N}^{(t)}) \big)\|_F^2\nonumber\\
&\!\!\!\!\!\!\!\!&+\|\nabla_{\mZ_{\wt N}}H(\mZ_1^{(t)},\dots,\mZ_{\wt N}^{(t)})\|_F^2  \bigg)\leq (1-2a_2\mu)\text{dist}^2(\{\mZ_i^{(t)}\},\{\mZ^\star_i\}),
\end{eqnarray}
where the second inequality follows from the nonexpansiveness property described in \Cref{NONEXPANSIVENESS PROPERTY OF POLAR RETRACTION_1} of {Appendix} \ref{Technical tools used in proofs}, and the last line follows from \eqref{definition of regularity condition for gene loss unified} in \Cref{definition of regularity condition unified} and the assumption that step size $\mu\leq 2a_3$. Based on the discussion above, we obtain the following convergence guarantee.
\begin{theorem}
\label{Local Convergence of general_Theorem}
Suppose that  $H$ satisfies the Riemannian regularity condition. With  initialization $\{\mZ_i^{(0)}\}$ satisfying $\text{dist}^2(\{\mZ_i^{(0)}\},\{\mZ^\star_i\})\leq a_1$ and step size $\mu\leq2a_3$,  the iterates $\{ {\mZ_i}^{(t)} \}_{t\geq 0}$ generated by RGD will converge linearly to $\mZ_i^\star$ (up to rotation):
\begin{eqnarray}
    \label{Local Convergence of Riemannian in the TT general_Theorem_11}
    \text{dist}^2(\{\mZ_i^{(t+1)}\},\{\mZ^\star_i\})\leq(1-2a_2\mu)\text{dist}^2(\{\mZ_i^{(t)}\},\{\mZ^\star_i\}),
\end{eqnarray}
where $a_i,i=1,2,3$ are the constants in the $\text{RRC}(a_1,a_2,a_3)$ in \Cref{definition of regularity condition unified}.
\end{theorem}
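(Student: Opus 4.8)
The plan is to upgrade the one–step estimate already recorded in \eqref{expansion dist t+1 unified} into a genuine convergence statement by an induction on $t$, whose sole new content is that the iterates never leave the region $\{[\mZ_1,\dots,\mZ_{\wt N}]:\dist^2(\{\mZ_i\},\{\mZ^\star_i\})\le a_1\}$ on which $\text{RRC}(a_1,a_2,a_3)$ is guaranteed. For the base case, $\dist^2(\{\mZ_i^{(0)}\},\{\mZ^\star_i\})\le a_1$ holds by hypothesis. For the inductive step I would assume $\dist^2(\{\mZ_i^{(t)}\},\{\mZ^\star_i\})\le a_1$ and let $\{\wt\mR_i^{(t)}\}$ be the orthonormal matrices attaining the minimum in \eqref{BALANCED NEW DISTANCE BETWEEN TWO Weight matrices unified} at iterate $t$.

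Then I would run exactly the chain in \eqref{expansion dist t+1 unified}: bound $\dist^2(\{\mZ_i^{(t+1)}\},\{\mZ^\star_i\})$ from above by evaluating its defining minimum at the (generically suboptimal) rotations $\{\wt\mR_i^{(t)}\}$; for the first $\wt{N}-1$ blocks, which pass through the polar retraction in \eqref{RGD for unified}, invoke the nonexpansiveness of that retraction (\Cref{NONEXPANSIVENESS PROPERTY OF POLAR RETRACTION_1}) together with the fact that each $(\mZ_i^\star,\wt\mR_i^{(t)})$ sits on the Stiefel manifold, so the retraction can be dropped and $\mZ_i^{(t+1)}$ replaced by $\mZ_i^{(t)}-\tfrac{\mu}{\gamma}\calP_{\text{T}_{\mZ_i}\text{St}}(\nabla_{\mZ_i}H)$; the last block needs no retraction. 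Expanding the weighted sum of squares yields (i) $\dist^2(\{\mZ_i^{(t)}\},\{\mZ^\star_i\})$ — again because $\{\wt\mR_i^{(t)}\}$ are the step-$t$ minimizers — (ii) the linear term $-2\mu$ times the correlation on the left of \eqref{definition of regularity condition for gene loss unified}, and (iii) the quadratic term $+\mu^2$ times the weighted squared gradient norm on the right of \eqref{definition of regularity condition for gene loss unified}. Applying $\text{RRC}(a_1,a_2,a_3)$ at $[\mZ_1^{(t)},\dots,\mZ_{\wt N}^{(t)}]$ (legitimate by the induction hypothesis) bounds (ii) by $-2\mu a_2\dist^2(\{\mZ_i^{(t)}\},\{\mZ^\star_i\})$ minus $2\mu a_3$ times the weighted squared gradient norm, while $\mu\le 2a_3$ makes (iii) at most $2\mu a_3$ times that same quantity; the gradient-norm contributions cancel and \eqref{Local Convergence of Riemannian in the TT general_Theorem_11} drops out.

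To close the induction, note $1-2a_2\mu\le 1$ (with the values in \Cref{summary of all parameters in Riemannian regularity condition} one also has $2a_2\mu\le 4a_2a_3\le 1$, so the factor is nonnegative and the bound is non-vacuous), hence $\dist^2(\{\mZ_i^{(t+1)}\},\{\mZ^\star_i\})\le\dist^2(\{\mZ_i^{(t)}\},\{\mZ^\star_i\})\le a_1$, which is the induction hypothesis at $t+1$. Unrolling \eqref{Local Convergence of Riemannian in the TT general_Theorem_11} from $t=0$ gives the geometric rate $\dist^2(\{\mZ_i^{(t)}\},\{\mZ^\star_i\})\le(1-2a_2\mu)^{t}\dist^2(\{\mZ_i^{(0)}\},\{\mZ^\star_i\})$, i.e.\ linear convergence of the factors up to rotation; feeding this into \Cref{LOWER BOUND OF TWO DISTANCES main paper general} transfers it to linear convergence of $\|\calX^{(t)}-\calX^\star\|_F^2$.

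The hard part is not any single computation — those are exactly the ones already displayed — but the apparent circularity in the basin-of-attraction argument: the regularity condition may be used only inside the $a_1$-ball, yet keeping the iterate there seems to need the contraction, which itself uses the condition. This is resolved precisely by the induction above, since the contraction never enlarges the distance; the points that must be handled carefully are that the "working" rotations in the one-step bound are the minimizers from step $t$ rather than step $t+1$, that $(\mZ_i^\star,\wt\mR_i^{(t)})$ is a Stiefel point so nonexpansiveness genuinely applies, and that $\mu\le 2a_3$ is exactly what is needed to absorb the second-order gradient term.
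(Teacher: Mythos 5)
Your proposal is correct and follows essentially the same route as the paper: the paper's proof is precisely the one-step chain in \eqref{expansion dist t+1 unified} (suboptimal rotations from step $t$, nonexpansiveness of the polar retraction via \Cref{NONEXPANSIVENESS PROPERTY OF POLAR RETRACTION_1}, expansion, then $\text{RRC}(a_1,a_2,a_3)$ with $\mu\le 2a_3$ to cancel the quadratic gradient term), iterated from the initialization. Your only addition is making explicit the induction that keeps every iterate inside the $a_1$-ball, which the paper leaves implicit (its stated hypothesis before \eqref{expansion dist t+1 unified} even mislabels the iterate index), so this is a faithful and slightly more careful rendering of the same argument.
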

According to Table \ref{summary of all parameters in Riemannian regularity condition} of the parameters $a_i$ for the Tucker and TT decompostions,
\Cref{Local Convergence of general_Theorem} ensures that both the initial requirement and convergence rate depends only polynomially, rather than exponentially, as $N$ increases, for both Tucker and TT. It is worth noting that, beyond the Tucker (including orthogonal CP) and TT decompositions, RGD and its corresponding analysis can also be extended to the Hierarchical Tucker and tensor network decompositions. The Hierarchical Tucker \cite{grasedyck2010hierarchical, da2015optimization} and tensor network \cite{cichocki2016tensor, orus2019tensor} representations can be seen as extensions of the Tucker and TT models, respectively. However, for an $N$-order Hierarchical Tucker or tensor network decomposition, multiple equivalent representations exist, making it challenging to formulate a unified framework that encompasses all scenarios. For a specific form of Hierarchical Tucker or tensor network decomposition where most factors are orthonormal except for one, the initialization requirement and convergence rate of RGD remain polynomially dependent on the tensor order $N$, following the same analytical approach discussed earlier. Therefore, in this paper, we primarily focus on widely used tensor decompositions, including Tucker (encompassing orthogonal CP) and TT decompositions.

\section{Applications in Tensor Sensing}
\label{Typical Applications in tensor}

In this section, we will study the application of RGD to the tensor sensing problem, where the goal is to recover a low-rank tensor $\calX^\star\in\R^{d_1\times \cdots \times d_{N}}$ from limited measurements $\vy\in\R^m$ of the form
\begin{eqnarray}
    \label{Definition of tensor sensing}
    \vy = \calA(\calX^\star) =\begin{bmatrix}
          y_1 \\
          \vdots \\
          y_m
        \end{bmatrix} = \begin{bmatrix}
          \<\calA_1,\calX^\star\> \\
          \vdots \\
          \<\calA_m,\calX^\star\>
        \end{bmatrix}\in\R^m.
\end{eqnarray}
Here, $\calA(\calX^\star): \R^{d_{1}\times  \cdots \times d_{N}}\rightarrow \R^m$ represents a linear map that characterizes the measurement process, and the number of measurements $m$ is much smaller than the total number of entries in $\calX^\star$, i.e., $m\ll d_1\cdots d_N$.
This problem has appeared in many applications such as quantum state tomography \cite{lidiak2022quantum,qin2024quantum}, 3D imaging \cite{guo2011tensor}, high-order interaction pursuit \cite{hao2020sparse}, neuroimaging analysis \cite{zhou2013tensor,li2017parsimonious}, and more.

We will consider the following common loss that captures the  discrepancy between the measurements $\vy$ and the linear mapping of the estimated tensor $\calX$ as
\begin{eqnarray}
    \label{General loss for G}
     G(\calX) = \frac{1}{2m}\|\calA(\calX) - \vy\|_2^2.
\end{eqnarray}
Our goal is to show that this loss function $G$ satisfies the RCG condition for Tucker and TT format tensors. Towards that goal, we first present the Restricted Isometry Property (RIP) \cite{donoho2006compressed,candes2006robust,
candes2008introduction,recht2010guaranteed,grotheer2021iterative,Rauhut17,qin2024quantum}, which ensures the proximity of the energy $\|\calA(\calX)\|_2^2$ to $\|\calX\|_F^2$, making it a pivotal factor for our analytical pursuits.
\begin{definition} (Restricted isometry property (RIP))
\label{RIP condition fro the Tucker sensing Lemma}
A linear operator $\calA: \R^{d_{1}\times  \cdots \times d_{N}}\rightarrow \R^m$ is said to satisfy the $\ol r$-RIP if
\begin{eqnarray}
    \label{RIP condition fro the Tucker TT sensing}
    (1-\delta_{\ol r})\|\calX\|_F^2\leq \frac{1}{m}\|\mathcal{A}(\calX)\|_2^2\leq(1+\delta_{\ol r})\|\calX\|_F^2,
\end{eqnarray}
holds for any tensor decomposition $\calX$ with rank $\ol r$. Here $\delta_{\ol r} \in(0,1)$ is a positive constant.
\end{definition}

Random measurements have been proved to obey the RIP for Tucker and TT format tensors \cite{Rauhut17,qin2024quantum}.
\begin{theorem} (RIP for Tucker and TT decompositions \cite{Rauhut17,qin2024quantum})
\label{condition for the RIP TT and Tucker}
Suppose the linear map $\calA: \R^{d_{1}\times  \cdots \times d_{N}}\rightarrow \R^m$  is an $L$-subgaussian measurement ensemble.\footnote{A random variable $X$ is called $L$-subgaussian if there exists a constant $L > 0$ such that $\E e^{t X} \le e^{L^2t^2/2}
$ holds for all $t \in \R$. Typical cases include the Gaussian random variable and the Bernoulli random variable. We say that $\calA: \R^{d_1\times \cdots \times d_N} \rightarrow \R^m$  is an $L$-subgaussian measurement ensemble if all the elements of $\calA_k,k=1,\dots,m,$ are independent $L$-subgaussian random variables with mean zero and variance one~\cite{bierme2015modulus}.} Then, with probability at
least $1-\bar\epsilon$, $\calA$ satisfies the $\ol r$-RIP as in \eqref{RIP condition fro the Tucker TT sensing} for Tucker and TT decompositions given that
\begin{equation}
m \ge C \cdot \frac{1}{\delta_{\ol r}^2} \cdot \max\left\{ D\log N, \log(1/\bar\epsilon)\right \}
\label{eq:mrip Tucker TT}
\end{equation}
where for the Tucker format, we have $D = {\overline{r}^\text{tk}}^N + N\overline{d}\overline{r}^\text{tk}$ , while for the TT format, $D = N\overline{d}{\overline{r}^\text{tt}}^2$. In addition, $C$ is a universal constant depending only on $L$.
\end{theorem}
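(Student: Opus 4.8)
The plan is to establish this via the classical \emph{$\epsilon$-net plus concentration} argument for restricted isometries, handling the Tucker and TT cases in parallel and distinguishing them only through the parameter-count $D$. First I would note that $\calX\mapsto\frac1m\norm{\calA(\calX)}{2}^2$ is $2$-homogeneous, so the $\ol r$-RIP \eqref{RIP condition fro the Tucker TT sensing} is equivalent to the single uniform bound $\sup_{\calX\in\calT_{\ol r}}\bigl|\frac1m\norm{\calA(\calX)}{2}^2-1\bigr|\le\delta_{\ol r}$, where $\calT_{\ol r}$ denotes the set of unit-Frobenius-norm tensors whose multilinear (resp.\ TT) rank is bounded by $\ol r$. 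This uniform statement then reduces to two ingredients: a pointwise deviation inequality for a fixed tensor, and a covering-number bound for $\calT_{\ol r}$.

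For the pointwise bound, I would fix $\calX\in\calT_{\ol r}$ and observe that $\langle\calA_k,\calX\rangle=\langle\vec(\calA_k),\vec(\calX)\rangle$ is a weighted sum of i.i.d.\ mean-zero, variance-one, $L$-subgaussian entries, hence itself mean-zero with variance $\norm{\calX}{F}^2=1$ and $O(L)$-subgaussian. Consequently $\langle\calA_k,\calX\rangle^2-1$ is centered and subexponential with parameter $O(L^2)$, and $\frac1m\norm{\calA(\calX)}{2}^2-1$ is an average of $m$ i.i.d.\ such variables, so Bernstein's inequality yields, for $t\in(0,1]$,
\[
\mathbb{P}\Bigl(\bigl|\tfrac1m\norm{\calA(\calX)}{2}^2-1\bigr|\ge t\Bigr)\;\le\;2\exp\!\bigl(-c\,m\,t^2/L^4\bigr).
\]

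For the covering number, I would parametrize $\calT_{\ol r}$ through its canonical factors: for Tucker, by $(\calS,\mU_1,\dots,\mU_N)$ with each $\mU_i$ on the Stiefel manifold $\text{St}(d_i,r_i^\text{tk})$ and $\norm{\calS}{F}=1$; for TT, by the left-orthogonal cores $(L(\mX_1),\dots,L(\mX_N))$ with $L(\mX_i)$ orthonormal for $i<N$ and unit Frobenius norm on the last one. The crucial point is that, since every factor but one has operator norm one and there are $N$ of them, the factor-to-tensor map is Lipschitz with constant polynomial in $N$; hence an $\eta$-net on each factor set with $\eta=\epsilon/\mathrm{poly}(N)$ produces, by taking products, an $\epsilon$-net $\calN_\epsilon$ of $\calT_{\ol r}$. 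Since the covering number of a $k$-dimensional ball (and of a $k$-dimensional Stiefel manifold) is at most $(C_0/\eta)^{k}$, and the factor dimensions sum to at most $D$, this gives $\log\abs{\calN_\epsilon}\le C_1\,D\,(\log N+\log(1/\epsilon))$, with $D={\ol r^\text{tk}}^N+N\ol d\,\ol r^\text{tk}$ (core entries plus $N$ factor matrices) for Tucker and $D=N\ol d\,{\ol r^\text{tt}}^2$ (the $N$ third-order cores) for TT. Applying the pointwise bound with $t=\delta_{\ol r}/2$ and taking a union bound over $\calN_\epsilon$ for a small numerical $\epsilon$, the failure event has probability at most $2\abs{\calN_\epsilon}\exp(-c\,m\,\delta_{\ol r}^2/(4L^4))$, which is $\le\bar\epsilon$ once $m\ge C\delta_{\ol r}^{-2}\max\{D\log N,\log(1/\bar\epsilon)\}$ for $C=C(L)$; a standard approximation step---using that sums and differences of bounded-rank tensors again have controlled rank, so the near-isometry on $\calN_\epsilon$ self-improves to all of $\calT_{\ol r}$---then delivers \eqref{eq:mrip Tucker TT}.

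The main obstacle is the covering-number step, namely getting a bound that scales as $D$ times only a \emph{logarithmic} factor in $N$. This rests on uniformly controlling the Lipschitz constant of the factorization map over the canonical parametrization, so that a resolution of $\mathrm{poly}(N)^{-1}$---rather than something exponentially small in $N$---suffices for each factor; the orthonormality of all but one canonical factor is precisely what furnishes this control, and it is the source of the $\log N$ (as opposed to $N$) dependence in \eqref{eq:mrip Tucker TT}. The concentration, union-bound, and approximation steps are routine subgaussian-concentration and net machinery. Since this is essentially the subgaussian RIP for low-rank Tucker and TT tensors already proved in \cite{Rauhut17,qin2024quantum}, in the paper it would suffice to invoke those results and identify $D$ and the failure probability as stated.
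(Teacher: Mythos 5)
Your proposal is correct and matches the situation in the paper: Theorem~\ref{condition for the RIP TT and Tucker} is stated as an imported result with no proof given here beyond the citation to \cite{Rauhut17,qin2024quantum}, which is exactly the fallback you identify in your final sentence. Your sketch of the underlying argument in those references---pointwise Bernstein concentration for subexponential variables, an $\epsilon$-net of the canonical factor parametrization whose log-cardinality is $O(D(\log N+\log(1/\epsilon)))$ thanks to the $O(N)$ Lipschitz constant of the factorization map, a union bound, and the standard self-improvement step---is the standard and correct route, so no gap to report.
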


\paragraph{RCG condition for tensor sensing}
Based on the RIP, the following result establishes the RCG condition for the loss function in \eqref{General loss for G}.
\begin{theorem}
Suppose $\calA$ satisfies the $4\ol r$-RIP. Then we obtain the RCG condition for the loss function $G$. That is,
\begin{eqnarray}
    \label{RCG conclusion for Tu and TT}
    \<\nabla G(\calX)  - \nabla G(\calX^\star), \calX - \calX^\star \> \geq \frac{1 - \delta_{2\ol r}}{2}\|\calX - \calX^\star\|_F^2 + \frac{1 - \delta_{2\ol r}}{2(1 +\delta_{4\ol r})^2}\|\nabla G(\calX) - \nabla G(\calX^\star) \|_{F}^2
\end{eqnarray}
holds for any $\calX \in \calC$ where
 $\calC$ contains all Tucker/TT format tensors with rank $\ol r$, and
$\delta_{2\overline{r}}\leq \delta_{4\overline{r}}$.
\end{theorem}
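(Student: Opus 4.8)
The plan is to reduce everything to how the operator $\tfrac1m\calA^*\calA$ acts on the low-rank error tensor $\calZ:=\calX-\calX^\star$, exploiting the RIP twice — once as a lower bound on the correlation term and once as an upper bound on the gradient norm. First I would compute $\nabla G(\calX)=\tfrac1m\calA^*\bigl(\calA(\calX)-\vy\bigr)$, with $\calA^*$ the adjoint of $\calA$, so that
\[
\nabla G(\calX)-\nabla G(\calX^\star)=\tfrac1m\calA^*\calA(\calZ),\qquad \calZ:=\calX-\calX^\star
\]
(indeed $\nabla G(\calX^\star)=0$ here since $\vy=\calA(\calX^\star)$). The structural fact I would isolate is that, whenever $\calX,\calX^\star\in\calC$ have Tucker (resp.\ TT) rank $\ol r$, the difference $\calZ$ has rank at most $2\ol r$: for Tucker, the mode-$i$ column space of $\calZ$ is contained in the sum of those of $\calX$ and $\calX^\star$, so every multilinear rank at most doubles; for TT, each unfolding $\calZ^{\langle i\rangle}=\calX^{\langle i\rangle}-(\calX^\star)^{\langle i\rangle}$ has rank at most $2r_i^{\mathrm{tt}}$. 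Hence $\calZ$ belongs to the class of rank-$2\ol r$ tensors on which the $4\ol r$-RIP — and a fortiori the $2\ol r$-RIP with $\delta_{2\ol r}\le\delta_{4\ol r}$ — holds.

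The lower bound on the correlation term is then immediate: pairing the gradient difference with $\calZ$ and using $\langle\calA^*\calA(\calZ),\calZ\rangle=\|\calA(\calZ)\|_2^2$,
\[
\bigl\langle \nabla G(\calX)-\nabla G(\calX^\star),\,\calX-\calX^\star\bigr\rangle=\tfrac1m\|\calA(\calZ)\|_2^2\ \ge\ (1-\delta_{2\ol r})\|\calZ\|_F^2 ,
\]
which is just the lower RIP inequality applied to the rank-$\le 2\ol r$ tensor $\calZ$.

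The heart of the proof is the matching upper bound $\tfrac1m\|\calA^*\calA(\calZ)\|_F\le(1+\delta_{4\ol r})\|\calZ\|_F$. I would obtain it from the polarization form of the RIP: for any tensor $\calW$ of rank $\le 2\ol r$ the sums $\calZ\pm\calW$ have rank $\le 4\ol r$, so writing $\tfrac1m\langle\calA(\calZ),\calA(\calW)\rangle=\tfrac1{4m}\bigl(\|\calA(\calZ+\calW)\|_2^2-\|\calA(\calZ-\calW)\|_2^2\bigr)$ and invoking the $4\ol r$-RIP on each square gives $\tfrac1m\langle\calA(\calZ),\calA(\calW)\rangle\le\langle\calZ,\calW\rangle+\delta_{4\ol r}\|\calZ\|_F\|\calW\|_F\le(1+\delta_{4\ol r})\|\calZ\|_F\|\calW\|_F$. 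Since $\tfrac1m\|\calA^*\calA(\calZ)\|_F=\sup_{\|\calW\|_F\le1}\tfrac1m\langle\calA(\calW),\calA(\calZ)\rangle$, it then remains to argue that the maximizing $\calW$ may be taken inside the fixed low-dimensional subspace that contains $\calZ$ (whose members all have rank $\le 2\ol r$), controlling the component of $\tfrac1m\calA^*\calA(\calZ)$ transverse to that subspace. This last point — ruling out that $\tfrac1m\calA^*\calA$ inflates the Frobenius norm of $\calZ$ through high-rank directions — is the step I expect to be the main obstacle, and it is precisely where the rank has to be doubled a second time, which is why the theorem assumes $4\ol r$-RIP rather than merely $2\ol r$-RIP.

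Finally I would combine the two estimates. Writing $a=\|\calX-\calX^\star\|_F$ and $b=\|\nabla G(\calX)-\nabla G(\calX^\star)\|_F$, the previous two steps say $\bigl\langle\nabla G(\calX)-\nabla G(\calX^\star),\calX-\calX^\star\bigr\rangle\ge(1-\delta_{2\ol r})a^2$ and $b^2\le(1+\delta_{4\ol r})^2a^2$, so
\[
\frac{1-\delta_{2\ol r}}{2}a^2+\frac{1-\delta_{2\ol r}}{2(1+\delta_{4\ol r})^2}b^2\ \le\ \frac{1-\delta_{2\ol r}}{2}a^2+\frac{1-\delta_{2\ol r}}{2}a^2=(1-\delta_{2\ol r})a^2\ \le\ \bigl\langle\nabla G(\calX)-\nabla G(\calX^\star),\calX-\calX^\star\bigr\rangle ,
\]
which is exactly the asserted RCG inequality, so $G$ satisfies $\mathrm{RCG}(\alpha,\beta,\calC)$ with $\alpha=\tfrac{1-\delta_{2\ol r}}2$ and $\beta=\tfrac{1-\delta_{2\ol r}}{2(1+\delta_{4\ol r})^2}$. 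The membership $\calX\in\calC$ is used only through the rank-$\le2\ol r$ bound on $\calZ$, so the argument is uniform over the Tucker (including orthogonal CP) and TT formats.
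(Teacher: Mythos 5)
Your route coincides with the paper's own proof up to the point you flag: the identity $\nabla G(\calX)-\nabla G(\calX^\star)=\frac{1}{m}\calA^*\calA(\calX-\calX^\star)$, the observation that $\calZ=\calX-\calX^\star$ has rank at most $2\ol r$ so that the lower RIP bound gives $\frac{1}{m}\|\calA(\calZ)\|_2^2\geq(1-\delta_{2\ol r})\|\calZ\|_F^2$, the polarization estimate $\frac{1}{m}\<\calA(\calZ),\calA(\calW)\>\leq\<\calZ,\calW\>+\delta_{4\ol r}\|\calZ\|_F\|\calW\|_F$ for rank-$\leq 2\ol r$ test tensors $\calW$ (this is exactly \Cref{RIP of tensor sensing another_3}), and the final algebraic splitting are all what appears in {Appendix}~\ref{Proof of RCG in the tensor sensing}. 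The only structural difference is that the paper packages the duality step through the restricted Frobenius norm $\|\calB\|_{F,2\ol r}$ of \eqref{Restricted F norm for Tu and TT}, arriving at $(1+\delta_{4\ol r})\|\calX-\calX^\star\|_F\geq\|\nabla G(\calX)-\nabla G(\calX^\star)\|_{F,2\ol r}$, which is precisely the supremum over low-rank unit-norm $\calW$ that your polarization argument controls.

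The step you single out as ``the main obstacle''---upgrading that restricted supremum to the full Frobenius norm of the gradient difference---is a genuine gap, and it cannot be closed in the way you hope. The tensor $\frac{1}{m}\calA^*\calA(\calZ)=\frac{1}{m}\sum_k\<\calA_k,\calZ\>\calA_k$ is dense and generically full-rank; for subgaussian measurements with $m\ll d_1\cdots d_N$ a direct second-moment computation gives $\E\|\frac{1}{m}\calA^*\calA(\calZ)\|_F^2\approx(1+\frac{d_1\cdots d_N}{m})\|\calZ\|_F^2$, so the transverse, high-rank component genuinely dominates and no argument can confine the maximizing $\calW$ to the low-dimensional subspace containing $\calZ$. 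In other words, the inequality $\|\nabla G(\calX)-\nabla G(\calX^\star)\|_F\leq(1+\delta_{4\ol r})\|\calX-\calX^\star\|_F$ is false in the undersampled regime; what is provable (and what you have proved) is its restricted-norm counterpart. Be aware that the paper's own derivation \eqref{lower bound of F norm for Tu and TT} silently identifies $\|\nabla G(\calX)-\nabla G(\calX^\star)\|_{F,2\ol r}$ with $\|\nabla G(\calX)-\nabla G(\calX^\star)\|_{F}$ in its last line, so it shares exactly this issue; the statement that is actually supported is \eqref{RCG conclusion for Tu and TT} with $\|\cdot\|_{F,2\ol r}^2$ in place of $\|\cdot\|_F^2$, which is also the form the downstream regularity-condition arguments can be made to consume, since there the gradient is only ever contracted against low-rank directions built from the factors. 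Your lower bound, your rank-doubling observations (and hence the role of the $4\ol r$-RIP), and your final combination are all correct; the one lifting step you left open is the one that fails.
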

The proof is given in {Appendix}~\ref{Proof of RCG in the tensor sensing}. Based on the analysis presented in \Cref{LocalConvergenceof General loss}  along with $\alpha = \frac{1 - \delta_{2\ol r}}{2}$ and $\beta = \frac{1 - \delta_{2\ol r}}{2(1 +\delta_{4\ol r})^2}$ of the RCG condition in \eqref{RCG conclusion for Tu and TT}, we can infer that the Riemannian regularity condition and the linear convergence rate of RGD can be achieved when $\calA$ fulfills the $4\ol r$-RIP condition. Our results demonstrate that when the number of measurements scales linearly with the degrees of freedom of a tensor decomposition, RGD can achieve a linear convergence rate with appropriate initialization. Compared with the theoretical result in our previous work \cite{qin2024guaranteed}, the  $(N+3)\ol r^\text{tt}$-RIP requirement has been relaxed to the $4\ol r^\text{tt}$-RIP.

\paragraph{Spectral initialization} To obtain a favorable initialization, we apply the spectral initialization, which has been widely utilized in tensor sensing \cite{Han20,ZhangISLET20,TongTensor21}. Specifically,  by one SVD-based method, we have
\begin{eqnarray}
    \label{spectral initialization in Tucker/TT tensor}
    \calX^{(0)} = [\mZ_1,\dots,\mZ_{\wt N}]={\rm SVD}\bigg(\frac{1}{m}\sum_{k=1}^m y_k\mathcal{A}_k\bigg).
\end{eqnarray}
Here, SVD refers to  high-order SVD (HOSVD) \cite{de2000multilinear} and TT SVD (TT-SVD) \cite{Oseledets11} for the Tucker and TT decompositions, respectively.

According to {Appendix} \ref{Proof of Tucker initialization} and \cite[Theorem 3]{qin2024guaranteed}, when $\calA$ satisfies the $3\overline{r}$-RIP condition, the spectral initialization satisfies the following property:
\begin{eqnarray}
    \label{TENSOR SENSING SPECTRAL INITIALIZATION1}
    \|\calX^{(0)}-\calX^\star\|_F\leq \delta_{3\ol r}(1+\sqrt{N})\|\calX^\star\|_F.
\end{eqnarray}
This indicates that if $\delta_{3\ol r}$ is sufficiently small, we can guarantee an arbitrarily accurate initialization.  In addition, we observe that the additional coefficient $O(\sqrt{N})$ arises because HOSVD and TT-SVD are only quasi-optimal approximations of Tucker and TT decompositions. However, if an optimal method for these decompositions were to be designed, this requirement could potentially be eliminated.

\paragraph{Tensor factorization and completion} $(i)$ The tensor factorization problem represents a special case of tensor sensing, wherein the operator $\calA$ corresponds to the identity operator. Clearly, when the loss function is $\frac{1}{2}\|\calX - \calX^\star\|_F^2$, the RCG condition is met with $\alpha = \beta = \frac{1}{2}$. $(ii)$ Another special case is tensor completion, where the goal is to reconstruct the entire tensor $\calX^\star$ from a subset of its entries.
Let $\Omega$ denote the indices of the  $m$ observed entries, and let $\calP_{\Omega}$ represent the corresponding projection operator that preserves the observed entries and sets the rest as zero. 
We define $J(\calX) = \frac{1}{2}\|\calP_{\Omega}(\calX - \calX^\star)\|_F^2$ and then obtain $\<\nabla J(\calX) - \nabla J(\calX^\star), \calX - \calX^\star  \> = \|\calP_{\Omega}(\calX - \calX^\star)\|_F^2$. In this case, the loss function $J$ satisfies the RCG condition for incoherent tensors \cite{TongTensor21,cai2022provable}. Specifically, based on the analysis in \cite[Lemma 5]{cai2022provable}, if $m\geq O(\sqrt{d_1\cdots d_N})$ sampled are taken uniformly with replacement,  and both $\calX,\calX^\star$ follow the incoherence property\footnote{
To enforce the incoherence condition of $\calX$ during the iteration, we can incorporate the truncation strategy  \cite[Algorithm 3]{cai2022provable} into the RGD \eqref{RGD for unified}.
However, we find that this truncation step is not necessary and results in only negligible difference in performance in practice, so we adopt the RGD \eqref{RGD for unified} in the experiments in the next section.
}, then we have $\<\calP_\Omega(\calX - \calX^\star), \calX - \calX^\star \>\geq \Omega(\frac{m}{d_1\cdots d_N})\|\calX - \calX^\star\|_F^2$ and $\|\calX - \calX^\star\|_F^2\geq \Omega(\frac{(d_1\cdots d_N)^2}{m^2})\|\calP_\Omega(\calX - \calX^\star)\|_F^2$.
Under these conditions, we can obtain $\<\nabla J(\calX) - \nabla J(\calX^\star), \calX - \calX^\star  \> = \<\calP_\Omega(\calX - \calX^\star), \calX - \calX^\star \>\geq \Omega(\frac{m}{d_1\cdots d_N})\|\calX - \calX^\star\|_F^2\geq \Omega(\frac{m}{2d_1\cdots d_N})\|\calX - \calX^\star\|_F^2 + \Omega(\frac{d_1\cdots d_N}{2m})\|\calP_\Omega(\calX - \calX^\star)\|_F^2$.

\section{Numerical Experiments}
\label{Numerical experiments}

In this section, we conduct  numerical experiments to evaluate the performance of the RGD algorithm.
We set the step size $\mu = 0.5$.
We generate the ground truth tensor $\calX^\star\in\R^{d_1\times\cdots \times d_N}$ in the TT format with a rank of $(r_1^\text{tt},\dots, r_{N-1}^\text{tt})$ by truncating a random Gaussian tensor using the TT-SVD. Similarly, we generate the ground truth tensor in the Tucker format with a rank of $(r_1^\text{tk},\dots,r_N^\text{tk})$ by truncating a random Gaussian tensor using the HOSVD. Then we normalize $\calX^\star$ to unit Frobenius norm, i.e., $\|\calX^\star\|_F=1$.
To simplify the selection of parameters, we let $d=d_1=\cdots=d_N$, $r^\text{tt}=r_1^\text{tt}=\cdots = r_{N-1}^\text{tt}$, and $r^\text{tk} = r_1^\text{tk} = \cdots = r_N^\text{tk}$. We conduct 20 Monte Carlo trials and take the average over the 20 trials.

\begin{figure}[htbp]
\centering
\subfigure[]{
\begin{minipage}[t]{0.31\textwidth}
\centering
\includegraphics[width=5.5cm]{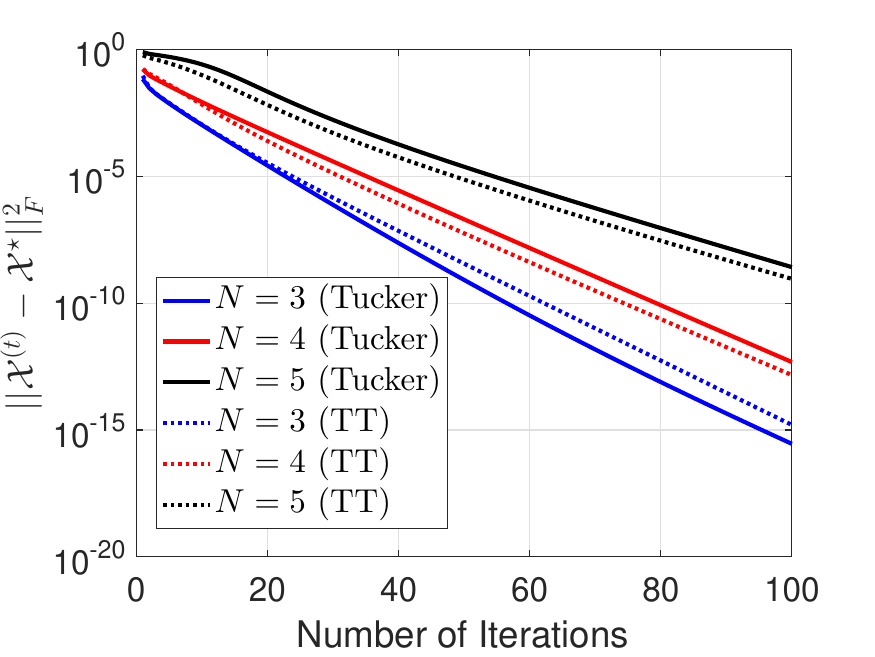}
\end{minipage}
\label{TT_sensing_N}
}
\subfigure[]{
\begin{minipage}[t]{0.31\textwidth}
\centering
\includegraphics[width=5.5cm]{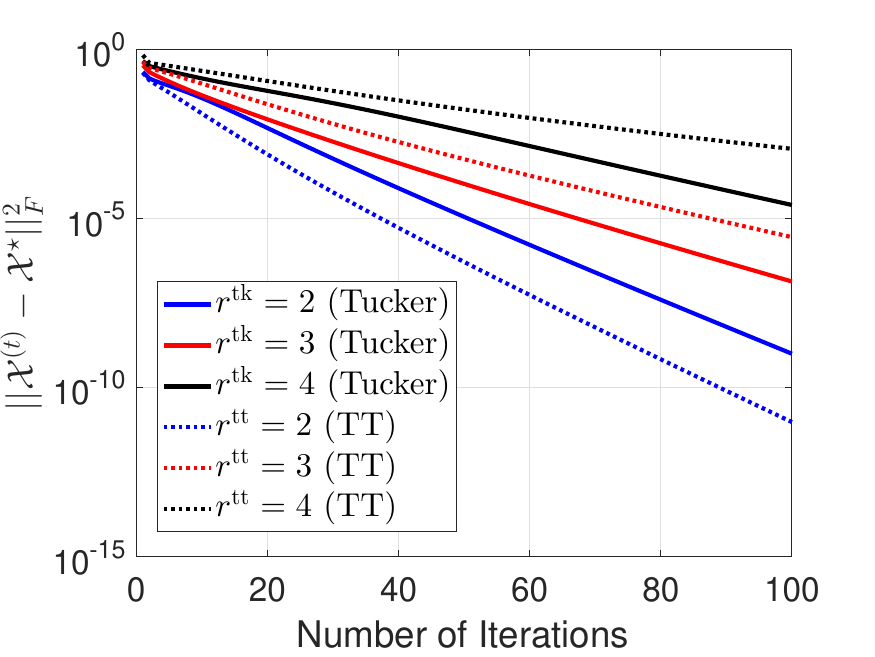}
\end{minipage}
\label{TT_sensing_r}
}
\subfigure[]{
\begin{minipage}[t]{0.31\textwidth}
\centering
\includegraphics[width=5.5cm]{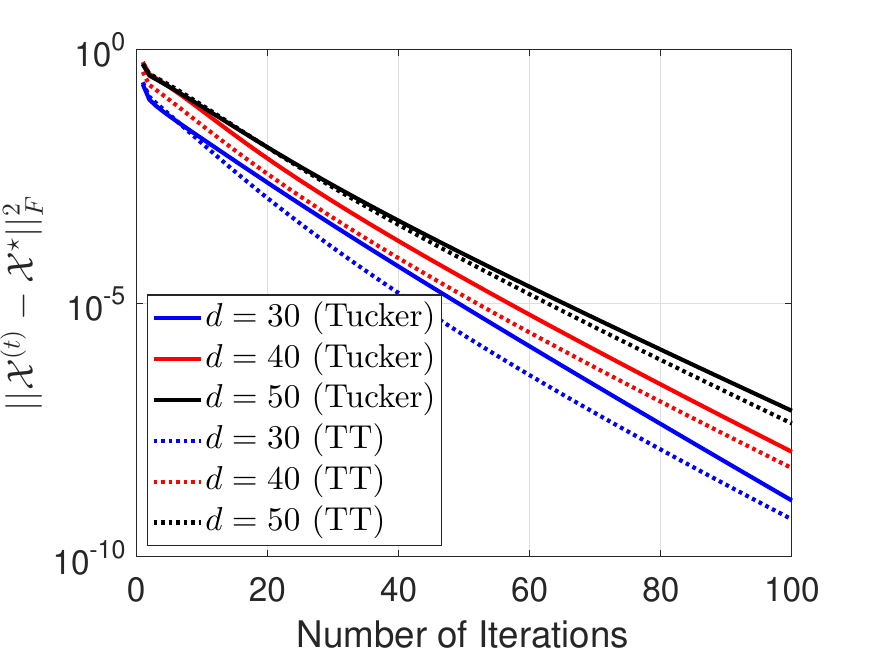}
\end{minipage}
\label{TT_sensing_d}
}
\caption{Performance comparison of the RGD, in the tensor sensing, (a) for different $N$ with $d = 10$, $r^\text{tk} = r^\text{tt} = 2$ and $m = 1000$, (b) for different $r^\text{tk}$ and $r^\text{tt}$ with $d = 50$, $N = 3$ and $m = 3000$, (c) for different $d$ with $N = 3$, $r^\text{tk} = r^\text{tt} = 2$ and $m = 1500$.}
\end{figure}

\paragraph{Tensor sensing} In the first experiment, we test the performance of RGD in the tensor sensing. First, we obtain measurements $y_i,i=1,\dots,m$ in \eqref{Definition of tensor sensing} from measurement operator $\calA_i$ which is a random tensor with independent entries generated from the normal distribution. Based on Figures~\ref{TT_sensing_N}-\ref{TT_sensing_d}, we can observe that the proposed RGD algorithm exhibits a linear convergence rate when the number of measurement operators $m$ is appropriately chosen. Furthermore, for a fixed $m$, as $N$, $d$, $r^\text{tt}/r^\text{tk}$ (indicating a larger RIP constant) increase, the convergence rate of the RGD algorithm diminishes.

\paragraph{Tensor completion} In the second experiment,  we consider the problem of tensor completion, with the goal of reconstructing the entire tensor $\calX^\star$ based on a subset of its entries.  Specifically, let $\Omega$ denote the indices of $m$ observed entries and let $\calP_{\Omega}$ denote the corresponding measurement operator that produces the observed measurements. Thus, we attempt to recover the underlying tensor by solving the following constrained factorized optimization problem
\begin{eqnarray}
    \label{Unified tensor completion loss function}
    \begin{split}
    \min_{\mZ_i, i\in[\wt N]} J(\mZ_1,\dots,\mZ_{\wt N}) & = \frac{1}{2}\|\calP_{\Omega}([\mZ_1,\dots,\mZ_{\wt N}] - [\mZ_1^\star,\dots,\mZ_{\wt N}^\star]) \|_F^2,\\
    \st \ \mZ_i^\top\mZ_i&=\mId, \  i \in[{\wt N}-1].
    \end{split}
\end{eqnarray}
We address this problem using the RGD algorithm. Furthermore, for an improved initialization, we utilize the sequential second-order moment method as proposed in \cite{cai2022provable}. From Figures~\ref{TT_completion_N}-\ref{TT_completion_d}, we observe that the RGD algorithm continues to exhibit a linear convergence rate. However, the convergence speed of the RGD algorithm decays as $N$, $d$, and $r^\text{tt}$ or $r^\text{tk}$ increase. Moreover, as demonstrated in \cite{cai2022provable}, the required number of samples $m$ grows exponentially with respect to $N$.

\begin{figure}[htbp]
\centering
\subfigure[]{
\begin{minipage}[t]{0.31\textwidth}
\centering
\includegraphics[width=5.5cm]{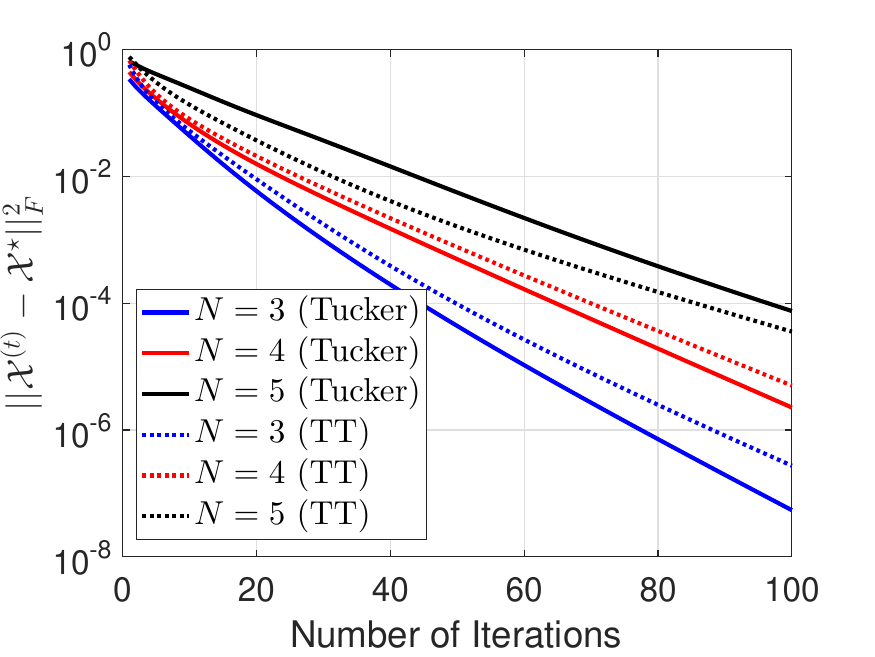}
\end{minipage}
\label{TT_completion_N}
}
\subfigure[]{
\begin{minipage}[t]{0.31\textwidth}
\centering
\includegraphics[width=5.5cm]{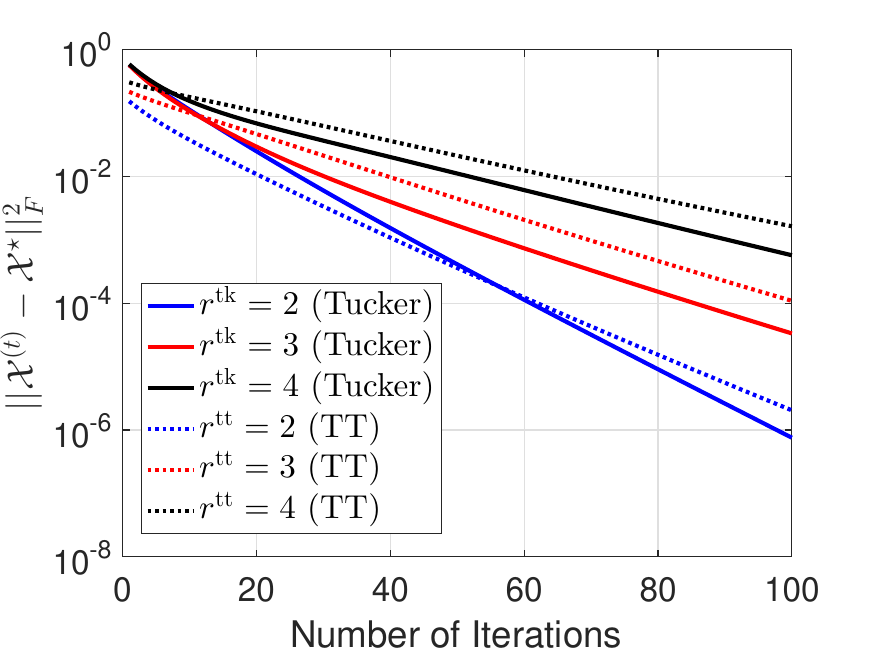}
\end{minipage}
\label{TT_completion_r}
}
\subfigure[]{
\begin{minipage}[t]{0.31\textwidth}
\centering
\includegraphics[width=5.5cm]{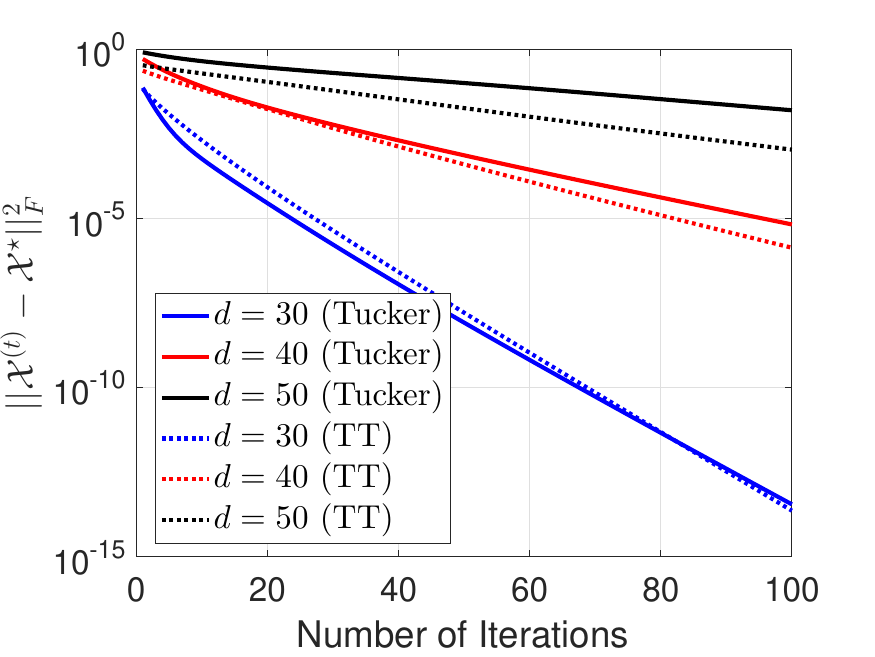}
\end{minipage}
\label{TT_completion_d}
}
\caption{Performance comparison of the RGD, in the tensor completion, (a) for different $N$ with $d = 10$, $r^\text{tk} = r^\text{tt} = 2$, $m = 500$ ($N=3$), $m=4000$ ($N=4$) and $m=30000$ ($N=5$), (b) for different $r^\text{tk}$ and $r^\text{tt}$ with $d = 50$, $N = 3$ and $m = 35000$, (c) for different $d$ with $N = 3$, $r^\text{tk} = r^\text{tt}= 2$ and $m = 20000$.}
\end{figure}

\section{Conclusion}
\label{conclusion}
This paper presents a comprehensive analysis of the convergence of the RGD algorithm on the Stiefel manifold for structured tensor recovery. We consider a class of functions and provide Riemannian regularity conditions. Based on these conditions, we further analyze the convergence rate of the RGD algorithm. Unlike previous works, our theoretical result offers a deterministic analysis for general orders. This demonstrates how the convergence speed and initial requirements change polynomially with increasing order. Additionally, we extend our analysis to tensor sensing, along with tensor factorization and tensor completion. Our experimental results confirm the effectiveness of our method.


\section{Acknowledgments}
\label{sec: ack}

We acknowledge funding support from NSF Grants No. CCF-2106834, CCF-2241298
and ECCS-2409701. We thank the Ohio Supercomputer Center for providing the computational resources needed in carrying out this work.

\appendices

\section{Technical tools used in the proofs}
\label{Technical tools used in proofs}

\begin{lemma}(\cite[Lemma 1]{LiSIAM21})
\label{NONEXPANSIVENESS PROPERTY OF POLAR RETRACTION_1}
Let $\mX\in\text{St}(m,n)$ and ${\bm \xi}\in\text{T}_{\mX} \text{St}$ be given. Consider the point $\mX^+=\mX+{\bm \xi}$. Then, the polar decomposition-based retraction satisfies $\text{Retr}_{\mX}(\mX^+)=\mX^+({\mX^+}^\top\mX^+)^{-\frac{1}{2}}$ and
\begin{eqnarray}
    \label{NONEXPANSIVENESS PROPERTY OF POLAR RETRACTION_2}
    \|\text{Retr}_{\mX}(\mX^+)-\overline\mX\|_F\leq\|\mX^+-\overline\mX\|_F=\|\mX+{\bm \xi}-\overline\mX\|_F, \  \forall\overline\mX\in\text{St}(m,n).
\end{eqnarray}
\end{lemma}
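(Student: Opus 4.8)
\textbf{Proof proposal for Lemma~\ref{NONEXPANSIVENESS PROPERTY OF POLAR RETRACTION_1}.}

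The plan is to establish the non-expansiveness of the polar-decomposition retraction by reducing the inequality to a statement about the closest orthonormal matrix under the Frobenius norm. First I would recall the well-known variational characterization of the polar factor: for any matrix $\mM\in\R^{m\times n}$ of full column rank, the orthogonal polar factor $\mM(\mM^\top\mM)^{-1/2}$ is the unique minimizer of $\mG\mapsto\|\mG-\mM\|_F$ over $\mG\in\text{St}(m,n)$. This is a classical fact (it follows from writing $\mM=\mU\mSigma\mV^\top$ via the thin SVD and observing $\|\mG-\mM\|_F^2 = \|\mG\|_F^2 + \|\mM\|_F^2 - 2\trace(\mG^\top\mM)$, so the problem reduces to maximizing $\trace(\mG^\top\mU\mSigma\mV^\top)=\trace((\mV^\top\mG^\top\mU)\mSigma)$, which by von Neumann's trace inequality is maximized when $\mV^\top\mG^\top\mU=\mId$, i.e.\ $\mG=\mU\mV^\top$, precisely the polar factor). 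Applying this with $\mM = \mX^+ = \mX+\bm\xi$, which has full column rank whenever $\bm\xi$ is a tangent vector of sufficiently small norm (in fact for any $\bm\xi\in\text{T}_\mX\text{St}$ one has ${\mX^+}^\top\mX^+ = \mId_n + \bm\xi^\top\bm\xi\succeq\mId_n$ because $\mX^\top\bm\xi+\bm\xi^\top\mX=\bm 0$, so $\mX^+$ always has full column rank), we get that $\text{Retr}_\mX(\mX^+)=\mX^+({\mX^+}^\top\mX^+)^{-1/2}$ is the Frobenius-closest point of $\text{St}(m,n)$ to $\mX^+$.

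Given this, the inequality is immediate: for any $\ol\mX\in\text{St}(m,n)$, since $\text{Retr}_\mX(\mX^+)$ minimizes the distance to $\mX^+$ over the whole Stiefel manifold and $\ol\mX$ is one competitor in that manifold, we have
\[
\|\text{Retr}_\mX(\mX^+)-\mX^+\|_F \;\le\; \|\ol\mX-\mX^+\|_F.
\]
To convert this into a bound on $\|\text{Retr}_\mX(\mX^+)-\ol\mX\|_F$ one uses the key geometric observation that $\mX^+-\text{Retr}_\mX(\mX^+)$ is orthogonal (in the trace inner product) to the tangent space $\text{T}_{\text{Retr}_\mX(\mX^+)}\text{St}$, hence in particular to the displacement $\ol\mX-\text{Retr}_\mX(\mX^+)$ up to a second-order term; more cleanly, one invokes the Pythagorean-type relation that holds for the metric projection onto a set for which the projection is the nearest point — writing $\mY=\text{Retr}_\mX(\mX^+)$, one shows $\|\ol\mX-\mX^+\|_F^2 \ge \|\ol\mX-\mY\|_F^2 + \|\mY-\mX^+\|_F^2$ by expanding and using that $\trace\big((\ol\mX-\mY)^\top(\mY-\mX^+)\big)\ge 0$, which in turn follows from the optimality of $\mY$ together with the fact that for orthonormal $\ol\mX,\mY$ one has $\|\ol\mX-\mY\|_F^2 = 2n-2\trace(\mY^\top\ol\mX)$ and an analogous identity, so the cross term has a sign. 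Combining, $\|\mY-\ol\mX\|_F^2 \le \|\mX^+-\ol\mX\|_F^2 - \|\mY-\mX^+\|_F^2 \le \|\mX^+-\ol\mX\|_F^2$, which is exactly \eqref{NONEXPANSIVENESS PROPERTY OF POLAR RETRACTION_2}.

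The main obstacle I anticipate is making the cross-term inequality $\trace\big((\ol\mX-\mY)^\top(\mY-\mX^+)\big)\ge 0$ fully rigorous: naively the metric-projection argument needs the target set to be convex, which $\text{St}(m,n)$ is not, so one cannot simply quote the standard obtuse-angle property of Euclidean projections. The clean way around this is to avoid the Pythagorean route entirely and instead argue directly via the singular values, following the approach in \cite{LiSIAM21}: write $\mX^+ = \wt\mU\wt\mSigma\wt\mV^\top$, so $\mY=\wt\mU\wt\mV^\top$, and note $\wt\mSigma\succeq\mId_n$ since ${\mX^+}^\top\mX^+=\mId_n+\bm\xi^\top\bm\xi$; then $\|\mX^+-\ol\mX\|_F^2 - \|\mY-\ol\mX\|_F^2 = \|\wt\mSigma\|_F^2 - n - 2\trace(\ol\mX^\top\wt\mU(\wt\mSigma-\mId_n)\wt\mV^\top) = \sum_i(\sigma_i^2-1) - 2\sum_i(\sigma_i-1)[\wt\mV^\top\ol\mX^\top\wt\mU]_{ii}$, and since $|[\wt\mV^\top\ol\mX^\top\wt\mU]_{ii}|\le 1$ (product of orthonormal-columned matrices has operator norm $\le 1$) while $\sigma_i\ge 1$, each term is bounded below by $(\sigma_i^2-1)-2(\sigma_i-1) = (\sigma_i-1)^2\ge 0$. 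This gives the result without any convexity appeal and is the step deserving the most care in a full write-up.
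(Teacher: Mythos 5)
Your final argument is correct: the paper does not prove this lemma itself but cites \cite[Lemma 1]{LiSIAM21}, whose proof is precisely the singular-value computation in your last paragraph — using ${\mX^+}^\top\mX^+=\mId_n+\bm\xi^\top\bm\xi\succeq\mId_n$ so that all $\sigma_i\geq 1$, and bounding the diagonal entries of $\wt\mV^\top\ol\mX^\top\wt\mU$ by its operator norm to get $\|\mX^+-\ol\mX\|_F^2-\|\text{Retr}_{\mX}(\mX^+)-\ol\mX\|_F^2\geq\sum_i(\sigma_i-1)^2\geq 0$. You were also right to discard the Pythagorean/metric-projection route, which indeed does not apply to the nonconvex Stiefel manifold; the direct SVD estimate you substitute is the standard proof and fully settles the claim.
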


\begin{lemma}
\label{LGNTuckerTF1_5_1}
Let $\calX=[\![\calS; \mU_1, \dots, \mU_N ]\!]\in\R^{d_1\times\cdots\times d_N}$ be a rank-$(r_1^\text{tk},\dots, r_{N}^\text{tk})$ non-orthogonal Tucker format, where $\mU_i$ may not be orthonormal. We have
\begin{eqnarray}
    \label{Norm expansion of tucker tensor_2}
    &&\|\calX\|_F\leq \Pi_{i=1}^N\|\mU_i\|\cdot \|\calS\|_F,\\
    \label{Norm expansion of tucker tensor_3}
    &&\|\calX\|_F\leq \|\mU_1\|\cdots\|\mU_{i-1}\|\cdot \|\mU_{i+1}\|\cdots\|\mU_N\|\cdot\|\calS\|\cdot\|\mU_i\|_F, \ \ i\in[N].
\end{eqnarray}
\end{lemma}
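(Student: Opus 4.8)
The plan is to reduce both inequalities to the matricization (unfolding) of the Tucker format, where the tensor-matrix products become ordinary matrix products, and then apply submultiplicativity of the spectral norm together with the standard inequality $\|\mtx{A}\mtx{B}\|_F \le \|\mtx{A}\|\,\|\mtx{B}\|_F$. Recall from the Notation section that for a Tucker tensor $\calX = \calS\times_1\mU_1\times_2\cdots\times_N\mU_N$ one has the mode-$i$ matricization
\begin{eqnarray*}
\calM_i(\calX) = \mU_i\,\calM_i(\calS)\,(\mU_{i-1}\otimes\cdots\otimes\mU_1\otimes\mU_N\otimes\cdots\otimes\mU_{i+1})^\top,
\end{eqnarray*}
and that $\|\calX\|_F = \|\calM_i(\calX)\|_F$ for every $i$, since matricization only rearranges entries. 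Likewise $\|\calS\|_F = \|\calM_i(\calS)\|_F$ and, by definition, $\|\calS\| = \max_j \|\calM_j(\calS)\|$, so $\|\calM_i(\calS)\| \le \|\calS\|$ for each $i$.

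For \eqref{Norm expansion of tucker tensor_3}, I would fix $i$, write $\|\calX\|_F = \|\calM_i(\calX)\|_F$, and bound
\begin{eqnarray*}
\|\calM_i(\calX)\|_F \le \|\mU_i\|_F\,\big\|\calM_i(\calS)\,(\mU_{i-1}\otimes\cdots\otimes\mU_1\otimes\mU_N\otimes\cdots\otimes\mU_{i+1})^\top\big\|,
\end{eqnarray*}
using $\|\mtx{A}\mtx{B}\|_F \le \|\mtx{A}\|_F\,\|\mtx{B}\|$ with $\mtx{A} = \mU_i$; then bound the remaining spectral norm by $\|\calM_i(\calS)\|$ times the product of $\|\mU_j\|$ over $j\ne i$, using submultiplicativity of $\|\cdot\|$ and the fact that the spectral norm of a Kronecker product is the product of the spectral norms (equivalently, just submultiplicativity applied repeatedly to the Kronecker-structured factor). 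Finally replace $\|\calM_i(\calS)\|$ by $\|\calS\|$. This yields exactly $\|\calX\|_F \le \|\mU_1\|\cdots\|\mU_{i-1}\|\,\|\mU_{i+1}\|\cdots\|\mU_N\|\,\|\calS\|\,\|\mU_i\|_F$.

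For \eqref{Norm expansion of tucker tensor_2}, the same matricization identity applies, but now I would use $\|\mtx{A}\mtx{B}\|_F \le \|\mtx{A}\|\,\|\mtx{B}\|_F$ with $\mtx{B} = \calM_i(\calS)(\cdots)^\top$: namely $\|\calM_i(\calX)\|_F \le \|\mU_i\|\,\|\calM_i(\calS)(\mU_{i-1}\otimes\cdots\otimes\mU_{i+1})^\top\|_F$, then push the Kronecker-structured spectral factor out of the Frobenius norm via the same inequality, leaving $\|\calM_i(\calS)\|_F = \|\calS\|_F$ multiplied by $\prod_{j\ne i}\|\mU_j\|$. Combining gives $\|\calX\|_F \le \prod_{i=1}^N \|\mU_i\|\cdot\|\calS\|_F$. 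The only mild technical point — and the closest thing to an obstacle — is being careful with the ordering of the Kronecker factors in $\calM_i$ and confirming that $\|\mU_{i-1}\otimes\cdots\otimes\mU_1\otimes\mU_N\otimes\cdots\otimes\mU_{i+1}\| = \prod_{j\ne i}\|\mU_j\|$; this follows since the singular values of a Kronecker product are all pairwise products of singular values of the factors, so there is no real difficulty once the identity is set up correctly.
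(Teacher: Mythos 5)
Your proposal is correct and follows essentially the same route as the paper's proof: both start from the mode-$i$ matricization identity $\calM_i(\calX)=\mU_i\calM_i(\calS)(\mU_{i-1}\otimes\cdots\otimes\mU_1\otimes\mU_N\otimes\cdots\otimes\mU_{i+1})^\top$ and then apply $\|\mA\mB\|_F\le\|\mA\|\,\|\mB\|_F$ (resp. $\|\mA\mB\|_F\le\|\mA\|_F\,\|\mB\|$) together with $\|\mA\otimes\mB\|=\|\mA\|\cdot\|\mB\|$. The minor reshuffling of which factor is peeled off first is immaterial, so nothing further is needed.
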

\begin{proof}
By the definition of the matricization operator, we can derive
\begin{eqnarray}
    \label{Norm expansion of tucker tensor_4}
    \|\calX\|_F&\!\!\!\!=\!\!\!\!&\|\mU_i\calM_i(\calS)(\mU_{i-1}\otimes\cdots\otimes\mU_1\otimes\mU_N\otimes\cdots\otimes\mU_{i+1})^\top\|_F\nonumber\\
    &\!\!\!\!\leq\!\!\!\!&\|\mU_i\| \cdot \|\mU_{i-1}\otimes\cdots\otimes\mU_1\otimes\mU_N\otimes\cdots\otimes\mU_{i+1}\| \cdot \|\calM_i(\calS)\|_F\nonumber\\
    &\!\!\!\!=\!\!\!\!&\|\mU_1\|\cdot\|\mU_2\|\cdots\|\mU_N\|\cdot\|\calS\|_F,
\end{eqnarray}
where in the second equation, we use $\|\mA\otimes\mB\|=\|\mA\|\cdot\|\mB\|$ for any matrices $\mA$ and $\mB$.

Similarly, we have
\begin{eqnarray}
    \label{Norm expansion of tucker tensor_5}
    \|\calX\|_F&\!\!\!\!=\!\!\!\!&\|\mU_i\calM_i(\calS)(\mU_{i-1}\otimes\cdots\otimes\mU_1\otimes\mU_N\otimes\cdots\otimes\mU_{i+1})^\top\|_F\nonumber\\
    &\!\!\!\!\leq\!\!\!\!&\|\calM_i(\calS)\| \cdot \|\mU_{i-1}\otimes\cdots\otimes\mU_1\otimes\mU_N\otimes\cdots\otimes\mU_{i+1}\|\cdot\|\mU_i\|_F\nonumber\\
    &\!\!\!\!=\!\!\!\!& \|\mU_1\|\cdots\|\mU_{i-1}\| \cdot \|\mU_{i+1}\|\cdots\|\mU_N\| \cdot \|\calS\| \cdot \|\mU_i\|_F,i\in[N].
\end{eqnarray}

\end{proof}

\begin{lemma} (\cite{cai2022provable,Han20})
\label{left ortho upper bound}
For any two matrices $\mX, \mX^\star$ with rank $r $, let $\mU \mSigma \mV^T$ and $\mU^\star \mSigma^\star {\mV^\star}^T$  respectively represent the compact singular value decompositions (SVDs) of $\mX$ and $\mX^\star$. Supposing that $\mR = \argmin_{\wt{\mR}\in\O^{r\times r}}\|\mU - \mU^\star\wt{\mR}  \|_F $, we have
\begin{eqnarray}
    \label{relationship between left O with full tensor SVD}
    \|\mU - \mU^\star\mR \|_F \leq \frac{2\|\mX-\mX^\star\|_F}{\sigma_r(\mX^\star)}.
\end{eqnarray}
\end{lemma}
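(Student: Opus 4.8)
The statement to prove is Lemma~\ref{left ortho upper bound}: if $\mU\mSigma\mV^\top$ and $\mU^\star\mSigma^\star{\mV^\star}^\top$ are compact SVDs of rank-$r$ matrices $\mX,\mX^\star$, and $\mR=\argmin_{\wt\mR\in\O^{r\times r}}\|\mU-\mU^\star\wt\mR\|_F$, then $\|\mU-\mU^\star\mR\|_F\le 2\|\mX-\mX^\star\|_F/\sigma_r(\mX^\star)$.

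\textbf{Plan.} The plan is to compare the column spaces of $\mX$ and $\mX^\star$ via a Davis--Kahan / $\sin\Theta$-type argument, then convert the subspace distance (which is what $\min_{\wt\mR}\|\mU-\mU^\star\wt\mR\|_F$ measures) into a bound on $\|\mX-\mX^\star\|_F$. First I would invoke the variational characterization $\min_{\wt\mR\in\O^{r\times r}}\|\mU-\mU^\star\wt\mR\|_F^2 = 2r - 2\sum_{i}\sigma_i(\mU^\top\mU^\star) \le 2\|\mU\mU^\top-\mU^\star{\mU^\star}^\top\|_F^2$ (more precisely, $\|\mU-\mU^\star\mR\|_F^2 \le \|\mU\mU^\top - \mU^\star{\mU^\star}^\top\|_F^2$, a standard fact relating the Procrustes distance between orthonormal bases to the distance between the projectors). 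So it suffices to bound the projector distance $\|\calP_{\mU} - \calP_{\mU^\star}\|_F$ where $\calP_{\mU}=\mU\mU^\top$.

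\textbf{Key steps.} The second step is the $\sin\Theta$ bound: since $\mX = \calP_{\mU}\mX$ and $\mX^\star$ has all its singular values $\ge\sigma_r(\mX^\star)$ on its row/column space, one can write $\calP_{\mU}-\calP_{\mU^\star} = \calP_{\mU}(\mId - \calP_{\mU^\star}) - (\mId-\calP_{\mU})\calP_{\mU^\star}$ and estimate each piece. For the term $(\mId-\calP_{\mU})\calP_{\mU^\star}$: using $\mX^\star = \calP_{\mU^\star}\mX^\star$ and $\mX^\star{\mV^\star}({\mSigma^\star})^{-1} = \mU^\star$, we get $(\mId-\calP_{\mU})\mU^\star = (\mId-\calP_{\mU})\mX^\star{\mV^\star}(\mSigma^\star)^{-1} = (\mId-\calP_{\mU})(\mX^\star-\mX){\mV^\star}(\mSigma^\star)^{-1}$, where the last equality uses $(\mId-\calP_{\mU})\mX = 0$. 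Taking Frobenius norms and using $\|(\mSigma^\star)^{-1}\|\le 1/\sigma_r(\mX^\star)$ and $\|\mV^\star\|\le 1$ gives $\|(\mId-\calP_{\mU})\calP_{\mU^\star}\|_F = \|(\mId-\calP_{\mU})\mU^\star\|_F \le \|\mX-\mX^\star\|_F/\sigma_r(\mX^\star)$. By the symmetric argument (swapping roles, or using that $\sigma_r(\mX^\star)$ is the relevant gap and $\mX=\calP_{\mU}\mX$), one also controls $\calP_{\mU}(\mId-\calP_{\mU^\star})$; here one must be slightly careful because the natural symmetric bound involves $\sigma_r(\mX)$, not $\sigma_r(\mX^\star)$. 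The cleaner route is to note $\|\calP_{\mU}-\calP_{\mU^\star}\|_F = \|(\mId-\calP_{\mU^\star})\calP_{\mU}\|_F\cdot\sqrt{2}$ (the two off-diagonal blocks have equal Frobenius norm for equidimensional projectors), and then bound $\|(\mId-\calP_{\mU^\star})\calP_{\mU}\|_F = \|(\mId-\calP_{\mU^\star})\mU\|_F$ via $\mU = \mX\mV\mSigma^{-1}$... but that reintroduces $\sigma_r(\mX)$. To keep everything in terms of $\sigma_r(\mX^\star)$, I would instead bound directly: $\|(\mId-\calP_{\mU^\star})\mU\|_F$ — observe $\calP_{\mU^\star}$ acts on $\mU^\star$'s space, and one can relate $\|(\mId-\calP_{\mU^\star})\mU\|_F$ to $\|(\mId-\calP_{\mU})\mU^\star\|_F$ since for two $r$-dimensional subspaces the principal angles are symmetric, so these two quantities are \emph{equal}. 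That closes the loop with only $\sigma_r(\mX^\star)$ appearing.

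\textbf{Putting it together.} Combining, $\min_{\wt\mR}\|\mU-\mU^\star\wt\mR\|_F^2 \le \|\calP_{\mU}-\calP_{\mU^\star}\|_F^2 = 2\|(\mId-\calP_{\mU})\mU^\star\|_F^2 \le 2\|\mX-\mX^\star\|_F^2/\sigma_r^2(\mX^\star)$, but the factor $2$ gives $\sqrt{2}$ instead of $2$ — which is actually stronger than claimed, so the stated bound with constant $2$ follows a fortiori (the paper presumably uses a more elementary estimate that loses a bit). Alternatively, the most elementary argument avoids projectors entirely: since $\mR$ minimizes $\|\mU-\mU^\star\wt\mR\|_F$, one has $\mU^\top\mU^\star\mR = \mR^\top{\mU^\star}^\top\mU \succeq 0$, and then $\|\mU-\mU^\star\mR\|_F^2 = 2r - 2\trace(\mU^\top\mU^\star\mR) \le \|\mU\mSigma - \mU^\star\mSigma\mR\|_F^2/\sigma_r^2(\mX^\star)$ after inserting $\mSigma$; comparing $\mU\mSigma\mV^\top$ with $\mU^\star\mSigma^\star{\mV^\star}^\top$ and absorbing the right factors then yields the $2/\sigma_r(\mX^\star)$ bound. \textbf{The main obstacle} is bookkeeping the asymmetry between $\sigma_r(\mX)$ and $\sigma_r(\mX^\star)$: a careless $\sin\Theta$ application naturally produces $\sigma_r(\min(\mX,\mX^\star))$ or forces a perturbation bound on the singular values, whereas the clean statement wants only $\sigma_r(\mX^\star)$; the resolution is exploiting the symmetry of principal angles between equidimensional subspaces so that the one-sided residual $\|(\mId-\calP_{\mU})\mU^\star\|_F$, which only needs the nondegeneracy of $\mX^\star$, controls everything. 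Since this is quoted from \cite{cai2022provable,Han20}, I would simply cite their proof and sketch the above as the argument.
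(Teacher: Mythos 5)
Your argument is correct: the chain $\min_{\wt\mR}\|\mU-\mU^\star\wt\mR\|_F^2 \le \|\mU\mU^\top-\mU^\star{\mU^\star}^\top\|_F^2 = 2\|(\mId-\mU\mU^\top)\mU^\star\|_F^2$ combined with $(\mId-\mU\mU^\top)\mU^\star=(\mId-\mU\mU^\top)(\mX^\star-\mX)\mV^\star(\mSigma^\star)^{-1}$ (valid since $(\mId-\mU\mU^\top)\mX=0$) is sound, keeps only $\sigma_r(\mX^\star)$ in the denominator exactly as you intended, and in fact yields the sharper constant $\sqrt{2}$, so the stated bound follows a fortiori. The paper itself provides no proof—it only cites \cite{cai2022provable,Han20}—and your one-sided $\sin\Theta$/Procrustes argument is essentially the standard argument behind the cited result (there stated with the looser constant $2$); the vaguer ``elementary'' variant at the end of your sketch is unnecessary and can be dropped.
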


\begin{lemma}
\label{Lower bound of two distance in the tucker tensor_1}
For any two Tucker formats $\calX=[\![\calS; \mU_1, \dots, \mU_N ]\!]$ and $\calX^\star=[\![\calS^\star; \mU_1^\star, \dots, \mU_N^\star ]\!]$, we assume that $\overline{\sigma}_{\text{tk}}(\calX) =\|\calS\|\leq \frac{3\overline{\sigma}_{\text{tk}}(\calX^\star)}{2}$ and then have
\begin{eqnarray}
    \label{Lower bound of two distance in the tucker tensor_2}
    &&\|\calX-\calX^\star\|_F^2\geq\frac{1}{(9N^2+14N+1)\kappa^2_\text{tk}(\calX^\star)}\text{dist}_\text{tk}^2(\{\mU_i,\calS \}, \{\mU_i^\star,\calS^\star \}),\\
    \label{Upper bound of two distance in the tucker tensor_2}
    &&\|\mathcal{X}-\mathcal{X}^\star\|_F^2\leq\frac{9(N+1)}{4}\text{dist}_\text{tk}^2(\{\mU_i,\calS \}, \{\mU_i^\star,\calS^\star \}),
\end{eqnarray}
where $\text{dist}_\text{tk}^2(\{\mU_i,\calS \}, \{\mU_i^\star,\calS^\star \})=\min_{\substack{ \mR_i\in\O^{r_i^\text{tk}\times r_i^\text{tk}}, \\ i\in[N]  }}\sum_{i=1}^N\overline{\sigma}^2_{\text{tk}}(\calX^\star)\|\mU_i-\mU_i^\star\mR_i\|_F^2+\|\calS-[\![\calS^\star; \mR_1^\top,\dots, \mR_N^\top  ]\!]  \|_F^2$.
\end{lemma}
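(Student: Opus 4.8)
The plan is to prove the two bounds in \eqref{Lower bound of two distance in the tucker tensor_2} and \eqref{Upper bound of two distance in the tucker tensor_2} separately, working with the optimal rotations $\{\wt\mR_i\}$ achieving the minimum in the definition of $\text{dist}_\text{tk}^2$. For the upper bound, the idea is to telescope the difference $\calX - \calX^\star$ by swapping one factor at a time: write $\calX - \calX^\star$ as a sum of $N+1$ terms, where the $i$-th term (for $i\in[N]$) replaces $\mU_i$ by $\mU_i^\star\wt\mR_i$ while keeping the already-swapped factors in their starred-rotated form and the not-yet-swapped factors in their current form, and the last term accounts for the core difference $\calS - [\![\calS^\star; \wt\mR_1^\top,\dots,\wt\mR_N^\top]\!]$. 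Each multilinear term is bounded using \Cref{LGNTuckerTF1_5_1} (specifically \eqref{Norm expansion of tucker tensor_3}), noting that $\|\mU_i\| = \|\mU_i^\star\| = \|\mU_i^\star\wt\mR_i\| = 1$ by orthonormality, $\|\calS^\star\| = \|[\![\calS^\star;\wt\mR_1^\top,\dots]\!]\| = \overline\sigma_\text{tk}(\calX^\star)$, and $\|\calS\| = \overline\sigma_\text{tk}(\calX)\le \tfrac32\overline\sigma_\text{tk}(\calX^\star)$ by hypothesis. So each of the $N$ factor-swap terms is at most $\tfrac32\overline\sigma_\text{tk}(\calX^\star)\|\mU_i - \mU_i^\star\wt\mR_i\|_F$ and the core term is exactly $\|\calS - [\![\calS^\star;\dots]\!]\|_F$; applying Cauchy–Schwarz (i.e. $(\sum_{j=1}^{N+1} a_j)^2 \le (N+1)\sum a_j^2$) and absorbing the constant $\tfrac94$ from $\tfrac32$ squared gives \eqref{Upper bound of two distance in the tucker tensor_2}.

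For the lower bound, which is the harder direction, the plan is to control each summand of $\text{dist}_\text{tk}^2$ by $\|\calX - \calX^\star\|_F$. For the factor terms, I would look at the mode-$i$ matricization: $\calM_i(\calX) = \mU_i \mM_i$ and $\calM_i(\calX^\star) = \mU_i^\star \mM_i^\star$ where $\mM_i = \calM_i(\calS)(\mU_{i-1}\otimes\cdots)^\top$ and similarly for the star version, and both $\mM_i,\mM_i^\star$ have rank $r_i^\text{tk}$. Since $\mU_i,\mU_i^\star$ are orthonormal, $\mU_i\mM_i$ and $\mU_i^\star\mM_i^\star$ are (up to the right orthogonal factor) already in SVD form, so \Cref{left ortho upper bound} applies: choosing $\wt\mR_i$ as the Procrustes-optimal rotation between $\mU_i$ and $\mU_i^\star$ yields $\|\mU_i - \mU_i^\star\wt\mR_i\|_F \le \tfrac{2\|\calM_i(\calX) - \calM_i(\calX^\star)\|_F}{\sigma_{r_i^\text{tk}}(\calM_i(\calX^\star))} \le \tfrac{2\|\calX - \calX^\star\|_F}{\underline\sigma_\text{tk}(\calX^\star)}$. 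This handles the $N$ factor terms, each contributing $\overline\sigma_\text{tk}^2(\calX^\star)\cdot\tfrac{4\|\calX-\calX^\star\|_F^2}{\underline\sigma_\text{tk}^2(\calX^\star)} = 4\kappa_\text{tk}^2(\calX^\star)\|\calX-\calX^\star\|_F^2$.

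The remaining piece is the core term $\|\calS - [\![\calS^\star;\wt\mR_1^\top,\dots,\wt\mR_N^\top]\!]\|_F$, and this is where I expect the main obstacle. Using the identity $\calS = [\![\calX;\mU_1^\top,\dots,\mU_N^\top]\!]$ and $[\![\calS^\star;\wt\mR_1^\top,\dots]\!] = [\![\calX^\star;(\mU_1^\star\wt\mR_1)^\top,\dots]\!]$, I would telescope again — swapping $\mU_i^\top$ for $(\mU_i^\star\wt\mR_i)^\top$ one mode at a time, plus one term for $\calX - \calX^\star$ itself. Each swap term has the form $[\![\calX^\star; \dots, (\mU_i - \mU_i^\star\wt\mR_i)^\top, \dots]\!]$ (with orthonormal matrices in the other modes), whose Frobenius norm is at most $\overline\sigma_\text{tk}(\calX^\star)\|\mU_i - \mU_i^\star\wt\mR_i\|_F$ by the same matricization argument, and the $\calX-\calX^\star$ term contributes $\|\calX-\calX^\star\|_F$ since the $\mU_i^\top$ are orthonormal. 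Combining via Cauchy–Schwarz over these $N+1$ terms, substituting the per-factor bound just derived, and carefully adding up all the constants — $4N\kappa_\text{tk}^2$ from the factor terms, plus $(N+1)(4N\kappa_\text{tk}^2 + 1)$-type contribution from the core term — should land on the stated $9N^2 + 14N + 1$ (up to bounding $1\le\kappa_\text{tk}^2(\calX^\star)$ to factor out a common $\kappa_\text{tk}^2$). The bookkeeping of constants is the delicate part; the structural argument is routine telescoping plus \Cref{left ortho upper bound}.
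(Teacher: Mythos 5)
Your proposal is correct and follows essentially the same route as the paper's proof: the Procrustes-type bound of \Cref{left ortho upper bound} for each factor term, a telescoping decomposition handled mode by mode, an $(N+1)$-term Cauchy--Schwarz, and \Cref{LGNTuckerTF1_5_1} with $\|\calS\|\leq\frac{3}{2}\overline{\sigma}_{\text{tk}}(\calX^\star)$ for the upper bound, with the optimal rotations treated exactly as in the paper. The only deviation is in the core term of the lower bound, where you telescope via $\calS=[\![\calX;\mU_1^\top,\dots,\mU_N^\top]\!]$ with swap terms built on $\calX^\star$ instead of the paper's identity that keeps $\calS$ in the swap terms; this variant is valid and in fact yields a slightly smaller constant (about $4N^2+9N+1$ instead of $9N^2+14N+1$), which still implies the stated bound.
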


\begin{proof}
According to \Cref{left ortho upper bound}, we have
\begin{eqnarray}
    \label{Lower bound of two distance in the tucker tensor_3}
    \|\mU_i-\mU_i^\star\mR_i\|_F^2\leq\frac{4\|\calX-\calX^\star\|_F^2}{\underline{\sigma}_{\text{tk}}^2(\calX^\star)}, \ \  i\in[N]
\end{eqnarray}
for any orthonormal matrices $R_i\in\O^{r_i^\text{tk}\times r_i^\text{tk}}, i\in[N]$. In addition, we can derive
\begin{eqnarray}
    \label{Lower bound of two distance in the tucker tensor_4}
    &\!\!\!\!\!\!\!\!&\|\calS-[\![\calS^\star; \mR_1^\top,\dots, \mR_N^\top  ]\!]\|_F^2\nonumber\\
    &\!\!\!\!=\!\!\!\!& \bigg\|\sum_{i=1}^N\calS \times_1 \mU_1 \times_2\cdots \times_{i-1} \mU_{i-1} \times_i (\mU_i^\star\mR_i-\mU_i)\times_{i+1}\mU_{i+1}^\star\mR_{i+1} \times_{i+2} \cdots \times_N\mU_N^\star\mR_N +\calX-\calX^\star \bigg\|_F^2\nonumber\\
    &\!\!\!\!\leq\!\!\!\!&(N+1)\bigg(\frac{9\overline{\sigma}_{\text{tk}}^2(\calX^\star)}{4}\sum_{i=1}^N\|\mU_i-\mU_i^\star\mR_i\|_F^2+\|\calX-\calX^\star\|_F^2\bigg)\nonumber\\
    &\!\!\!\!\leq\!\!\!\!&(9N^2+10N+1)\kappa^2_\text{tk}(\calX^\star)\|\calX-\calX^\star\|_F^2,
\end{eqnarray}
where \Cref{LGNTuckerTF1_5_1} and \eqref{Lower bound of two distance in the tucker tensor_3} are respectively used in the first and second inequalities.

Combing \eqref{Lower bound of two distance in the tucker tensor_3} and \eqref{Lower bound of two distance in the tucker tensor_4}, we have
\begin{eqnarray}
    \label{Lower bound of two distance in the tucker tensor_5}
    \text{dist}_\text{tk}^2(\{\mU_i,\calS \}, \{\mU_i^\star,\calS^\star \})&\!\!\!\!=\!\!\!\!&\min_{\substack{ R_i\in\O^{r_i\times r_i}, \\ i\in [N]}}\sum_{i=1}^N\overline{\sigma}_{\text{tk}}^2(\calX^\star)\|\mU_i-\mU_i^\star\mR_i\|_F^2+\|\calS-[\![\calS^\star; \mR_1^\top,\dots, \mR_N^\top  ]\!]  \|_F^2\nonumber\\
    &\!\!\!\!\leq\!\!\!\!&4N\kappa^2_\text{tk}(\calX^\star)\|\calX-\calX^\star\|_F^2+(9N^2+10N+1)\kappa^2_\text{tk}(\calX^\star)\|\calX-\calX^\star\|_F^2\nonumber\\
    &\!\!\!\!=\!\!\!\!&(9N^2+14N+1)\kappa^2_\text{tk}(\calX^\star)\|\calX-\calX^\star\|_F^2.
\end{eqnarray}

On the other hand, we can expand $\|\mathcal{X}-\mathcal{X}^\star\|_F^2$  and then obtain
\begin{eqnarray}
    \label{Upper bound of two distance in the tucker tensor_3}
    &\!\!\!\!\!\!\!\!&\|\mathcal{X}-\mathcal{X}^\star\|_F^2\nonumber\\
    &\!\!\!\!=\!\!\!\!&\|\sum_{i=1}^N\calS \times_1 \mU_1^\star\mR_1 \times_2\cdots \times_{i-1} \mU_{i-1}^\star\mR_{i-1} \times_i (\mU_i-\mU_i^\star\mR_i)\times_{i+1}\mU_{i+1} \times_{i+1} \cdots \times_N\mU_N \nonumber\\
    &\!\!\!\!\!\!\!\!&+(\calS-[\![\calS^\star; \mR_1^\top,\dots, \mR_N^\top  ]\!])\times_1\mU_1^\star\mR_1\times_2\cdots\times_N \mU_N^\star\mR_N  \|_F^2\nonumber\\
    &\!\!\!\!\leq\!\!\!\!&(N+1)\bigg(\frac{9\overline{\sigma}_{\text{tk}}^2(\calX^\star)}{4}\sum_{i=1}^N\|\mU_i-\mU_i^\star\mR_i\|_F^2+\|\calS-[\![\calS^\star; \mR_1^\top,\dots, \mR_N^\top  ]\!]\|_F^2\bigg)\nonumber\\
    &\!\!\!\!\leq\!\!\!\!&\frac{9(N+1)}{4}\text{dist}_\text{tk}^2(\{\mU_i,\calS \}, \{\mU_i^\star,\calS^\star \}).
\end{eqnarray}

\end{proof}

\begin{lemma}
\label{TRANSFORMATION OF N+1 to N(N+1)}
Given that $\calX = [\![\calS; \mU_1, \dots, \mU_N ]\!]$ and  $\calX^\star = [\![ [\![\calS^\star; \mR_1^\top,\dots, \mR_N^\top  ]\!];   \mU_1^\star\mR_1, \dots, \mU_N^\star\mR_N    ]\!]$ for any orthonormal matrices $\mR_i\in\O^{r_i^\text{tk}\times r_i^\text{tk}}, i\in[N]$, we have
\begin{eqnarray}
    \label{TRANSFORMATION OF Tucker format}
    &\!\!\!\!\!\!\!\!&\calX^\star  -[\![ \calS^\star ;   \mU_1^\star\mR_1^\top, \dots, \mU_N^\star\mR_N^\top    ]\!] + \sum_{i=1}^N [\![\calS; \mU_1, \dots,\mU_{i-1}, \mU_i -  \mU_i^\star\mR_i,  \mU_{i+1}, \dots, \mU_N ]\!]\nonumber\\
    &\!\!\!\!= \!\!\!\!&\sum_{i=1}^{N-1}\sum_{j=i+1}^N [\![\calS; \mU_1, \dots,\mU_{i-1}, \mU_i -  \mU_i^\star\mR_i, \mU_{i+1}^\star\mR_{i+1}, \dots, \mU_{j-1}^\star\mR_{j-1}, \mU_j -  \mU_j^\star\mR_j,  \mU_{j+1}, \dots, \mU_N ]\!]\nonumber\\
    &\!\!\!\!\!\!\!\!&+ \sum_{k=1}^N [\![\calS-[\![\calS^\star; \mR_{1}^\top,\dots, \mR_{N}^\top  ]\!]; \mU_1, \dots,   \mU_{k-1}, (\mU_k
    -\mU_k^\star\mR_{k}),   \mU_{k+1}^\star{\mR_{k+1}}, \dots,  \mU_{N}^\star{\mR_N}    ]\!],
\end{eqnarray}
where the right-hand side of \eqref{TRANSFORMATION OF Tucker format} contains a total of $\frac{N(N+1)}{2}$ terms.

\end{lemma}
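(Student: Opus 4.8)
The plan is to verify \eqref{TRANSFORMATION OF Tucker format} as a purely formal algebraic identity, using only bilinearity of $[\![\calT;\mZ_1,\dots,\mZ_N]\!]$ in each of its $N+1$ slots together with the nesting rule $[\![[\![\calT;\mA_1,\dots,\mA_N]\!];\mB_1,\dots,\mB_N]\!]=[\![\calT;\mB_1\mA_1,\dots,\mB_N\mA_N]\!]$; orthonormality of the $\mR_i$ is not used beyond the given definition $\calX^\star=[\![[\![\calS^\star;\mR_1^\top,\dots,\mR_N^\top]\!];\mU_1^\star\mR_1,\dots,\mU_N^\star\mR_N]\!]$. The engine is the telescoping identity for an $n$-linear map $F$, namely $F(\mathbf{x}_1,\dots,\mathbf{x}_n)-F(\mathbf{y}_1,\dots,\mathbf{y}_n)=\sum_{l=1}^{n}F(\mathbf{x}_1,\dots,\mathbf{x}_{l-1},\mathbf{x}_l-\mathbf{y}_l,\mathbf{y}_{l+1},\dots,\mathbf{y}_n)$, together with its mirror form $\sum_{l=1}^{n}F(\mathbf{y}_1,\dots,\mathbf{y}_{l-1},\mathbf{x}_l-\mathbf{y}_l,\mathbf{x}_{l+1},\dots,\mathbf{x}_n)$ (both following by adding and subtracting one slot at a time), which I apply to the maps $(\mathbf{Z}_1,\dots,\mathbf{Z}_N)\mapsto[\![\calT;\mathbf{Z}_1,\dots,\mathbf{Z}_N]\!]$ and their restrictions, with $\mathbf{x}_i=\mU_i$ and $\mathbf{y}_i=\mU_i^\star\mR_i$. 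Set $\mD_i:=\mU_i-\mU_i^\star\mR_i$, $\widetilde{\calS}^\star:=[\![\calS^\star;\mR_1^\top,\dots,\mR_N^\top]\!]$ and $\calE:=\calS-\widetilde{\calS}^\star$, so that $\calX^\star=[\![\widetilde{\calS}^\star;\mU_1^\star\mR_1,\dots,\mU_N^\star\mR_N]\!]$ and, by nesting, the bracket subtracted on the left of \eqref{TRANSFORMATION OF Tucker format} equals $[\![\widetilde{\calS}^\star;\mU_1,\dots,\mU_N]\!]$ (i.e.\ $[\![\calS^\star;\mU_1\mR_1^\top,\dots,\mU_N\mR_N^\top]\!]$, the form for which the two sides coincide).

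\textbf{Step 1 (split each first-order term).} For each $i\in[N]$ I apply telescoping to the slots $i+1,\dots,N$ of the $i$-th first-order term $[\![\calS;\mU_1,\dots,\mU_{i-1},\mD_i,\mU_{i+1},\dots,\mU_N]\!]$, replacing $\mU_j$ by $\mU_j^\star\mR_j$ one index at a time. This rewrites that term as $[\![\calS;\mU_1,\dots,\mU_{i-1},\mD_i,\mU_{i+1}^\star\mR_{i+1},\dots,\mU_N^\star\mR_N]\!]$ plus $\sum_{j=i+1}^{N}[\![\calS;\mU_1,\dots,\mU_{i-1},\mD_i,\mU_{i+1}^\star\mR_{i+1},\dots,\mU_{j-1}^\star\mR_{j-1},\mD_j,\mU_{j+1},\dots,\mU_N]\!]$. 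Summing over $i$, the accumulated double sum is verbatim the $\sum_{i<j}$ block on the right of \eqref{TRANSFORMATION OF Tucker format}, and there remains the auxiliary sum $\Sigma:=\sum_{i=1}^{N}[\![\calS;\mU_1,\dots,\mU_{i-1},\mD_i,\mU_{i+1}^\star\mR_{i+1},\dots,\mU_N^\star\mR_N]\!]$.

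\textbf{Step 2 (resolve $\Sigma$ through the core).} Using linearity in the core, substitute $\calS=\widetilde{\calS}^\star+\calE$ inside $\Sigma$. The $\calE$-part is precisely the single sum $\sum_{k=1}^{N}[\![\calE;\mU_1,\dots,\mU_{k-1},\mD_k,\mU_{k+1}^\star\mR_{k+1},\dots,\mU_N^\star\mR_N]\!]$ on the right of \eqref{TRANSFORMATION OF Tucker format}. The $\widetilde{\calS}^\star$-part, $\sum_{i=1}^{N}[\![\widetilde{\calS}^\star;\mU_1,\dots,\mU_{i-1},\mD_i,\mU_{i+1}^\star\mR_{i+1},\dots,\mU_N^\star\mR_N]\!]$, is the telescoping expansion of $[\![\widetilde{\calS}^\star;\mU_1,\dots,\mU_N]\!]-[\![\widetilde{\calS}^\star;\mU_1^\star\mR_1,\dots,\mU_N^\star\mR_N]\!]$, which equals $[\![\widetilde{\calS}^\star;\mU_1,\dots,\mU_N]\!]-\calX^\star$. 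Hence the sum of the $N$ first-order terms equals the double sum plus $[\![\widetilde{\calS}^\star;\mU_1,\dots,\mU_N]\!]-\calX^\star$ plus the single sum; transposing $[\![\widetilde{\calS}^\star;\mU_1,\dots,\mU_N]\!]-\calX^\star$ to the left-hand side reproduces \eqref{TRANSFORMATION OF Tucker format}, whose right-hand side then has $\binom{N}{2}+N=\tfrac{N(N+1)}{2}$ terms.

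\textbf{Main obstacle.} The argument has no analytic content; the one genuine care point is the orientation in which telescoping is used in Step 1. It must be chosen so that in each double-sum term the factors strictly between indices $i$ and $j$ are the rotated matrices $\mU_\bullet^\star\mR_\bullet$ while those with index larger than $j$ remain $\mU_\bullet$, and simultaneously so that the leftover terms in $\Sigma$ retain $\mU_\bullet^\star\mR_\bullet$ in every slot past $i$ --- this is exactly what lets Step 2 collapse the $\widetilde{\calS}^\star$-part of $\Sigma$ cleanly into $[\![\widetilde{\calS}^\star;\mU_1,\dots,\mU_N]\!]-\calX^\star$. The opposite orientation produces a different, non-matching grouping. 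Everything else is index bookkeeping, and Steps 1--2 can alternatively be folded into a single induction on $N$ that peels the last mode first.
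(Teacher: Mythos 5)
Your proof is correct and follows essentially the same route as the paper's: telescope each first-order term $[\![\calS;\mU_1,\dots,\mU_{i-1},\mU_i-\mU_i^\star\mR_i,\mU_{i+1},\dots,\mU_N]\!]$ over its trailing slots to generate the double sum, then split the core as $\calS=[\![\calS^\star;\mR_1^\top,\dots,\mR_N^\top]\!]+(\calS-[\![\calS^\star;\mR_1^\top,\dots,\mR_N^\top]\!])$ so that the rotated-core residual telescopes into $[\![\calS^\star;\mU_1\mR_1^\top,\dots,\mU_N\mR_N^\top]\!]-\calX^\star$; this is exactly the paper's decomposition in \eqref{Expansion of the first term} and \eqref{zero summation of additional terms}. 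You were also right to insist that the subtracted bracket on the left-hand side be read as $[\![\calS^\star;\mU_1\mR_1^\top,\dots,\mU_N\mR_N^\top]\!]$: as literally printed (with $\mU_i^\star\mR_i^\top$) the identity fails, since the two sides then differ by $[\![\calS^\star;\mU_1\mR_1^\top,\dots,\mU_N\mR_N^\top]\!]-[\![\calS^\star;\mU_1^\star\mR_1^\top,\dots,\mU_N^\star\mR_N^\top]\!]$, and your reading is the one actually used when the lemma is invoked in the definition of $\calH$ in \eqref{The definition of H_t}. The paper's own verification makes the same slip --- the penultimate line of \eqref{zero summation of additional terms} should produce $[\![[\![\calS^\star;\mR_1^\top,\dots,\mR_N^\top]\!];\mU_1,\dots,\mU_N]\!]$ rather than $[\![[\![\calS^\star;\mR_1^\top,\dots,\mR_N^\top]\!];\mU_1^\star,\dots,\mU_N^\star]\!]$ --- so your explicit handling of this point is a genuine improvement in rigor, not a deviation in method.
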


\begin{proof}
Firstly, we expand $[\![\calS; \mU_1, \dots,\mU_{i-1}, \mU_i -  \mU_i^\star\mR_i,  \mU_{i+1}, \dots, \mU_N ]\!]$ as
\begin{eqnarray}
    \label{Expansion of the first term}
    &\!\!\!\!\!\!\!\!&\sum_{i=1}^N[\![\calS; \mU_1, \dots,\mU_{i-1}, \mU_i -  \mU_i^\star\mR_i,  \mU_{i+1}, \dots, \mU_N ]\!]\nonumber\\
    &\!\!\!\! = \!\!\!\!&  \sum_{i=1}^{N-1}\sum_{j=i+1}^N [\![\calS; \mU_1, \dots,\mU_{i-1}, \mU_i -  \mU_i^\star\mR_i, \mU_{i+1}^\star\mR_{i+1}, \dots, \mU_{j-1}^\star\mR_{j-1}, \mU_j -  \mU_j^\star\mR_j,  \mU_{j+1}, \dots, \mU_N ]\!]\nonumber\\
    &\!\!\!\!\!\!\!\!& + \sum_{i=1}^N[\![\calS -[\![\calS^\star; \mR_{1}^\top,\dots, \mR_{N}^\top  ]\!]; \mU_1, \dots,\mU_{i-1}, \mU_i -  \mU_i^\star\mR_i, \mU_{i+1}^\star\mR_{i+1}, \dots, \mU_{N-1}^\star\mR_{N-1},   \mU_N^\star\mR_N ]\!]\nonumber\\
    &\!\!\!\!\!\!\!\!& + \sum_{i=1}^N[\![ [\![\calS^\star; \mR_{1}^\top,\dots, \mR_{N}^\top  ]\!]; \mU_1, \dots,\mU_{i-1}, \mU_i -  \mU_i^\star\mR_i, \mU_{i+1}^\star\mR_{i+1}, \dots, \mU_{N-1}^\star\mR_{N-1},   \mU_N^\star\mR_N ]\!].
\end{eqnarray}

Then we need to compute
\begin{eqnarray}
    \label{zero summation of additional terms}
    &\!\!\!\!\!\!\!\!&\sum_{i=1}^N [\![ [\![\calS^\star; \mR_{1}^\top,\dots, \mR_{N}^\top  ]\!]; \mU_1, \dots,\mU_{i-1}, \mU_i -  \mU_i^\star\mR_i, \mU_{i+1}^\star\mR_{i+1}, \dots, \mU_{N-1}^\star\mR_{N-1},   \mU_N^\star\mR_N ]\!]\nonumber\\
    &\!\!\!\!\!\!\!\!& +\calX^\star  -[\![ \calS^\star ;   \mU_1^\star\mR_1^\top, \dots, \mU_N^\star\mR_N^\top    ]\!]\nonumber\\
    &\!\!\!\! = \!\!\!\!& \sum_{i=1}^N [\![ [\![\calS^\star; \mR_{1}^\top,\dots, \mR_{N}^\top  ]\!]; \mU_1, \dots,\mU_{i-1}, \mU_i -  \mU_i^\star\mR_i, \mU_{i+1}^\star\mR_{i+1}, \dots, \mU_{N-1}^\star\mR_{N-1},   \mU_N^\star\mR_N ]\!]\nonumber\\
    &\!\!\!\!\!\!\!\!& + [\![ [\![\calS^\star; \mR_1^\top,\dots, \mR_N^\top  ]\!];   \mU_1^\star\mR_1, \dots, \mU_N^\star\mR_N    ]\!]  -[\![ \calS^\star ;   \mU_1^\star\mR_1^\top, \dots, \mU_N^\star\mR_N^\top    ]\!]\nonumber\\
    &\!\!\!\! = \!\!\!\!& [\![ [\![\calS^\star; \mR_1^\top,\dots, \mR_N^\top  ]\!];   \mU_1^\star, \dots, \mU_N^\star    ]\!]  -[\![ \calS^\star ;   \mU_1^\star\mR_1^\top, \dots, \mU_N^\star\mR_N^\top    ]\!]\nonumber\\
    &\!\!\!\! = \!\!\!\!& [\![ \calS^\star ;   \mU_1^\star\mR_1^\top, \dots, \mU_N^\star\mR_N^\top    ]\!] - [\![ \calS^\star ;   \mU_1^\star\mR_1^\top, \dots, \mU_N^\star\mR_N^\top    ]\!] =0.
\end{eqnarray}

This completes the proof.

\end{proof}

\begin{lemma}(\cite[Lemma 4]{qin2024guaranteed})
\label{TRANSFORMATION OF NPLUS2 VARIABLES_1}
For any $\mA_i,\mA^\star_i\in\R^{r_{i-1}\times r_i}, i \in[N]$, we have
\begin{eqnarray}
    \label{TRANSFORMATION OF NPLUS2 VARIABLES_2}  &\!\!\!\!\!\!\!\!&\mA_1^\star\cdots \mA_N^\star-\mA_1\dots\mA_{N-1}\mA_N^\star + \sum_{i=1}^{N-1} \mA_1 \cdots \mA_{i-1} (\mA_i- \mA_i^\star) \mA_{i+1} \cdots \mA_N\nonumber\\
    &\!\!\!\!=\!\!\!\!& \sum_{i=1}^{N-1}\sum_{j=i+1}^N \mA_1\cdots \mA_{i-1}(\mA_i-\mA_i^\star)\mA_{i+1}^\star \cdots \mA_{j-1}^\star (\mA_{j}-\mA_{j}^\star)\mA_{j+1}\cdots \mA_N,
\end{eqnarray}
where the right-hand side of \eqref{TRANSFORMATION OF NPLUS2 VARIABLES_2} contains a total of $\frac{N(N-1)}{2}$ terms.

\end{lemma}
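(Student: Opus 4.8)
The plan is to prove this as a purely algebraic identity about products of conformable matrices, via two nested telescoping expansions (a direct induction on $N$ is also available). First I would record the elementary product-telescoping identity: for matrices $\mC_1,\dots,\mC_K$ and $\mC_1^\star,\dots,\mC_K^\star$ of compatible sizes,
\begin{equation*}
\mC_1\cdots\mC_K = \mC_1^\star\cdots\mC_K^\star + \sum_{k=1}^{K}\mC_1^\star\cdots\mC_{k-1}^\star(\mC_k-\mC_k^\star)\mC_{k+1}\cdots\mC_K,
\end{equation*}
which holds because the $k$-th summand equals $\mC_1^\star\cdots\mC_{k-1}^\star\mC_k\cdots\mC_K-\mC_1^\star\cdots\mC_k^\star\mC_{k+1}\cdots\mC_K$, so the sum telescopes (with the usual empty-product convention at the endpoints).

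Next I would apply this identity to the tail $\mA_{i+1}\cdots\mA_N$ sitting inside each summand $\mA_1\cdots\mA_{i-1}(\mA_i-\mA_i^\star)\mA_{i+1}\cdots\mA_N$ of the left-hand side. This rewrites that summand as $\mA_1\cdots\mA_{i-1}(\mA_i-\mA_i^\star)\mA_{i+1}^\star\cdots\mA_N^\star$ plus $\sum_{j=i+1}^{N}\mA_1\cdots\mA_{i-1}(\mA_i-\mA_i^\star)\mA_{i+1}^\star\cdots\mA_{j-1}^\star(\mA_j-\mA_j^\star)\mA_{j+1}\cdots\mA_N$. Summing over $i=1,\dots,N-1$, the collection of ``two-difference'' terms is exactly the double sum claimed on the right-hand side, while the leftover ``one-difference'' piece is $\sum_{i=1}^{N-1}\mA_1\cdots\mA_{i-1}(\mA_i-\mA_i^\star)\mA_{i+1}^\star\cdots\mA_N^\star$.

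It then remains to verify that this leftover piece equals $\mA_1\cdots\mA_{N-1}\mA_N^\star-\mA_1^\star\cdots\mA_N^\star$, so that it cancels the first two terms of the left-hand side and leaves precisely the claimed right-hand side. This is one more telescoping step: each summand can be written as $\mA_1\cdots\mA_i\mA_{i+1}^\star\cdots\mA_N^\star-\mA_1\cdots\mA_{i-1}\mA_i^\star\cdots\mA_N^\star$, so after restoring the missing $i=N$ term $\mA_1\cdots\mA_{N-1}(\mA_N-\mA_N^\star)$ the full sum over $i=1,\dots,N$ collapses to $\mA_1\cdots\mA_N-\mA_1^\star\cdots\mA_N^\star$; subtracting the $i=N$ term back yields $\mA_1\cdots\mA_{N-1}\mA_N^\star-\mA_1^\star\cdots\mA_N^\star$. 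Finally, counting the double sum gives $\sum_{i=1}^{N-1}(N-i)=\tfrac{N(N-1)}{2}$ terms, as stated. I expect the only genuine obstacle to be index bookkeeping---keeping straight which factors carry a star, handling the empty products at the boundaries, and aligning the two telescopes---rather than anything conceptual.
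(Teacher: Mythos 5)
Your argument is correct: the telescoping identity, its application to the tails $\mA_{i+1}\cdots\mA_N$, the second telescoping step showing the leftover single-difference sum equals $\mA_1\cdots\mA_{N-1}\mA_N^\star-\mA_1^\star\cdots\mA_N^\star$, and the count $\sum_{i=1}^{N-1}(N-i)=\tfrac{N(N-1)}{2}$ all check out. Note that the paper itself gives no proof of this lemma --- it is imported verbatim from \cite[Lemma 4]{qin2024guaranteed} --- so there is no in-paper argument to compare against directly; however, your nested-telescoping strategy is essentially the same device the paper uses to prove its Tucker-format analogue (\Cref{TRANSFORMATION OF N+1 to N(N+1)}), where the single-difference sum is likewise expanded into double-difference terms plus residual terms that are shown to cancel the two leading terms, so your proof is both valid and in the spirit of the paper's own treatment.
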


\begin{lemma}(\cite[Lemma 1]{qin2024guaranteed})
\label{LOWER BOUND OF TWO DISTANCES}
For any two left orthogonal TT formats $\calX = [\mX_1,\dots, \mX_N] $ and $\calX^\star = [\mX_1^\star,\dots, \mX_N^\star]$  with rank $(r_1^\text{tt},\dots,r_{N-1}^\text{tt})$,  we assume that $\ol\sigma^2_\text{tt}(\calX)\leq\frac{9\ol\sigma^2_\text{tt}(\calX^\star)}{4}$ and then have
\begin{eqnarray}
    \label{LOWER BOUND OF TWO DISTANCES_1}
    &&\|\calX-\calX^\star\|_F^2\geq\frac{1}{8(N+1+\sum_{i=2}^{N-1}r_i^\text{tt})\kappa_\text{tt}^2(\calX^\star)}\text{dist}^2_\text{tt}(\{\mX_i  \},\{ \mX_i^\star \}),\\
    \label{UPPER BOUND OF TWO DISTANCES_1 main paper}
    &&\|\calX-\calX^\star\|_F^2\leq\frac{9N}{4}\text{dist}^2_\text{tt}(\{\mX_i  \},\{ \mX_i^\star \}),
\end{eqnarray}
where $\text{dist}^2_\text{tt}(\{\mX_i  \},\{ \mX_i^\star \})=\min_{\mR_i\in\O^{r_i^\text{tt}\times r_i^\text{tt}}, \atop i \in [N-1]}\sum_{i=1}^{N-1} \ol\sigma^2_\text{tt}(\calX^\star)\|L({\mX}_i)-L_{\mR}({\mX}_i^\star)\|_F^2 + \|L({\mX}_N)-L_{\mR}({\mX}_N^\star)\|_2^2$.
\end{lemma}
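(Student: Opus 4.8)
The plan is to mirror the structure of the proof of Lemma~\ref{Lower bound of two distance in the tucker tensor_1}, replacing the Tucker-specific algebra by its TT analogue. Let me fix the rotation matrices $\mR_i \in \O^{r_i^\text{tt}\times r_i^\text{tt}}$ ($i\in[N-1]$, with $\mR_0=\mR_N=1$) that achieve the minimum in $\text{dist}^2_\text{tt}(\{\mX_i\},\{\mX_i^\star\})$, and abbreviate $L({\mX}_i)$, $L_{\mR}({\mX}_i^\star)$ as the (rotated) left unfoldings. For the \emph{upper} bound \eqref{UPPER BOUND OF TWO DISTANCES_1 main paper}, I would use the telescoping identity of Lemma~\ref{TRANSFORMATION OF NPLUS2 VARIABLES_1} with $\mA_i = \mX_i(s_i)$ (and $\mA_i^\star = \mR_{i-1}^\top\mX_i^\star(s_i)\mR_i$) applied slicewise: writing $\calX - \calX^\star$ as a sum of $N$ terms, each of which replaces one factor by the difference $\mX_i - (\mX_i^\star,\mR_i)$ while keeping the remaining factors left-orthonormal on one side and equal to the (rotated) ground-truth factor on the other. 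Since left-orthonormality gives ${\mX^{\leq i}}^\top \mX^{\leq i} = \mId$ and $\sigma_1(\mX^{\geq i+1}) \le \ol\sigma_\text{tt}(\calX^\star)$ (using the footnote identities and $\sigma_j(\calX^{\<i\>}) = \sigma_j(\mX^{\geq i+1})$), each of these $N$ terms has squared Frobenius norm at most $\ol\sigma^2_\text{tt}(\calX^\star)\|L({\mX}_i)-L_{\mR}({\mX}_i^\star)\|_F^2$ for $i\le N-1$ and at most $\|L({\mX}_N)-L_{\mR}({\mX}_N^\star)\|_2^2$ for $i=N$; applying $\|\sum_{i=1}^N \calT_i\|_F^2 \le N\sum_i \|\calT_i\|_F^2$ and absorbing the mild factor $\ol\sigma^2_\text{tt}(\calX)\le \tfrac94\ol\sigma^2_\text{tt}(\calX^\star)$ where a non-orthonormal side appears yields the constant $9N/4$.

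For the \emph{lower} bound \eqref{LOWER BOUND OF TWO DISTANCES_1}, I would proceed exactly as in \eqref{Lower bound of two distance in the tucker tensor_3}--\eqref{Lower bound of two distance in the tucker tensor_5}: first, Lemma~\ref{left ortho upper bound} applied to the $i$-th unfolding matrix $\calX^{\<i\>} = \mX^{\leq i}\mX^{\geq i+1}$ (whose left factor $\mX^{\leq i}$ is orthonormal, hence is, up to rotation, the $\mU$ in a compact SVD) gives $\|L({\mX}_i)-L_{\mR}({\mX}_i^\star)\|_F^2 \le \tfrac{4\|\calX-\calX^\star\|_F^2}{\underline\sigma^2_\text{tt}(\calX^\star)}$ for each $i\in[N-1]$, once the rotations are chosen compatibly across scales (this compatibility is exactly what Lemma~1 of \cite{qin2024guaranteed} arranges). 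Second, for the last-factor term $\|L({\mX}_N)-L_{\mR}({\mX}_N^\star)\|_2^2$, I would use the TT analogue of Lemma~\ref{TRANSFORMATION OF N+1 to N(N+1)} — i.e. Lemma~\ref{TRANSFORMATION OF NPLUS2 VARIABLES_1} — to express the block $[\mX_N(1)\ \cdots\ \mX_N(d_N)] - [(\mX_N^\star,\mR_N)(1)\ \cdots]$ as a sum of $\calX-\calX^\star$ plus $\tfrac{N(N-1)}{2}$ cross-terms, each controlled by one of the already-bounded $\|L({\mX}_i)-L_{\mR}({\mX}_i^\star)\|_F$ together with the size bounds $\sigma_1(\mX^{\geq i+1})\le \ol\sigma_\text{tt}(\calX^\star)$; collecting the $(N+1+\sum_{i=2}^{N-1}r_i^\text{tt})$ scaling that arises from counting cross-terms with the appropriate internal dimensions $r_i^\text{tt}$, and multiplying by $\kappa_\text{tt}^2(\calX^\star)$ to convert $1/\underline\sigma^2_\text{tt}$ into $\ol\sigma^2_\text{tt}/\underline\sigma^4_\text{tt}$ inside $\text{dist}^2_\text{tt}$, produces the stated constant $\tfrac{1}{8(N+1+\sum_{i=2}^{N-1}r_i^\text{tt})\kappa_\text{tt}^2(\calX^\star)}$.

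Since the statement explicitly says the result "directly follows \cite[Lemma 1]{qin2024guaranteed}," the cleanest route is simply to cite that lemma; but if a self-contained argument is wanted, the plan above is the one to execute. The main obstacle — and the place where the TT case genuinely differs from the Tucker case — is the bookkeeping of the \emph{internal} TT ranks $r_i^\text{tt}$ in the cross-term count: unlike Tucker, where every cross-term of \eqref{TRANSFORMATION OF Tucker format} contributes a uniform $\tfrac94\ol\sigma^2_\text{tk}$ factor, here a cross-term straddling modes $i$ and $j$ carries a middle product $\mX_{i+1}^\star\cdots\mX_{j-1}^\star$ whose spectral norm is bounded but which, when summed against the Frobenius-norm error of an interior factor, forces one to account for the rank $r_i^\text{tt}$ of the intervening bond; getting the coefficient $N+1+\sum_{i=2}^{N-1}r_i^\text{tt}$ (rather than a crude $\overline r\,$-dependent bound) is the delicate step, and it is precisely what Lemma~1 of \cite{qin2024guaranteed} handles. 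I would therefore state the lemma, note that the TT part is \cite[Lemma 1]{qin2024guaranteed}, and refer the reader there for the detailed constant-tracking.
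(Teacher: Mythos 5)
Your proposal ultimately does exactly what the paper does: the paper offers no proof of this lemma beyond citing \cite[Lemma~1]{qin2024guaranteed} (the main text says the TT result ``directly follows'' that lemma, in contrast to the Tucker case, which is proved in \Cref{Lower bound of two distance in the tucker tensor_1}), so deferring the constant-tracking --- in particular the $N+1+\sum_{i=2}^{N-1}r_i^\text{tt}$ coefficient in the lower bound --- to that reference is the intended and correct route. Your accompanying sketch (telescoping via \Cref{TRANSFORMATION OF NPLUS2 VARIABLES_1} for the upper bound, \Cref{left ortho upper bound} plus cross-term bookkeeping for the lower bound) is consistent with how the paper handles the analogous Tucker case, and you correctly identify the internal-rank bookkeeping as the step that is not spelled out here.
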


Then, we introduce a new operation related to the multiplication of submatrices within the left unfolding matrices $L(\mX_i)=\begin{bmatrix}\mX_i(1) \\ \vdots\\  \mX_i(d_i) \end{bmatrix}\in\R^{(r_{i-1}d_i) \times r_i}, i \in [N]$. For simplicity, we will only consider the case $d_i=2$, but extending to the general case is straightforward.

Let $\mA=\begin{bmatrix}\mA_1 \\ \mA_2 \end{bmatrix}$ and $\mB=\begin{bmatrix}\mB_1 \\ \mB_2 \end{bmatrix}$ be two block matrices, where $\mA_i\in\R^{r_1\times r_2}$ and $\mB_i\in\R^{r_2\times r_3}$ for $i=1,2$. We introduce the notation $\ol \otimes$ to represent the Kronecker product between submatrices in the two block matrices, as an alternative to the standard Kronecker product based on element-wise multiplication.
Specifically, we define $\mA\ol \otimes\mB$ as
\begin{eqnarray}
    \label{KRONECKER PRODUCT VECTORIZATION}
    &&\mA\ol \otimes\mB=\begin{bmatrix}\mA_1 \\ \mA_2 \end{bmatrix}\ol \otimes\begin{bmatrix}\mB_1 \\ \mB_2 \end{bmatrix}=\begin{bmatrix}\mA_1\mB_1 \\ \mA_2\mB_1 \\ \mA_1\mB_2 \\ \mA_2\mB_2 \end{bmatrix}.
\end{eqnarray}

According to \cite[Lemma 2]{qin2024guaranteed}, we can conclude that for any left orthogonal TT format $\calX^\star = [\mX_1^\star,\dots, \mX_N^\star]$, we have
\begin{eqnarray}
    \label{KRONECKER PRODUCT VECTORIZATION11}
    &&\|\calX^\star\|_F = \|\text{vec}(\calX^\star)\|_2=\|L(\mX_1^\star) \ol \otimes \cdots \ol \otimes L(\mX_N^\star)\|_2 = \|L(\mX_N^\star)\|_2, \\
    \label{KRONECKER PRODUCT VECTORIZATION11 - 2}
    &&\|L(\mX_i^\star) \ol \otimes \cdots  \ol \otimes L(\mX_j^\star)\|\leq \Pi_{l = i}^{j}\|L(\mX_l^\star)\| = 1, \ \ i\leq j, \ \ \forall i, j\in[N-1], \\
        \label{KRONECKER PRODUCT VECTORIZATION11 - 3}
    &&\|L(\mX_i^\star) \ol \otimes \cdots  \ol \otimes L(\mX_j^\star)\|_F\leq \Pi_{l = i}^{j-1}\|L(\mX_l^\star)\| \|L(\mX_j^\star)\|_F, \ \ i\leq j, \ \ \forall i, j\in[N-1].
\end{eqnarray}

In addition, according to \eqref{KRONECKER PRODUCT VECTORIZATION}, we note that each row in $L(\mX_1^\star) \ol \otimes \cdots \ol \otimes L(\mX_i^\star)$ can be represented as
\begin{eqnarray}
    \label{each row in Kronecker product L}
    (L(\mX_1^\star) \ol \otimes \cdots \ol \otimes L(\mX_i^\star))(s_1\cdots s_i,:) &\!\!\!\!=\!\!\!\!&(L(\mX_1^\star) \ol \otimes \cdots \ol \otimes L(\mX_i^\star))(s_1 + d_1(s_2-2) + \cdots + d_1\cdots d_{i-1}(s_i-1),:) \nonumber\\
    &\!\!\!\!=\!\!\!\!& \mX_1(s_1)\cdots \mX_i(s_i).
\end{eqnarray}

\section{Riemannian regularity condition for Tucker format}
\label{proof of regularity condition of general loss in Tucker format}

In this section, we first define the loss function for the Tucker format as following:
\begin{align}
    \label{Loss function of teucker tensor general in intro}
    \begin{split}
\min_{\mbox{\tiny$\begin{array}{c}
     \mU_i\in\R^{d_i\times r_i^\text{tk}}, i\in[N],\\
     \calS\in\R^{r_1^\text{tk}\times  \cdots \times r_N^\text{tk}}\end{array}$}}\!\! H_1(\mU_1,\dots,\mU_N, \calS) & = h([\![\calS; \mU_1, \dots, \mU_N ]\!]),\\
\st \ \mU_i^\top\mU_i&=\mId_{r_i^\text{tk}}, \  i\in [N].
    \end{split}
\end{align}

Then, to avoid confusion we define
\begin{eqnarray}
    \label{The distance of factors in Tucker tensor appe}
    \text{dist}_\text{tk}^2(\{\mU_i,\calS \}, \{\mU_i^\star,\calS^\star \})=\min_{\substack{ \mR_i\in\O^{r_i^\text{tk}\times r_i^\text{tk}}, \\ i\in[N]  }}\sum_{i=1}^N\overline{\sigma}^2_{\text{tk}}(\calX^\star)\|\mU_i-\mU_i^\star\mR_i\|_F^2+\|\calS-[\![\calS^\star; \mR_1^\top,\dots, \mR_N^\top  ]\!]  \|_F^2.
\end{eqnarray}
Now we can establish the Riemannian regularity condition as follows:

\begin{lemma}
\label{regularity condition of general loss in the Tucker format}
Let a rank-$(r_1^\text{tk},\dots, r_{N}^\text{tk})$ Tucker format $\calX^\star=[\![\calS^\star; \mU_1^\star, \dots, \mU_N^\star ]\!]$ be a target tensor of a loss function $h$ in \eqref{Loss function of teucker tensor general in intro} which satisfies the RCG condition with   the set $\calC_\text{tk}(c^\text{tk}a_1^\text{tk})$ as following:
\begin{eqnarray}
    \label{definition of the intial set of generel loss in Tucker}
    \calC_\text{tk}(a_1^\text{tk}) : =\bigg\{\calX=[\![\calS; \mU_1, \dots, \mU_N]\!]: \|\calX - \calX^\star\|_F^2\leq c^\text{tk} a_1^\text{tk} \bigg\},
\end{eqnarray}
where $a_1^\text{tk} = \frac{8\alpha\beta\underline{\sigma}_{\text{tk}}^2(\calX^\star)}{9N(N+2)(9N^2+14N+1)}$ and $c^\text{tk} = \frac{1}{(9N^2+14N+1)\kappa^2_\text{tk}(\calX^\star)}$. The loss function $H_1$ in \eqref{Loss function of teucker tensor general in intro} satisfies a Riemannian regularity condition, denoted by $\text{RRC}(a_2^\text{tk},a_3^\text{tk})$, if for all  $\{\mU_i,\calS \}\in\{\{\mU_i,\calS \}: \text{dist}_\text{tk}^2(\{\mU_i,\calS \}, \{\mU_i^\star,\calS^\star \})\leq a_1^\text{tk}  \}$ in the Tucker format,  the following inequality holds:
\begin{align}
    \label{definition of regularity condition for tucker gene loss}
    &\hspace{0.4cm} \sum_{i=1}^N\bigg\<\mU_i-\mU_i^\star\mR_i, \calP_{\text{T}_{\mU_i} \text{St}}(\nabla_{\mU_i}H_1(\mU_1,\dots,\mU_N, \calS ))\bigg\>+\bigg\<\calS-[\![\calS^\star; \mR_{1}^\top,\dots, \mR_{N}^\top  ]\!], \nabla_{\calS}H_1(\mU_1,\dots,\mU_N, \calS ) \bigg\>\nonumber\\
    &\geq a_2^\text{tk}\text{dist}_\text{tk}^2(\{\mU_i,\calS \}, \{\mU_i^\star,\calS^\star \})
    +a_3^\text{tk}\bigg(\frac{1}{\overline{\sigma}_{\text{tk}}^2(\calX^\star)}\sum_{i=1}^N\|\calP_{\text{T}_{\mU_i} \text{St}}(\nabla_{\mU_i}H_1(\mU_1,\dots, \mU_N,\calS ))\|_F^2+\|\nabla_{\calS}H_1(\mU_1,\dots,\mU_N, \calS )\|_F^2  \bigg),
\end{align}
where $a_2^\text{tk} = \frac{\alpha}{2(9N^2+14N+1)\kappa^2_\text{tk}(\calX^\star)}$, $a_3^\text{tk} = \frac{\beta}{9N+4}$ and $(\mR_1,\dots, \mR_{N}) = \argmin_{\wt\mR_i\in\O^{r_i^\text{tk}\times r_i^\text{tk}}, i\in[{N}]}\text{dist}_\text{tk}^2(\{\mU_i,\calS \}, \{\mU_i^\star,\calS^\star \})$.
\end{lemma}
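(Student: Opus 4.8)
The plan is to derive the Riemannian regularity condition \eqref{definition of regularity condition for tucker gene loss} from the $\text{RCG}(\alpha,\beta,\cdot)$ hypothesis on $h$ through two reductions. First, rewrite the tangent-space terms on the left of \eqref{definition of regularity condition for tucker gene loss} as Euclidean inner products of the Euclidean factor gradients minus a correction that is quadratically small in the factor errors. Second, use the chain rule together with a telescoping identity to collapse those Euclidean gradient terms into the single tensor quantity $\langle\nabla h(\calX)-\nabla h(\calX^\star),\calX-\calX^\star\rangle$ plus a lower-order error tensor, after which the RCG inequality applies directly. Throughout I abbreviate $\calX=[\![\calS;\mU_1,\dots,\mU_N]\!]$, $\calT_i=\mU_i^\star\mR_i$, $\calG=[\![\calS^\star;\mR_1^\top,\dots,\mR_N^\top]\!]$ so that $\calX^\star=[\![\calG;\calT_1,\dots,\calT_N]\!]$, $\mW_i=\mU_i-\calT_i$, $\mW_\calS=\calS-\calG$, and $D:=\text{dist}_\text{tk}^2(\{\mU_i,\calS\},\{\mU_i^\star,\calS^\star\})=\sum_{i=1}^N\overline{\sigma}_{\text{tk}}^2(\calX^\star)\|\mW_i\|_F^2+\|\mW_\calS\|_F^2$ for the optimal rotations $\mR_i$.

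I first record three consequences of the hypothesis $D\le a_1^\text{tk}$ that will be used repeatedly: (a) $\overline{\sigma}_{\text{tk}}(\calX)\le\tfrac32\overline{\sigma}_{\text{tk}}(\calX^\star)$, since $\overline{\sigma}_{\text{tk}}(\calX)=\max_i\|\calM_i(\calS)\|\le\overline{\sigma}_{\text{tk}}(\calX^\star)+\|\mW_\calS\|_F$ and $a_1^\text{tk}$ is far below $\tfrac14\overline{\sigma}_{\text{tk}}^2(\calX^\star)$ (note $\alpha\beta\le\tfrac14$, by Cauchy--Schwarz applied to the RCG inequality); (b) $\calX$ lies in the set on which the RCG condition is assumed, via the upper bound $\|\calX-\calX^\star\|_F^2\le\tfrac{9(N+1)}{4}D$ of \Cref{Lower bound of two distance in the tucker tensor_1}; and (c) the matching lower bound $\|\calX-\calX^\star\|_F^2\ge\tfrac{1}{(9N^2+14N+1)\kappa_{\text{tk}}^2(\calX^\star)}D=2a_2^\text{tk}D$ from the same lemma. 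For the first reduction, using $\calP_{\text{T}_{\mU_i}\text{St}}(\mB)=\mB-\tfrac12\mU_i(\mB^\top\mU_i+\mU_i^\top\mB)$ together with the identity $\mU_i^\top\mW_i+\mW_i^\top\mU_i=\mW_i^\top\mW_i$ (valid since both $\mU_i$ and $\calT_i$ are column-orthonormal), a short computation gives $\langle\mW_i,\calP^\perp_{\text{T}_{\mU_i}\text{St}}(\nabla_{\mU_i}H_1)\rangle=\tfrac12\langle\mW_i^\top\mW_i,\mU_i^\top\nabla_{\mU_i}H_1\rangle$, whose modulus is at most $\tfrac12\|\mW_i\|_F^2\|\nabla_{\mU_i}H_1\|_F$. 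Hence the left side of \eqref{definition of regularity condition for tucker gene loss} equals $\sum_i\langle\mW_i,\nabla_{\mU_i}H_1\rangle+\langle\mW_\calS,\nabla_\calS H_1\rangle$ minus this quadratically small correction.

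For the second reduction, the chain rule expresses $\nabla_{\mU_i}H_1$ as $\calM_i(\nabla h(\calX))$ right-multiplied by the Kronecker product of the remaining $\mU_j$'s and by $\calM_i(\calS)^\top$, and $\nabla_\calS H_1=\nabla h(\calX)\times_1\mU_1^\top\times_2\cdots\times_N\mU_N^\top$; therefore $\sum_i\langle\mW_i,\nabla_{\mU_i}H_1\rangle+\langle\mW_\calS,\nabla_\calS H_1\rangle=\big\langle\nabla h(\calX),\ \sum_i[\![\calS;\mU_1,\dots,\mW_i,\dots,\mU_N]\!]+[\![\mW_\calS;\mU_1,\dots,\mU_N]\!]\big\rangle$. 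Telescoping first over the core ($\calS=\mW_\calS+\calG$) and then over the factors ($\mU_i=\mW_i+\calT_i$), which is precisely the algebraic identity of \Cref{TRANSFORMATION OF N+1 to N(N+1)}, rewrites the tensor inside this inner product as $(\calX-\calX^\star)+\calE$, where $\calE$ is a sum of $\tfrac{N(N+1)}{2}$ Tucker tensors, each with core $\calS$ or $\mW_\calS$ and carrying at least two "difference" slots among $\{\mW_1,\dots,\mW_N,\mW_\calS\}$; bounding each term by \Cref{LGNTuckerTF1_5_1} using the orthonormality of $\mU_i,\calT_i$ and $\overline{\sigma}_{\text{tk}}(\calX)\le\tfrac32\overline{\sigma}_{\text{tk}}(\calX^\star)$ gives $\|\calE\|_F\lesssim N\,D/\overline{\sigma}_{\text{tk}}(\calX^\star)$. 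Since $\nabla h(\calX^\star)=0$ in all applications of interest (the target is a critical point of $h$), the RCG condition yields $\langle\nabla h(\calX),\calX-\calX^\star\rangle\ge\alpha\|\calX-\calX^\star\|_F^2+\beta\|\nabla h(\calX)\|_F^2$, so combining the two reductions the left side of \eqref{definition of regularity condition for tucker gene loss} is at least $\alpha\|\calX-\calX^\star\|_F^2+\beta\|\nabla h(\calX)\|_F^2-|\langle\nabla h(\calX),\calE\rangle|-\sum_i\tfrac12|\langle\mW_i^\top\mW_i,\mU_i^\top\nabla_{\mU_i}H_1\rangle|$.

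The last step is to reconcile this with the right side of \eqref{definition of regularity condition for tucker gene loss}. The operator-norm bounds $\|\nabla_{\mU_i}H_1\|_F\le\overline{\sigma}_{\text{tk}}(\calX)\|\nabla h(\calX)\|_F\le\tfrac32\overline{\sigma}_{\text{tk}}(\calX^\star)\|\nabla h(\calX)\|_F$ and $\|\nabla_\calS H_1\|_F\le\|\nabla h(\calX)\|_F$, together with the nonexpansiveness of $\calP_{\text{T}_{\mU_i}\text{St}}$, show that the gradient contribution on the right of \eqref{definition of regularity condition for tucker gene loss} is at most $\tfrac{9N+4}{4}\|\nabla h(\calX)\|_F^2$, so the term weighted by $a_3^\text{tk}=\tfrac{\beta}{9N+4}$ consumes at most $\tfrac{\beta}{4}\|\nabla h(\calX)\|_F^2$ and leaves $\tfrac{3\beta}{4}\|\nabla h(\calX)\|_F^2$. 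Both error terms are $\lesssim N\,(D/\overline{\sigma}_{\text{tk}}(\calX^\star))\,\|\nabla h(\calX)\|_F$, and an AM--GM split sends one part into $\tfrac{3\beta}{4}\|\nabla h(\calX)\|_F^2$ and leaves a remainder of order $(N^2/\overline{\sigma}_{\text{tk}}^2(\calX^\star))\,D^2$, which is $\le a_2^\text{tk}D$ once $D\le a_1^\text{tk}$ with $a_1^\text{tk}$ chosen as stated; combining with $\alpha\|\calX-\calX^\star\|_F^2\ge2a_2^\text{tk}D$ leaves exactly $a_2^\text{tk}D+a_3^\text{tk}(\cdots)$ on the right. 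I expect the only real difficulty to be precisely this bookkeeping: one must carry the $N$-dependence through the $\tfrac{N(N+1)}{2}$ cross terms of $\calE$, the $N$ tangent corrections and the quadratic gradient bounds, calibrating the partition of $\alpha\|\calX-\calX^\star\|_F^2$ and $\beta\|\nabla h(\calX)\|_F^2$ so that the constants come out as $a_1^\text{tk}=\tfrac{8\alpha\beta\underline{\sigma}_{\text{tk}}^2(\calX^\star)}{9N(N+2)(9N^2+14N+1)}$, $a_2^\text{tk}=\tfrac{\alpha}{2(9N^2+14N+1)\kappa_{\text{tk}}^2(\calX^\star)}$, $a_3^\text{tk}=\tfrac{\beta}{9N+4}$ --- in particular all three decay only polynomially in $N$ --- while handling the unconstrained core block $\calS$ (no tangent projection, weighted by $1$ in $D$) as its own case throughout.
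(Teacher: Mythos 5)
Your proposal follows essentially the same route as the paper's proof: the same splitting of the tangent-space inner product into a Euclidean cross term plus the quadratic correction $\tfrac12\langle\mW_i^\top\mW_i,\mU_i^\top\nabla_{\mU_i}H_1\rangle$ (the paper's $T_1$), the same telescoping identity (\Cref{TRANSFORMATION OF N+1 to N(N+1)}) producing the $\tfrac{N(N+1)}{2}$-term error tensor, the same $\tfrac{9N+4}{4}$ gradient bound yielding $a_3^\text{tk}$, and the same absorption of the quartic $\mathrm{dist}^4$ remainder via \Cref{Lower bound of two distance in the tucker tensor_1} and the smallness of $a_1^\text{tk}$. The plan is correct; only the explicit constant-tracking remains to be written out, and it lands on the stated values just as in the paper.
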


\begin{proof}
Before proving \Cref{regularity condition of general loss in the Tucker format}, we first introduce one useful property of $\calS$. Specifically, we get
\begin{eqnarray}
    \label{Upper bound of calS}
    \|\calS\|^2 &\!\!\!\!\leq\!\!\!\!& 2\|[\![\calS^\star; \mR_{1}^\top,\dots, \mR_{N}^\top  ]\!]\|^2+2\|\calS-[\![\calS^\star; \mR_{1}^\top,\dots, \mR_{N}^\top  ]\!]\|^2\nonumber\\
    &\!\!\!\!\leq \!\!\!\!& 2\overline{\sigma}_{\text{tk}}^2(\calX^\star) + 2\text{dist}_\text{tk}^2(\{\mU_i,\calS \}, \{\mU_i^\star,\calS^\star \})\nonumber\\
    &\!\!\!\!\leq\!\!\!\!& 2\overline{\sigma}_{\text{tk}}^2(\calX^\star) + \frac{16\alpha\beta\underline{\sigma}_{\text{tk}}^2(\calX^\star)}{9N(N+2)(9N^2+14N+1)}< \frac{9\overline{\sigma}_{\text{tk}}^2(\calX^\star)}{4},
\end{eqnarray}
where we use the inequality $\alpha\beta\leq \frac{1}{4}$ \cite{Han20}.

In addition, the gradients $\{\nabla_{\mU_i}H_1(\mU_1,\dots,\mU_N, \calS )\}$ and $\nabla_{\calS}H_1(\mU_1,\dots,\mU_N, \calS )$ have been defined as follows:
\begin{eqnarray}
    \label{gradients of Riemannain gradient descent algorithm factorization 1toN general}
    &&\hspace{-0.5cm}\nabla_{\mU_i}H_1(\mU_1,\dots,\mU_N, \calS )=\calM_i(\nabla_{\calX} h(\calX))(\mU_{i-1}\otimes\cdots\otimes\mU_1\otimes\mU_N\otimes\cdots\otimes\mU_{i+1})\calM_i(\calS)^\top, i \in [N],\\
    \label{gradients of Riemannain gradient descent algorithm factorization S general}
    &&\hspace{-0.5cm}\nabla_{\calS}H_1(\mU_1,\dots,\mU_N, \calS )=\nabla_{\calX} h(\calX)\times_1 \mU_1^\top\times_2  \cdots \times_N \mU_N^\top.
\end{eqnarray}

\paragraph{Upper bound of the squared term in \eqref{definition of regularity condition for tucker gene loss}} We first apply norm inequalities to derive
\begin{eqnarray}
    \label{Upper bound of squared term in the tensor general Riemannian 1ToN}
    &\!\!\!\!\!\!\!\!&\|\nabla_{\mU_i}H_1(\mU_1,\dots,\mU_N, \calS )\|_F^2\nonumber\\
    &\!\!\!\!\leq\!\!\!\!&\|\nabla_{\calX} h(\calX)-\nabla_{\calX} h(\calX^\star)\|_F^2\|\calS\|^2\nonumber\\
    &\!\!\!\!\leq\!\!\!\!&\frac{9\overline{\sigma}_{\text{tk}}^2(\calX^\star)}{4}\|\nabla_{\calX} h(\calX)-\nabla_{\calX} h(\calX^\star)\|_F^2, i\in[N],
\end{eqnarray}
where we use $\nabla_{\calX} h(\calX^\star) = 0$. Similarly, we can also obtain
\begin{eqnarray}
    \label{Upper bound of squared term in the tensor general Riemannian S}
    \|\nabla_{\calS}H_1(\mU_1,\dots,\mU_N, \calS )\|_F^2\leq\|\nabla_{\calX} h(\calX)-\nabla_{\calX} h(\calX^\star)\|_F^2.
\end{eqnarray}

Combining \eqref{Upper bound of squared term in the tensor general Riemannian 1ToN} and \eqref{Upper bound of squared term in the tensor general Riemannian S}, we can derive
\begin{eqnarray}
    \label{Upper bound of squared term in the tensor general Riemannian Conclusion}
    &\!\!\!\!\!\!\!\!&\sum_{i=1}^N\frac{1}{\overline{\sigma}_{\text{tk}}^2(\calX^\star)}\|\calP_{\text{T}_{\mU_i} \text{St}}(\nabla_{\mU_i}H_1(\mU_1,\dots, \mU_N,\calS))\|_F^2+\|\nabla_{\calS}H_1(\mU_1,\dots,\mU_N, \calS )\|_F^2\nonumber\\
    &\!\!\!\!\leq\!\!\!\!&\sum_{i=1}^N\frac{1}{\overline{\sigma}_{\text{tk}}^2(\calX^\star)}\|\nabla_{\mU_i}H_1(\mU_1,\dots, \mU_N,\calS)\|_F^2+\|\nabla_{\calS}H_1(\mU_1,\dots,\mU_N, \calS )\|_F^2\nonumber\\
    &\!\!\!\!\leq\!\!\!\!&\frac{9N+4}{4}\|\nabla_{\calX} h(\calX)-\nabla_{\calX} h(\calX^\star)\|_F^2,
\end{eqnarray}
where the first inequality follows from the fact that for any matrix $\mB = \calP_{\text{T}_{L({\mX}_i)} \text{St}}(\mB) + \calP_{\text{T}_{L({\mX}_{i})} \text{St}}^{\perp}(\mB)$ where $\calP_{\text{T}_{L({\mX}_i)} \text{St}}(\mB)$ and $\calP_{\text{T}_{L({\mX}_{i})} \text{St}}^{\perp}(\mB)$ are orthogonal, we have $\|\calP_{\text{T}_{L({\mX}_i)} \text{St}}(\mB)\|_F^2\leq \|\mB\|_F^2$.

\paragraph{Lower bound of the cross term in \eqref{definition of regularity condition for tucker gene loss}} We first rewrite the cross term in \eqref{definition of regularity condition for tucker gene loss} as follows:
\begin{align}
    \label{Lower bound of cross term in the Riemannain Tucker general_ original}
    &\sum_{i=1}^N\<\mU_i-\mU_i^\star\mR_i, \calP_{\text{T}_{\mU_i} \text{St}}(\nabla_{\mU_i}H_1(\mU_1,\dots,\mU_N, \calS ))\>+\<\calS-[\![\calS^\star; \mR_{1}^\top,\dots, \mR_{N}^\top  ]\!], \nabla_{\calS}H_1(\mU_1,\dots,\mU_N, \calS ) \>\nonumber\\
    &=\bigg\<(N+1)\calX-\sum_{i=1}^N\calS\times_1\mU_1\times_2\cdots\times_{i-1}\mU_{i-1}\times_i\mU_i^\star\mR_i\times_{i+1}\mU_{i+1}\times_{i+2}\cdots\times_N\mU_N
    -\calS^\star\times_1\mU_1\mR_{1}^\top\nonumber\\
    &\times_2\cdots\times_N \mU_N\mR_{N}^\top, \nabla_{\calX} h(\calX)-\nabla_{\calX} h(\calX^\star) \>- T_1\nonumber\\
    &=\<\calX-\calX^\star+\calH, \nabla_{\calX} h(\calX)-\nabla_{\calX} h(\calX^\star)\>  - T_1,
\end{align}
where we define
\begin{eqnarray}
    \label{The definition of H_t}
    \mathcal{H}&\!\!\!\!=\!\!\!\!& \calX^\star  -\calS^\star\times_1\mU_1\mR_{1}^\top\times_2{\mU_{2}} \mR_{2}^\top\times_3\cdots\times_N \mU_N\mR_{N}^\top\nonumber\\
    &\!\!\!\!\!\!\!\!&  +  \sum_{i=1}^N  \calS\times_1\mU_1\times_2\cdots\times_{i-1}\mU_{i-1}\times_i( \mU_i -  \mU_i^\star
    \mR_i)\times_{i+1}\mU_{i+1}\times_{i+2}\cdots\times_N\mU_N,
\end{eqnarray}
and
\begin{eqnarray}
    \label{The definition of T1 Tucker}
    T_1=\sum_{i=1}^N\<\mU_i-\mU_i^\star\mR_i, \calP^\perp_{\text{T}_{\mU_i} \text{St}}(\nabla_{\mU_i}H_1(\mU_1,\dots, \mU_N,\calS ))\>.
\end{eqnarray}

To further analyze the lower bound of \eqref{Lower bound of cross term in the Riemannain Tucker general_ original}, we need to derive the upper bounds of \eqref{The definition of H_t} and \eqref{The definition of T1 Tucker}. Specifically, we respectively have
\begin{eqnarray}
    \label{The upper bound of H_t}
    \|\calH\|_F^2&\!\!\!\!=\!\!\!\!&\bigg\|\sum_{i=1}^{N-1}\sum_{j=i+1}^N \calS\times_1\mU_1\times_2\cdots\times_{i-1}\mU_{i-1}\times_i(\mU_i-\mU_i^\star\mR_i)\times_{i+1}\mU_{i+1}^\star\mR_{i+1}\times_{i+2}\nonumber\\
    &\!\!\!\!\!\!\!\!&\cdots \times_{j-1}\mU_{j-1}^\star\mR_{j-1}\times_j ({\mU_j}-\mU_{j}^\star{\mR_j})\times_{j+1}\cdots\times_N\mU_N + \sum_{k=1}^N(\calS-[\![\calS^\star; \mR_{1}^\top,\dots, \mR_{N}^\top  ]\!])\nonumber\\
    &\!\!\!\!\!\!\!\!&\times_1\mU_1\times_2\cdots\times_{k-1}\mU_{k-1}\times_k(\mU_k
    -\mU_k^\star\mR_{k})\times_{k+1}\mU_{k+1}^\star{\mR_{k+1}}\times_{k+2}\cdots\times_{N}\mU_{N}^\star{\mR_N}\bigg\|_F^2\nonumber\\
    &\!\!\!\!\leq\!\!\!\!&\frac{N(N+1)}{2}\bigg(\frac{9}{4}\sum_{i=1}^{N-1}\|\mU_i - \mU_i^\star\mR_i \|_F^2\bigg(\sum_{j=i+1}^{N} \overline{\sigma}_{\text{tk}}^2(\calX^\star)\|{\mU_j} - \mU_j^\star\mR_j \|_F^2 + \|\calS-[\![\calS^\star; \mR_1^\top,\dots, \mR_N^\top  ]\!]\|_F^2\bigg)\nonumber\\
    &\!\!\!\!\!\!\!\!&+\|\mU_N-\mU_N^\star\mR_N\|_F^2\|\calS-[\![\calS^\star; \mR_1^\top,\dots, \mR_N^\top  ]\!]\|_F^2\bigg)\nonumber\\
    &\!\!\!\!\leq\!\!\!\!&\frac{9N(N+1)}{8}\sum_{i=1}^N\|\mU_i-\mU_i^\star\mR_i\|_F^2\text{dist}_\text{tk}^2(\{\mU_i,\calS \}, \{\mU_i^\star,\calS^\star \})\nonumber\\
    &\!\!\!\!\leq\!\!\!\!&\frac{9N(N+1)}{8\overline{\sigma}_{\text{tk}}^2(\calX^\star)}\text{dist}_\text{tk}^4(\{\mU_i,\calS \}, \{\mU_i^\star,\calS^\star \}),
\end{eqnarray}
where \Cref{TRANSFORMATION OF N+1 to N(N+1)} is used in the first line; and following the another expression of orthogonal complement for $\mU_i-\mU_i^\star\mR_i$: $\calP^\perp_{\text{T}_{\mU_i} \text{St}}(\mU_i-\mU_i^\star\mR_i)=\frac{1}{2}\mU_i\bigg((\mU_i-\mU_i^\star\mR_i)^\top\mU_i
    +\mU_i^\top(\mU_i-\mU_i^\star\mR_i)\bigg)=\frac{1}{2}\mU_i(2\mId_{r_i^\text{tk}}-\mR_i^\top{\mU_i^\star}^\top\mU_i-\mU_i^\top\mU_i^\star\mR_i )=\frac{1}{2}\mU_i(\mU_i-\mU_i^\star\mR_i)^\top(\mU_i-\mU_i^\star\mR_i)$,
we have
\begin{eqnarray}
    \label{Upper bound of orthogonoal projection of gradient in the Stiefel general}
    T_1&\!\!\!\!=\!\!\!\!&\sum_{i=1}^N\<\mU_i-\mU_i^\star\mR_i, \calP^\perp_{\text{T}_{\mU_i} \text{St}}(\nabla_{\mU_i}H_1(\mU_1,\dots, \mU_N,\calS ))\>\nonumber\\
    &\!\!\!\!=\!\!\!\!&\sum_{i=1}^N\<\calP^\perp_{\text{T}_{\mU_i} \text{St}}(\mU_i-\mU_i^\star\mR_i),\nabla_{\mU_i}H_1(\mU_1,\dots,\mU_N, \calS )\>\nonumber\\
    &\!\!\!\!=\!\!\!\!&\frac{1}{2}\sum_{i=1}^N\< \mU_i(\mU_i-\mU_i^\star\mR_i)^\top(\mU_i-\mU_i^\star\mR_i),  \nabla_{\mU_i}H_1(\mU_1,\dots, \mU_N, \calS ) \>\nonumber\\
    &\!\!\!\!\leq\!\!\!\!&\frac{1}{2}\sum_{i=1}^N\|\mU_i\| \|(\mU_i-\mU_i^\star\mR_i)^\top(\mU_i-\mU_i^\star\mR_i)\|_F\|\nabla_{\mU_i}H_1(\mU_1,\dots, \mU_N, \calS )\|_F\nonumber\\
    &\!\!\!\!\leq\!\!\!\!&\frac{3\overline{\sigma}_{\text{tk}}^2(\calX^\star)}{4}\sum_{i=1}^N\|\mU_i-\mU_i^\star\mR_i\|_F^2\|\nabla_{\calX} h(\calX)-\nabla_{\calX} h(\calX^\star)\|_F\nonumber\\
    &\!\!\!\!\leq\!\!\!\!&\frac{\beta}{4}\|\nabla_{\calX} h(\calX)-\nabla_{\calX} h(\calX^\star)\|_F^2+\frac{9\overline{\sigma}_{\text{tk}}^2(\calX^\star)N}{16\beta}\sum_{i=1}^N\|\mU_i-\mU_i^\star\mR_i\|_F^4\nonumber\\
    &\!\!\!\!\leq\!\!\!\!&\frac{\beta}{4}\|\nabla_{\calX} h(\calX)-\nabla_{\calX} h(\calX^\star)\|_F^2+\frac{9N}{16\beta\overline{\sigma}_{\text{tk}}^2(\calX^\star)}\text{dist}_\text{tk}^4(\{\mU_i,\calS \}, \{\mU_i^\star,\calS^\star \}),
\end{eqnarray}
where \eqref{Upper bound of squared term in the tensor general Riemannian 1ToN} is utilized in the second inequality.

Now by $\nabla_{\calX} h(\calX^\star) = 0$, we can bound the cross term of \eqref{Lower bound of cross term in the Riemannain Tucker general_ original} as follows:
\begin{align}
    \label{Lower bound of cross term in the Riemannain Tucker general}
    &\sum_{i=1}^N\<\mU_i-\mU_i^\star\mR_i, \calP_{\text{T}_{\mU_i} \text{St}}(\nabla_{\mU_i}H_1(\mU_1,\dots,\mU_N, \calS ))\>+\<\calS-[\![\calS^\star; \mR_{1}^\top,\dots, \mR_{N}^\top  ]\!], \nabla_{\calS}H_1(\mU_1,\dots,\mU_N, \calS ) \>\nonumber\\
    &\geq\alpha\|\calX-\calX^\star\|_F^2-\frac{1}{2\beta}\|\calH\|_F^2+\frac{\beta}{4}\|\nabla_{\calX} h(\calX)-\nabla_{\calX} h(\calX^\star)\|_F^2-\frac{9N}{16\beta\overline{\sigma}_{\text{tk}}^2(\calX^\star)}
    \text{dist}_\text{tk}^4(\{\mU_i,\calS \}, \{\mU_i^\star,\calS^\star \})\nonumber\\
    &\geq\alpha\|\calX-\calX^\star\|_F^2-\frac{9N(N+2)}{16\beta\overline{\sigma}_{\text{tk}}^2(\calX^\star)}
    \text{dist}_\text{tk}^4(\{\mU_i,\calS \}, \{\mU_i^\star,\calS^\star \})+\frac{\beta}{4}\|\nabla_{\calX} h(\calX)-\nabla_{\calX} h(\calX^\star)\|_F^2\nonumber\\
    &\geq\frac{\alpha}{(9N^2+14N+1)\kappa^2_\text{tk}(\calX^\star)}\text{dist}_\text{tk}^2(\{\mU_i,\calS \}, \{\mU_i^\star,\calS^\star \})-\frac{9N(N+2)}{16\beta\overline{\sigma}_{\text{tk}}^2(\calX^\star)}
    \text{dist}_\text{tk}^4(\{\mU_i,\calS \}, \{\mU_i^\star,\calS^\star \})\nonumber\\
    &+\frac{\beta}{4}\|\nabla_{\calX} h(\calX)-\nabla_{\calX} h(\calX^\star)\|_F^2\nonumber\\
    &\geq\frac{\alpha}{2(9N^2+14N+1)\kappa^2_\text{tk}(\calX^\star)}\text{dist}_\text{tk}^2(\{\mU_i,\calS \}, \{\mU_i^\star,\calS^\star \})+\frac{\beta}{4}\|\nabla_{\calX} h(\calX)-\nabla_{\calX} h(\calX^\star)\|_F^2,
\end{align}
where the first and second inequalities respectively follow \eqref{Upper bound of orthogonoal projection of gradient in the Stiefel general}, the RCG condition, and \eqref{The upper bound of H_t}. In addition, we make use of \Cref{Lower bound of two distance in the tucker tensor_1} and $\text{dist}_\text{tk}^2(\{\mU_i,\calS \}, \{\mU_i^\star,\calS^\star \})\leq \frac{8\alpha\beta\underline{\sigma}_{\text{tk}}^2(\calX^\star)}{9N(N+2)(9N^2+14N+1)}$ in the penultimate and last lines.

\paragraph{Contraction} By \eqref{Upper bound of squared term in the tensor general Riemannian Conclusion} and \eqref{Lower bound of cross term in the Riemannain Tucker general}, we can obtain
\begin{align}
    \label{regularity condition for tucker gene loss in the appendix}
    &\hspace{0.4cm} \sum_{i=1}^N\bigg\<\mU_i-\mU_i^\star\mR_i, \calP_{\text{T}_{\mU_i} \text{St}}(\nabla_{\mU_i}H_1(\mU_1,\dots,\mU_N, \calS ))\bigg\>+\bigg\<\calS-[\![\calS^\star; \mR_{1}^\top,\dots, \mR_{N}^\top  ]\!], \nabla_{\calS}H_1(\mU_1,\dots,\mU_N, \calS ) \bigg\>\nonumber\\
    &\geq \frac{\alpha}{2(9N^2+14N+1)\kappa^2_\text{tk}(\calX^\star)}\text{dist}_\text{tk}^2(\{\mU_i,\calS \}, \{\mU_i^\star,\calS^\star \})
    +\frac{\beta}{9N+4}\bigg(\frac{1}{\overline{\sigma}_{\text{tk}}^2(\calX^\star)}\sum_{i=1}^N\|\calP_{\text{T}_{\mU_i} \text{St}}(\nabla_{\mU_i}H_1(\mU_1,\dots, \mU_N,\calS ))\|_F^2\nonumber\\
    &\hspace{0.4cm}+\|\nabla_{\calS}H_1(\mU_1,\dots,\mU_N, \calS )\|_F^2  \bigg).
\end{align}

\end{proof}

\section{Riemannian regularity condition for TT format}
\label{proof of regularity condition of general loss in TT format}

In this section, we initially articulate the loss function for TT format as follows:
\begin{align}
    \label{RIEMANNIAN_LOSS_FUNCTION_1 general in intro}
   \begin{split}  \text{TT format:}  \ \ \
\min_{\mX_i\in\R^{r_{i-1}^\text{tt}\times d_i \times r_i^\text{tt}}, \atop i\in[N]}\!\!  H_2(\mX_1,\dots, \mX_N) & = h( [\mX_1,\dots, \mX_N]),\\
\st \ L^\top(\mX_i)L(\mX_i)&=\mId_{r_i^\text{tt}}, \ i\in [N-1].
    \end{split}
\end{align}

Then we define the following distance metric:
\begin{eqnarray}
\label{BALANCED NEW DISTANCE BETWEEN TWO TENSORS appe}
    \text{dist}^2_\text{tt}(\{\mX_i  \},\{ \mX_i^\star \})=\min_{\mR_i\in\O^{r_i^\text{tt}\times r_i^\text{tt}}, \atop i \in [N-1]}\sum_{i=1}^{N-1} \ol\sigma_\text{tt}^2(\calX^\star)\|L({\mX}_i)-L_{\mR}({\mX}_i^\star)\|_F^2 + \|L({\mX}_N)-L_{\mR}({\mX}_N^\star)\|_2^2.
\end{eqnarray}
Now, we have the Riemannian regularity condition as follows:
\begin{lemma}
\label{regularity condition of general loss in the TT format}
Let a rank-$(r_1^\text{tt},\dots, r_{N-1}^\text{tt})$ TT format $\calX^\star = [\mX_1^\star,\dots,\mX_N^\star ]$ be a target tensor of a loss function $h$ in \eqref{RIEMANNIAN_LOSS_FUNCTION_1 general in intro} which satisfies the RCG condition with   the set $\calC_\text{tt}(c^\text{tt}a_1^\text{tt})$ as follows:
\begin{eqnarray}
    \label{the defitinition of the intial set of generel loss in TT}
    \calC_\text{tt}(a_1^\text{tt}) : =\bigg\{\calX = [\mX_1,\dots,\mX_N ]: \|\calX - \calX^\star\|_F^2\leq c^\text{tt}a_1^\text{tt} \bigg\},
\end{eqnarray}
where $a_1^\text{tt} = \frac{\alpha\beta\underline{\sigma}^2_\text{tt}(\calX^\star)}{9(N^2-1)(N+1+\sum_{i=2}^{N-1}r_i^\text{tt})}$ and $c^\text{tt} = \frac{1}{8(N+1+\sum_{i=2}^{N-1}r_i^\text{tt})\kappa_\text{tt}^2(\calX^\star)}$. The loss function $H_2$ in \eqref{RIEMANNIAN_LOSS_FUNCTION_1 general in intro} satisfies a Riemannian regularity condition, denoted by $\text{RRC}(a_2^\text{tt},a_3^\text{tt})$, if for all $\{\mX_i  \}\in\{\{\mX_i  \}: \text{dist}_\text{tt}^2(\{\mX_i  \}, \{\mX_i^\star\})\leq a_1^\text{tt} \}$,  the following inequality holds:
\begin{align}
    \label{the defitinition of regularity condition for TT gene loss}
    &\sum_{i=1}^{N} \bigg\< L(\mX_i)-L_{\mR}(\mX_i^\star),\calP_{\text{T}_{L(\mX_i)} \text{St}}\bigg(\nabla_{L(\mX_{i})}H_2(\mX_1, \dots, \mX_N)\bigg)\bigg\> \geq a_2^\text{tt}\text{dist}_\text{tt}^2(\{\mX_i  \}, \{\mX_i^\star\})\nonumber\\
    & + a_3^\text{tt}\bigg( \frac{1}{\ol\sigma_\text{tt}^2(\calX^\star)}\sum_{i=1}^{N-1}\|\calP_{\text{T}_{L(\mX_i)} \text{St}}(\nabla_{L(\mX_{i})}H_2(\mX_1, \dots, \mX_N))\|_F^2+\|\nabla_{L(\mX_N)}H_2(\mX_1, \dots, \mX_N) \|_2^2  \bigg),
\end{align}
where $a_2^\text{tt} = \frac{\alpha}{16(N+1+\sum_{i=2}^{N-1}r_i^\text{tt})\kappa_\text{tt}^2(\calX^\star)}$, $a_3^\text{tt} = \frac{\beta}{9N-5}$, and $(\mR_1,\dots, \mR_{N}) = \argmin_{\wt\mR_i\in\O^{r_i^\text{tt}\times r_i^\text{tt}}, i\in[{N}]}\text{dist}_\text{tt}^2(\{\mX_i  \}, \{\mX_i^\star\})$. To simplify the expression, we define the identity operator $\calP_{\text{T}_{L({\mX}_N)} \text{St}}=\calI$ such that $\calP_{\text{T}_{L({\mX}_N)} \text{St}}(\nabla_{L({\mX}_{N})}H_2(\mX_1, \dots, \mX_N)) = \nabla_{L({\mX}_{N})}H_2(\mX_1, \dots, \mX_N)$.
\end{lemma}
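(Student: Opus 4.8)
The plan is to follow, step for step, the Tucker‑format argument of Appendix~\ref{proof of regularity condition of general loss in Tucker format}, with the tensor–matrix products replaced by the slice products $\prod_i\mX_i(s_i)$ and the associated left‑unfolding calculus. \textbf{Preliminaries.} First I would record that the initialization bound together with $\alpha\beta\le\tfrac14$ forces $\ol\sigma_\text{tt}^2(\calX)\le\tfrac94\ol\sigma_\text{tt}^2(\calX^\star)$ (the analogue of \eqref{Upper bound of calS}), so that Lemma~\ref{LOWER BOUND OF TWO DISTANCES} applies and, via \eqref{KRONECKER PRODUCT VECTORIZATION11 - 2}--\eqref{KRONECKER PRODUCT VECTORIZATION11 - 3}, every left partial product $\mX^{\le i}$ is orthonormal while $\|\mX^{\ge i+1}\|=\sigma_1(\calX^{\langle i\rangle})\le\tfrac32\ol\sigma_\text{tt}(\calX^\star)$ for $i\le N-1$; since $\|\calX-\calX^\star\|_F^2\le\tfrac{9N}{4}\,\text{dist}_\text{tt}^2(\{\mX_i\},\{\mX_i^\star\})$, every such $\calX$ lies in the RCG set $\calC_\text{tt}$. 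I would also recall the gradient formula~\eqref{the gradient of general loss in the TT format} for $\nabla_{L(\mX_i)}H_2$ and the identity $\nabla_\calX h(\calX^\star)=0$.

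\textbf{Bounding the gradient‑norm term.} Because $\nabla_{L(\mX_i)}H_2$ is a reshaping of $\nabla_\calX h(\calX)-\nabla_\calX h(\calX^\star)$ contracted on the left with the orthonormal $\mX^{\le i-1}$ and on the right with $\mX^{\ge i+1}$, I obtain $\|\nabla_{L(\mX_i)}H_2\|_F^2\le\tfrac94\ol\sigma_\text{tt}^2(\calX^\star)\,\|\nabla_\calX h(\calX)-\nabla_\calX h(\calX^\star)\|_F^2$ for $i\le N-1$ and $\|\nabla_{L(\mX_N)}H_2\|_2^2\le\|\nabla_\calX h(\calX)-\nabla_\calX h(\calX^\star)\|_F^2$ (there is no right factor); together with $\|\calP_{\text{T}_{L(\mX_i)}\text{St}}(\cdot)\|_F\le\|\cdot\|_F$ this yields $\tfrac1{\ol\sigma_\text{tt}^2(\calX^\star)}\sum_{i=1}^{N-1}\|\calP_{\text{T}_{L(\mX_i)}\text{St}}(\nabla_{L(\mX_i)}H_2)\|_F^2+\|\nabla_{L(\mX_N)}H_2\|_2^2\le\tfrac{9N-5}{4}\|\nabla_\calX h(\calX)-\nabla_\calX h(\calX^\star)\|_F^2$, which is exactly the coefficient that, multiplied by $a_3^\text{tt}=\tfrac\beta{9N-5}$, absorbs the $\tfrac\beta4\|\nabla_\calX h(\calX)-\nabla_\calX h(\calX^\star)\|_F^2$ term produced below.

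\textbf{Bounding the cross term.} For each $i$ I would write $\langle L(\mX_i)-L_{\mR}(\mX_i^\star),\calP_{\text{T}_{L(\mX_i)}\text{St}}(\nabla_{L(\mX_i)}H_2)\rangle=\langle L(\mX_i)-L_{\mR}(\mX_i^\star),\nabla_{L(\mX_i)}H_2\rangle-\langle\calP^\perp_{\text{T}_{L(\mX_i)}\text{St}}(L(\mX_i)-L_{\mR}(\mX_i^\star)),\nabla_{L(\mX_i)}H_2\rangle$ (the second term vanishing for $i=N$, where the projection is the identity). Substituting the gradient formula and summing over $i$, the first parts add up to $\langle\calX-\calX^\star+\calH,\,\nabla_\calX h(\calX)-\nabla_\calX h(\calX^\star)\rangle$, where $\calH$ collects the $\tfrac{N(N-1)}2$ ``double‑perturbation'' tensors produced by the telescoping identity of Lemma~\ref{TRANSFORMATION OF NPLUS2 VARIABLES_1} applied slice‑wise to $\prod_i\mX_i(s_i)$ (the TT analogue of Lemma~\ref{TRANSFORMATION OF N+1 to N(N+1)}), and the second parts form $T_1:=\sum_{i=1}^{N-1}\langle\calP^\perp_{\text{T}_{L(\mX_i)}\text{St}}(L(\mX_i)-L_{\mR}(\mX_i^\star)),\nabla_{L(\mX_i)}H_2\rangle$. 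Applying the RCG condition to $\langle\calX-\calX^\star,\nabla_\calX h(\calX)-\nabla_\calX h(\calX^\star)\rangle$ and Young's inequality to the $\calH$‑ and $T_1$‑cross terms leaves $\alpha\|\calX-\calX^\star\|_F^2+\tfrac\beta4\|\nabla_\calX h(\calX)-\nabla_\calX h(\calX^\star)\|_F^2$ minus the quartic remainders $\tfrac1{2\beta}\|\calH\|_F^2$ and the $\text{dist}_\text{tt}^4$‑part of $T_1$.

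\textbf{Controlling the quartic remainders and assembling.} I would bound $\|\calH\|_F^2\le\tfrac{9N(N-1)}{8\,\ol\sigma_\text{tt}^2(\calX^\star)}\text{dist}_\text{tt}^4(\{\mX_i\},\{\mX_i^\star\})$ by estimating each of its $\tfrac{N(N-1)}2$ products of factor blocks with \eqref{KRONECKER PRODUCT VECTORIZATION11 - 2}--\eqref{KRONECKER PRODUCT VECTORIZATION11 - 3} (one block contributes a Frobenius factor, the others spectral factors $\le1$ or $\le\tfrac32\ol\sigma_\text{tt}(\calX^\star)$), and bound $T_1$ through the identity $\calP^\perp_{\text{T}_{L(\mX_i)}\text{St}}(L(\mX_i)-L_{\mR}(\mX_i^\star))=\tfrac12L(\mX_i)(L(\mX_i)-L_{\mR}(\mX_i^\star))^\top(L(\mX_i)-L_{\mR}(\mX_i^\star))$ together with Lemma~\ref{left ortho upper bound}, giving $T_1\le\tfrac\beta4\|\nabla_\calX h(\calX)-\nabla_\calX h(\calX^\star)\|_F^2+\tfrac{9(N-1)}{16\beta\,\ol\sigma_\text{tt}^2(\calX^\star)}\text{dist}_\text{tt}^4$. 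Then Lemma~\ref{LOWER BOUND OF TWO DISTANCES} turns $\alpha\|\calX-\calX^\star\|_F^2$ into $\tfrac{\alpha}{8(N+1+\sum_{i=2}^{N-1}r_i^\text{tt})\kappa_\text{tt}^2(\calX^\star)}\text{dist}_\text{tt}^2=2a_2^\text{tt}\text{dist}_\text{tt}^2$, and the choice $a_1^\text{tt}=\tfrac{\alpha\beta\underline\sigma_\text{tt}^2(\calX^\star)}{9(N^2-1)(N+1+\sum_{i=2}^{N-1}r_i^\text{tt})}$ makes the two quartic remainders together (which equal $\tfrac{9(N^2-1)}{16\beta\,\ol\sigma_\text{tt}^2(\calX^\star)}\text{dist}_\text{tt}^4$) at most $a_2^\text{tt}\text{dist}_\text{tt}^2$; hence the cross term is $\ge a_2^\text{tt}\text{dist}_\text{tt}^2+\tfrac\beta4\|\nabla_\calX h(\calX)-\nabla_\calX h(\calX^\star)\|_F^2$, and combining this with the gradient‑norm estimate yields exactly \eqref{the defitinition of regularity condition for TT gene loss}. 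I expect the main obstacle to be this third/fourth step: unfolding $\sum_i\langle L(\mX_i)-L_{\mR}(\mX_i^\star),\nabla_{L(\mX_i)}H_2\rangle$ into a matrix‑product form to which Lemma~\ref{TRANSFORMATION OF NPLUS2 VARIABLES_1} applies, pinning down $\calH$ precisely, and bounding its $\tfrac{N(N-1)}2$ constituents by the $\ol\otimes$ sub‑Kronecker inequalities with the sharp polynomial‑in‑$N$ constant — it is the bookkeeping, complicated by the noncommutativity of the slice products, rather than any single inequality, that is delicate.
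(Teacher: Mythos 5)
Your proposal is correct and follows essentially the same route as the paper's Appendix~\ref{proof of regularity condition of general loss in TT format}: the preliminary bound $\sigma_1^2(\calX^{\langle i\rangle})\le\tfrac94\ol\sigma_\text{tt}^2(\calX^\star)$, the $\tfrac{9N-5}{4}$ gradient-norm estimate, the decomposition of the cross term into the RCG part plus a telescoping remainder $\vh$ (your $\calH$) and an orthogonal-complement term $T_2$ (your $T_1$), the quartic bounds $\tfrac{9N(N-1)}{8\ol\sigma_\text{tt}^2}\text{dist}^4$ and $\tfrac{9(N-1)}{16\beta\ol\sigma_\text{tt}^2}\text{dist}^4$, and the final assembly via Lemma~\ref{LOWER BOUND OF TWO DISTANCES} and the choice of $a_1^\text{tt}$ all match the paper's argument (the only cosmetic difference being that you propose to rederive the $\|\vh\|_2^2$ bound from the $\ol\otimes$ inequalities where the paper cites \cite[eq.~(82)]{qin2024guaranteed}). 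All constants check out.
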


\begin{proof}

Before proving \Cref{regularity condition of general loss in the TT format}, we first provide one useful property. Specifically, we have
\begin{eqnarray}
\label{upper bound TT spctral norm unified}
    \sigma_1^2({\calX}^{\<i \>}) &\!\!\!\!= \!\!\!\!&  \|{\calX}^{\geq i+1} \|^2 \leq \min_{\mR_i\in\O^{r_i\times r_i}}2\|\mR_{i}^\top{\calX^{\star}}^{\geq i+1} \|^2 + 2\|{\calX}^{\geq i+1} - \mR_{i}^\top{\calX^{\star}}^{\geq i+1} \|^2\nonumber\\
    &\!\!\!\!\leq \!\!\!\!&2\ol{\sigma}^2_\text{tt}(\calX^\star) + \min_{\mR_i\in\O^{r_i\times r_i}}2 \|{\calX}^{\<i \>} - {\calX^\star}^{\<i \>} + {\calX^\star}^{\<i \>} - {\calX}^{\leq i}\mR_{i}^\top{\calX^{\star}}^{\geq i+1}  \|^2\nonumber\\
    &\!\!\!\!\leq \!\!\!\!&  2\ol{\sigma}^2_\text{tt}(\calX^\star) + 4\|\calX - \calX^\star \|_F^2 + \min_{\mR_i\in\O^{r_i\times r_i}}4 \|\mR_{i}^\top{\calX^{\star}}^{\geq i+1} \|^2 \|{\calX}^{\leq i} - {\calX^\star}^{\leq i} \mR_i \|_F^2\nonumber\\
    &\!\!\!\!\leq \!\!\!\!& 2\ol{\sigma}^2_\text{tt}(\calX^\star) + \bigg(4 + \frac{16\ol{\sigma}^2_\text{tt}(\calX^\star)}{\underline{\sigma}^2_\text{tt}(\calX^\star)}\bigg)\|\calX-\calX^\star\|_F^2\nonumber\\
    &\!\!\!\!\leq \!\!\!\!& 2\ol{\sigma}^2_\text{tt}(\calX^\star) + \frac{45N\ol{\sigma}^2_\text{tt}(\calX^\star)}{\underline{\sigma}^2_\text{tt}(\calX^\star)} \text{dist}^2_\text{tt}(\{\mX_i \},\{ \mX_i^\star \})\nonumber\\
    &\!\!\!\!\leq \!\!\!\!& 2\ol{\sigma}^2_\text{tt}(\calX^\star) + \frac{45N\alpha\beta\ol{\sigma}^2_\text{tt}(\calX^\star)}{9(N^2-1)(N+1+\sum_{i=2}^{N-1}r_i^\text{tt})}\nonumber\\
    &\!\!\!\!\leq \!\!\!\!& \frac{9\ol{\sigma}^2_\text{tt}(\calX^\star)}{4}, i\in[N-1],
\end{eqnarray}
where the fourth inequality follows \Cref{left ortho upper bound} and the last line uses $\alpha\beta\leq \frac{1}{4}$ \cite{Han20} and $N\geq 3$. Note that  $\ol{\sigma}^2_\text{tt}(\calX) = \max_{i=1}^{N-1}\sigma_1^2({\calX}^{\<i \>})\leq \frac{9\ol{\sigma}^2_\text{tt}(\calX^\star)}{4}$.

Note that the gradient is defined as
\begin{eqnarray}
    \label{the gradient of general loss in the TT format}
    \nabla_{L(\mX_{i})}H_2(\mX_1, \dots, \mX_N) = \begin{bmatrix}\nabla_{\mX_i(1)} H_2(\mX_1, \dots, \mX_N)\\ \vdots \\ \nabla_{\mX_i(d_i)} H_2(\mX_1, \dots, \mX_N)  \end{bmatrix}, \ \ i\in[N],
\end{eqnarray}
where following \cite[Lemma 2.1]{Han20}, the gradient with respect to each factor $\mX_{i}(s_i)$ can be computed as
\begin{align*}
\nabla_{\mX_i(s_i)}H_2(\mX_1, \dots, \mX_N)
=\hspace{-1cm} \sum_{s_1,\ldots,s_{i-1},s_{i+1},\ldots,s_N } \hspace{-1cm} \Big(& \nabla_{\calX(s_1,\dots,s_N)}h(\calX)\mX_{i-1}^\top(s_{i-1})\cdots\mX_{1}^\top(s_{1})\cdot\\
& \mX_{N}^\top(s_{N})\cdots\mX_{i+1}^\top(s_{i+1}) \Big).
\end{align*}

\paragraph{Upper bound of the squared term in \eqref{the defitinition of regularity condition for TT gene loss}}
We first define three matrices for $i\in[N]$ as follows:
\begin{eqnarray}
    \label{The definition of D1 general loss}
    \mD_1(i) &\!\!\!\!=\!\!\!\!& \begin{bmatrix} \mX_{i-1}^\top(1)\!\cdots\!\mX_{1}^\top(1) \ \ \ \   \cdots \ \ \ \   \mX_{i-1}^\top(d_{i-1})\!\cdots\!\mX_{1}^\top(d_{1})   \end{bmatrix}\nonumber\\
    &\!\!\!\!=\!\!\!\!&L^\top(\mX_{i-1})\ol \otimes \cdots \ol \otimes L^\top(\mX_1)\in\R^{r_i^\text{tt}\times(d_1\cdots d_{i-1})},\\
    \mD_2(i) &\!\!\!\!=\!\!\!\!& \begin{bmatrix} \mX_{N}^\top(1)\cdots \mX_{i+1}^\top(1)\\ \vdots \\ \mX_{N}^\top(d_N)\cdots \mX_{i+1}^\top(d_{i+1}) \end{bmatrix}\in\R^{(d_{i+1}\cdots d_N )\times r_i^\text{tt} },
\end{eqnarray}
where we note that $\mD_1(1) = 1$ and $\mD_2(N) = 1$.
Moreover, for each $s_i\in [d_i]$, we define the matrix $\mE(s_i)\in\R^{(d_1\cdots d_{i-1})\times (d_{i+1}\cdots d_N)}$ whose $(s_1\cdots s_{i-1}, s_{i+1}\cdots s_N)$-th element  is given by
\begin{eqnarray}
    \label{Each element of E_si general loss}
    \mE(s_i)(s_1\cdots s_{i-1}, s_{i+1}\cdots s_N) =  \nabla_{\calX(s_1,\dots,s_N)}h(\calX) - \nabla_{\calX(s_1,\dots,s_N)}h(\calX^\star),
\end{eqnarray}
where we note that $\nabla_{\calX}h(\calX^\star) = 0$.

Based on the aforementioned notations, we can derive
\begin{eqnarray}
    \label{PROJECTED GRADIENT DESCENT SQUARED TERM 1 to N general loss}
    \bigg\| \nabla_{L({\mX}_{i})}H_2(\mX_1, \dots, \mX_N)\bigg\|_F^2&\!\!\!\!=\!\!\!\!& \sum_{s_i=1}^{d_i}\bigg\| \nabla_{\mX_{i}(s_i)} H_2(\mX_1, \dots, \mX_N)\bigg\|_F^2 =\sum_{s_i=1}^{d_i}\|\mD_1(i) \mE(s_i) \mD_2(i)   \|_F^2\nonumber\\
    &\!\!\!\!\leq\!\!\!\!& \sum_{s_i=1}^{d_i}\|L^\top(\mX_{i-1})\ol \otimes \cdots \ol \otimes L^\top(\mX_1)\|^2\| \mD_2(i)\|^2 \|\mE(s_i)\|_F^2\nonumber\\
    &\!\!\!\!\leq\!\!\!\!&\|L(\mX_1)\|^2\cdots\|L(\mX_{i-1})\|^2 \|{\calX}^{\geq i+1} \|^2  \|\nabla_{\calX}h(\calX)-\nabla_{\calX}h(\calX^\star)\|_F^2\nonumber\\
    &\!\!\!\!\leq\!\!\!\!&\begin{cases}
    \frac{9\ol\sigma_\text{tt}^2(\calX^\star)}{4}\|\nabla_{\calX}h(\calX)-\nabla_{\calX}h(\calX^\star)\|_F^2, & i\in[N-1],\\
    \|\nabla_{\calX}h(\calX)-\nabla_{\calX}h(\calX^\star)\|_F^2, & i = N,
  \end{cases}
\end{eqnarray}
where we use \eqref{KRONECKER PRODUCT VECTORIZATION11 - 2}, $\|\mD_2(i)\| =  \|{\calX}^{\geq i+1} \| $, and $\sum_{s_i=1}^{d_i}\|\mE(s_i)\|_F^2 = \|\nabla_{\calX}h(\calX)-\nabla_{\calX}h(\calX^\star)\|_F^2$ in the second inequality. In addition, the third inequality follows $\|{\calX}^{\geq i+1} \| = \sigma_1({\calX}^{\<i \>})\leq \frac{3 \ol{\sigma}_\text{tt}(\calX^\star)}{2}$.

Using \eqref{PROJECTED GRADIENT DESCENT SQUARED TERM 1 to N general loss}, we can easily obtain
\begin{eqnarray}
    \label{RIEMANNIAN FACTORIZATION SQUARED TERM UPPER BOUND general loss}
    &\!\!\!\!\!\!\!\!&\frac{1}{\ol{\sigma}_\text{tt}^2(\calX^\star)}\sum_{i=1}^{N-1}\bigg\|\mathcal{P}_{{\rm T}_{L({\mX}_i)} {\rm St}}\bigg(\nabla_{L({\mX}_{i})}H_2(\mX_1, \dots, \mX_N)\bigg)\bigg\|_F^2+\bigg\|\nabla_{L({\mX}_{N})}H_2(\mX_1, \dots, \mX_N) \bigg\|_2^2\nonumber\\
    &\!\!\!\!\leq\!\!\!\!&\frac{1}{\ol{\sigma}_\text{tt}^2(\calX^\star)}\sum_{i=1}^{N-1}\bigg\|\nabla_{L({\mX}_{i})}H_2(\mX_1, \dots, \mX_N)\bigg\|_F^2+\bigg\|\nabla_{L({\mX}_{N})}H_2(\mX_1, \dots, \mX_N) \bigg\|_2^2\nonumber\\
    &\!\!\!\!\leq\!\!\!\!&\frac{9N-5}{4}\|\nabla_{\calX}h(\calX)-\nabla_{\calX}h(\calX^\star)\|_F^2,
\end{eqnarray}
where the first inequality follows from the fact that for any matrix $\mB = \calP_{\text{T}_{L({\mX}_i)} \text{St}}(\mB) + \calP_{\text{T}_{L({\mX}_{i})} \text{St}}^{\perp}(\mB)$ where $\calP_{\text{T}_{L({\mX}_i)} \text{St}}(\mB)$ and $\calP_{\text{T}_{L({\mX}_{i})} \text{St}}^{\perp}(\mB)$ are orthogonal,  we have $\|\calP_{\text{T}_{L({\mX}_i)} \text{St}}(\mB)\|_F^2\leq \|\mB\|_F^2$.

\paragraph{Lower bound of the cross term in \eqref{the defitinition of regularity condition for TT gene loss}}

First, we rewrite the cross term in \eqref{the defitinition of regularity condition for TT gene loss} as follows:
\begin{eqnarray}
    \label{RIEMANNIAN general CROSS TERM LOWER BOUND original}
    &\!\!\!\!\!\!\!\!&\sum_{i=1}^{N} \bigg\< L(\mX_i)-L_{\mR}(\mX_i^\star),\calP_{\text{T}_{L(\mX_i)} \text{St}}\bigg(\nabla_{L(\mX_{i})}H_2(\mX_1, \dots, \mX_N)\bigg)\bigg\>\nonumber\\
    &\!\!\!\!=\!\!\!\!&\sum_{i=1}^{N} \bigg\< L(\mX_i)-L_{\mR}(\mX_i^\star), \nabla_{L(\mX_{i})}H_2(\mX_1, \dots, \mX_N)\bigg\> - T_2\nonumber\\
    &\!\!\!\!=\!\!\!\!&\bigg\<{\rm vec}(\nabla_{\calX}h(\calX)-\nabla_{\calX}h(\calX^\star)),L(\mX_1) \overline{\otimes} \cdots \overline{\otimes} L(\mX_N)-L_{\mR}(\mX_1^\star) \overline{\otimes} \cdots \overline{\otimes} L_{\mR}(\mX_N^\star)+\vh\bigg\>- T_2,
\end{eqnarray}
where we respectively define
\begin{eqnarray}
    \label{H_T IN THE CROSS TERM}
    \vh&\!\!\!\!=\!\!\!\!&L_{\mR}(\mX_1^\star)\ol \otimes  \cdots \ol \otimes L_{\mR}({\mX}_N^\star) - L(\mX_1) \ol \otimes \cdots \ol \otimes L(\mX_{N-1}) \ol \otimes L_{\mR}(\mX_{N}^\star)\nonumber\\
    &\!\!\!\!\!\!\!\!& +\sum_{i=1}^{N-1}L(\mX_1)\ol \otimes\cdots\ol \otimes L(\mX_{i-1})\ol \otimes ( L({\mX}_i) -L_{\mR}({\mX}_i^\star) ) \ol \otimes L(\mX_{i+1})\ol \otimes \cdots \ol \otimes L(\mX_N),
\end{eqnarray}
and
    \begin{eqnarray}
        \label{PROJECTION ORTHOGONAL IN RIEMANNIAN UPPER BOUND general original}
        T_2&\!\!\!\! = \!\!\!\!&\sum_{i=1}^{N-1}\bigg\<L(\mX_i)-L_{\mR}(\mX_i^\star), \calP^{\perp}_{\text{T}_{L(\mX_i)} \text{St}}(\nabla_{L(\mX_{i})}H_2(\mX_1, \dots, \mX_N)) \bigg\>\nonumber\\
        & \!\!\!\!= \!\!\!\!&\sum_{i=1}^{N-1}\bigg\<\calP^{\perp}_{\text{T}_{L(\mX_i)} \text{St}}(L(\mX_i)-L_{\mR}(\mX_i^\star)), \nabla_{L(\mX_{i})}H_2(\mX_1, \dots, \mX_N) \bigg\>.
    \end{eqnarray}

Before deriving the lower bound of \eqref{RIEMANNIAN general CROSS TERM LOWER BOUND original}, we need to obtain the upper bounds of \eqref{H_T IN THE CROSS TERM} and \eqref{PROJECTION ORTHOGONAL IN RIEMANNIAN UPPER BOUND general original}. Specifically, according to \cite[eq. (82)]{qin2024guaranteed} we have
\begin{eqnarray}
    \label{H_T IN THE CROSS TERM UPPER BOUND}
    \|\vh\|_2^2\leq\frac{9N(N-1)}{8\ol{\sigma}_\text{tt}^2(\calX^\star)}\text{dist}_\text{tt}^4(\{\mX_i  \}, \{\mX_i^\star\}).
\end{eqnarray}
In addition, we can derive
\begin{eqnarray}
        \label{PROJECTION ORTHOGONAL IN RIEMANNIAN UPPER BOUND general}
        T_2&\!\!\!\!\leq\!\!\!\!&\sum_{i=1}^{N-1}\frac{1}{2}\|L(\mX_i)\|\|L(\mX_i)-L_{\mR}(\mX_i^\star)\|_F^2\bigg|\bigg|\nabla_{L(\mX_{i})}H_2(\mX_1, \dots, \mX_N)\bigg|\bigg|_F\nonumber\\
        &\!\!\!\!\leq\!\!\!\!&\sum_{i=1}^{N-1}\frac{3\ol{\sigma}_\text{tt}(\calX^\star)}{4}\|L(\mX_i)-L_{\mR}(\mX_i^\star)\|_F^2\|\nabla_{\calX}h(\calX)-\nabla_{\calX}
        h(\calX^\star)\|_F\nonumber\\
        &\!\!\!\!\leq\!\!\!\!&\frac{\beta}{4}\|\nabla_{\calX}h(\calX)-\nabla_{\calX}h(\calX^\star)\|_F^2+\frac{9(N-1)\ol{\sigma}_\text{tt}^2(\calX^\star)}{16\beta}
        \sum_{i=1}^{N-1}\|L(\mX_i)
        -L_{\mR}(\mX_i^\star)\|_F^4\nonumber\\
        &\!\!\!\!\leq\!\!\!\!&\frac{\beta}{4}\|\nabla_{\calX}h(\calX)-\nabla_{\calX}h(\calX^\star)\|_F^2+\frac{9(N-1)}{16\beta\ol{\sigma}_\text{tt}^2(\calX^\star)}
        \text{dist}_\text{tt}^4(\{\mX_i  \}, \{\mX_i^\star\}),
\end{eqnarray}
    where the second inequality follows from \eqref{PROJECTED GRADIENT DESCENT SQUARED TERM 1 to N general loss} and $\calP^{\perp}_{\text{T}_{L(\mX_i)} \text{St}}(\cdot)$ is defined as
\begin{eqnarray}
        \label{Projection orthogonal in the Riemannian gradient descent general}
        &\!\!\!\!\!\!\!\!&\calP^{\perp}_{\text{T}_{L(\mX_i)} \text{St}}(L(\mX_i)-L_{\mR}(\mX_i^\star))\nonumber\\
        &\!\!\!\!=\!\!\!\!&\frac{1}{2}L(\mX_i)((L(\mX_i)-L_{\mR}(\mX_i^\star))^\top L(\mX_i)+L^\top(\mX_i)(L(\mX_i)-L_{\mR}(\mX_i^\star)))\nonumber\\
        &\!\!\!\!=\!\!\!\!&\frac{1}{2}L(\mX_i)(2\mId_{r_i^\text{tt}}-L_{\mR}^\top(\mX_i^\star)L(\mX_i)-L^\top(\mX_i)L_{\mR}(\mX_i^\star))\nonumber\\
        &\!\!\!\!=\!\!\!\!&\frac{1}{2}L(\mX_i)((L(\mX_i)-L_{\mR}(\mX_i^\star))^\top(L(\mX_i)-L_{\mR}(\mX_i^\star))).
\end{eqnarray}

Now, \eqref{RIEMANNIAN general CROSS TERM LOWER BOUND original} can be further analyzed as
\begin{eqnarray}
    \label{RIEMANNIAN general CROSS TERM LOWER BOUND}
    &\!\!\!\!\!\!\!\!&\sum_{i=1}^{N} \bigg\< L(\mX_i)-L_{\mR}(\mX_i^\star),\calP_{\text{T}_{L(\mX_i)} \text{St}}\bigg(\nabla_{L(\mX_{i})}H_2(\mX_1, \dots, \mX_N)\bigg)\bigg\>\nonumber\\
    &\!\!\!\!\geq\!\!\!\!&\alpha\|\calX-\calX^\star\|_F^2-\frac{1}{2\beta}\|\vh\|_F^2+\frac{\beta}{4}\|\nabla_{\calX}h(\calX)-\nabla_{\calX}h(\calX^\star)\|_F^2
    -\frac{9(N-1)}{16\beta\ol{\sigma}_\text{tt}^2(\calX^\star)}\text{dist}_\text{tt}^4(\{\mX_i  \}, \{\mX_i^\star\})\nonumber\\
    &\!\!\!\!\geq\!\!\!\!&\alpha\|\calX-\calX^\star\|_F^2-\frac{9(N^2-1)}{16\beta\ol{\sigma}_\text{tt}^2(\calX^\star)}\text{dist}_\text{tt}^4(\{\mX_i  \}, \{\mX_i^\star\})
    +\frac{\beta}{4}\|\nabla_{\calX}h(\calX)-\nabla_{\calX}h(\calX^\star)\|_F^2\nonumber\\
    &\!\!\!\!\geq\!\!\!\!&\frac{\alpha}{16(N+1+\sum_{i=2}^{N-1}r_i^\text{tt})\kappa_\text{tt}^2(\calX^\star)}\text{dist}_\text{tt}^2(\{\mX_i  \}, \{\mX_i^\star\})
    +\frac{\beta}{4}\|\nabla_{\calX}h(\calX)-\nabla_{\calX}h(\calX^\star)\|_F^2,
\end{eqnarray}
where we employ \eqref{PROJECTION ORTHOGONAL IN RIEMANNIAN UPPER BOUND general} and the RCG condition in the first inequality, followed by the utilization of \eqref{H_T IN THE CROSS TERM UPPER BOUND} in the second inequality, and conclude by utilizing \Cref{LOWER BOUND OF TWO DISTANCES} alongside the relation $\text{dist}_\text{tt}^2(\{\mX_i  \}, \{\mX_i^\star\})\leq \frac{\alpha\beta\underline{\sigma}^2_\text{tt}(\calX^\star)}{9(N^2-1)(N+1+\sum_{i=2}^{N-1}r_i^\text{tt})}$ in the final step.

\paragraph{Contraction} Combining \eqref{RIEMANNIAN FACTORIZATION SQUARED TERM UPPER BOUND general loss} and \eqref{RIEMANNIAN general CROSS TERM LOWER BOUND}, we can derive
\begin{eqnarray}
    \label{regularity condition for TT gene loss in appendiex}
    &\!\!\!\!\!\!\!\!&\sum_{i=1}^{N} \bigg\< L(\mX_i)-L_{\mR}(\mX_i^\star),\calP_{\text{T}_{L(\mX_i)} \text{St}}\bigg(\nabla_{L(\mX_{i})}H_2(\mX_1, \dots, \mX_N)\bigg)\bigg\>\nonumber\\
    &\!\!\!\!\geq\!\!\!\!& \frac{\alpha}{16(N+1+\sum_{i=2}^{N-1}r_i^\text{tt})\kappa_\text{tt}^2(\calX^\star)}\text{dist}_\text{tt}^2(\{\mX_i  \}, \{\mX_i^\star\}) + \frac{\beta}{9N-5}\bigg(\bigg\|\nabla_{L(\mX_N)}H_2(\mX_1, \dots, \mX_N) \bigg\|_2^2\nonumber\\
    &\!\!\!\!\!\!\!\!&+ \frac{1}{\ol{\sigma}_\text{tt}^2(\calX^\star)}\sum_{i=1}^{N-1}\bigg\|\calP_{\text{T}_{L(\mX_i)} \text{St}}\bigg(\nabla_{L(\mX_{i})}H_2(\mX_1, \dots, \mX_N)\bigg)\bigg\|_F^2 \bigg).
\end{eqnarray}

\end{proof}

\section{Proof of \eqref{RCG conclusion for Tu and TT}}
\label{Proof of RCG in the tensor sensing}

As an immediate consequence of the RIP, the inner product between two low-rank TT formats is also nearly preserved if $\calA$ satisfies the RIP.
\begin{lemma} (\cite{Rauhut17,Han20,qin2024guaranteed})
\label{RIP of tensor sensing another_3}
Suppose that $\calA$ obeys the $2\overline{r}$-RIP with a constant $\delta_{2\overline{r}}$ where $\delta_{2\overline{r}} =     \begin{cases}
    \delta_{2\overline{r}^\text{tk}}, &  \text{Tucker} \\
    \delta_{2\overline{r}^\text{tt}}, &  \text{TT}
    \end{cases}$. Then for any tensor $\calX_1\in\R^{d_{1}\times  \cdots \times  d_{N}}$ and $\calX_2\in\R^{d_{1}\times  \cdots \times d_{N}}$, one has
\begin{eqnarray}
    \label{RIP of tensor sensing another_4}
    \bigg|\frac{1}{m}\<\mathcal{A}(\calX_1),\mathcal{A}(\calX_2)\>-\<\calX_1,\calX_2\>\bigg|\leq \delta_{2\overline{r}}\|\calX_1\|_F\|\calX_2\|_F.
\end{eqnarray}
\end{lemma}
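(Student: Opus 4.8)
The statement is the standard ``polarization'' consequence of the RIP, so the plan is to reduce it to \Cref{RIP condition fro the Tucker sensing Lemma} applied to $\calX_1\pm\calX_2$. (Here $\calX_1,\calX_2$ are understood to be rank-$\ol r$ tensors in the relevant Tucker/TT format, as is implicit from the surrounding discussion.) First I would dispose of the degenerate case $\calX_1=0$ or $\calX_2=0$, where the claimed inequality is trivially $0\le 0$. Then I would exploit that both sides are invariant under the rescaling $(\calX_1,\calX_2)\mapsto(t\calX_1,t^{-1}\calX_2)$ for $t>0$: the left-hand quantity $\frac1m\<\calA(t\calX_1),\calA(t^{-1}\calX_2)\>-\<t\calX_1,t^{-1}\calX_2\>$ is unchanged, and the bound $\delta_{2\ol r}\|t\calX_1\|_F\|t^{-1}\calX_2\|_F=\delta_{2\ol r}\|\calX_1\|_F\|\calX_2\|_F$ is unchanged as well. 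Hence it suffices to prove the estimate under the normalization $\|\calX_1\|_F=\|\calX_2\|_F=1$, in which case the target reads $\bigl|\frac1m\<\calA(\calX_1),\calA(\calX_2)\>-\<\calX_1,\calX_2\>\bigr|\le\delta_{2\ol r}$.

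Next I would write the two polarization identities
\begin{align*}
\frac{1}{m}\<\calA(\calX_1),\calA(\calX_2)\> &= \frac{1}{4m}\Bigl(\|\calA(\calX_1+\calX_2)\|_2^2-\|\calA(\calX_1-\calX_2)\|_2^2\Bigr),\\
\<\calX_1,\calX_2\> &= \frac14\Bigl(\|\calX_1+\calX_2\|_F^2-\|\calX_1-\calX_2\|_F^2\Bigr),
\end{align*}
and invoke the $2\ol r$-RIP on $\calX_1+\calX_2$ and on $\calX_1-\calX_2$. The one structural point that needs a (short) justification is that $\calX_1\pm\calX_2$ really does have rank at most $2\ol r$ in the pertinent decomposition, so that \eqref{RIP condition fro the Tucker TT sensing} with parameter $2\ol r$ applies: for the Tucker format the multilinear ranks are subadditive, so each mode-$i$ rank of $\calX_1\pm\calX_2$ is at most $r_i^{\text{tk}}+r_i^{\text{tk}}\le 2\ol r^{\text{tk}}$; for the TT format the TT ranks are likewise subadditive (block-concatenating the cores of the two summands yields a valid TT representation of the sum), so each TT rank is at most $2\ol r^{\text{tt}}$.

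Applying the upper RIP bound to $\calX_1+\calX_2$ and the lower RIP bound to $\calX_1-\calX_2$, and using the parallelogram law $\|\calX_1+\calX_2\|_F^2+\|\calX_1-\calX_2\|_F^2=2(\|\calX_1\|_F^2+\|\calX_2\|_F^2)=4$, I obtain
\begin{align*}
\frac1m\<\calA(\calX_1),\calA(\calX_2)\> &\le \frac14\Bigl((1+\delta_{2\ol r})\|\calX_1+\calX_2\|_F^2-(1-\delta_{2\ol r})\|\calX_1-\calX_2\|_F^2\Bigr)\\
&= \<\calX_1,\calX_2\>+\frac{\delta_{2\ol r}}{4}\Bigl(\|\calX_1+\calX_2\|_F^2+\|\calX_1-\calX_2\|_F^2\Bigr)=\<\calX_1,\calX_2\>+\delta_{2\ol r}.
\end{align*}
Swapping which of $\calX_1\pm\calX_2$ gets the upper versus the lower RIP bound yields $\frac1m\<\calA(\calX_1),\calA(\calX_2)\>\ge\<\calX_1,\calX_2\>-\delta_{2\ol r}$, and the two inequalities together give the normalized claim; the rescaling of the first paragraph then upgrades it to the general bound $\delta_{2\ol r}\|\calX_1\|_F\|\calX_2\|_F$. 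I do not anticipate any real obstacle: the only things to handle with a little care are the rank-subadditivity observation that licenses using the $2\ol r$-RIP rather than the $\ol r$-RIP, and the homogenization step that turns the naturally-appearing symmetric quantity $\tfrac{\delta_{2\ol r}}{2}(\|\calX_1\|_F^2+\|\calX_2\|_F^2)$ into the product form $\delta_{2\ol r}\|\calX_1\|_F\|\calX_2\|_F$.
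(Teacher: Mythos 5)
Your proposal is correct and is essentially the standard proof: the paper itself does not prove this lemma but imports it from \cite{Rauhut17,Han20,qin2024guaranteed}, where precisely your argument—polarization identities, the parallelogram law, and subadditivity of the Tucker/TT ranks so that $\calX_1\pm\calX_2$ falls under the $2\ol r$-RIP—is used. Your remark that $\calX_1,\calX_2$ must implicitly be rank-$\ol r$ tensors in the relevant format is the right reading of the statement (and matches how the lemma is invoked later with $\delta_{4\ol r}$ for rank-$2\ol r$ arguments); the only cosmetic quibble is that the normalization step uses the full $(1,1)$-bihomogeneity of both sides, not just the $(t,t^{-1})$ invariance you cite, which is trivially available.
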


Considering that $\nabla G(\calX) = \frac{1}{m}\calA^*(\calA(\calX - \calX^\star))$ where $\mathcal{A}^*$ is the adjoint operator of $\mathcal{A}$ and is defined as $\mathcal{A}^*(\calX)=\sum_{k=1}^m x_k\calA_k$, and leveraging {Definition} \ref{RIP condition fro the Tucker sensing Lemma}, we have
\begin{eqnarray}
    \label{RCG for G}
    \<\nabla G(\calX)  - \nabla G(\calX^\star), \calX - \calX^\star \> = \frac{1}{m}\|\calA(\calX - \calX^\star)\|_2^2 \geq (1 - \delta_{2\ol r})\|\calX - \calX^\star\|_F^2.
\end{eqnarray}

Then, we define the restricted Frobenius norm for any tensor $\calB\in\R^{d_1\times\cdots\times d_N}$ as follows:
\begin{eqnarray}
    \label{Restricted F norm for Tu and TT}
    \|\calB\|_{F,\ol r} = \begin{cases}
    \|\calB\|_{F,\ol r^\text{tk}} = \max_{\calH\in\R^{d_1\times\cdots\times d_N}, \|\calH\|_F= 1, \atop \text{rank}(\calH)=(r_1^\text{tk},\dots,r_{N}^\text{tk}) } |\<\calB,  \calH \>|, &  \text{Tucker}, \\
    \|\calB\|_{F,\ol r^\text{tt}} = \max_{\calH\in\R^{d_1\times\cdots\times d_N}, \|\calH\|_F= 1, \atop \text{rank}(\calH)=(r_1^\text{tt},\dots,r_{N-1}^\text{tt}) } |\<\calB,  \calH \>|, &  \text{TT}.
    \end{cases}
\end{eqnarray}

Based on \eqref{Restricted F norm for Tu and TT}, we can further derive
\begin{eqnarray}
    \label{lower bound of F norm for Tu and TT}
    \|\calX - \calX^\star\|_F &\!\!\!\!=\!\!\!\!& \|\calX - \calX^\star\|_{F,2 \ol r}\nonumber\\
    &\!\!\!\!=\!\!\!\!& \begin{cases}
    \max_{\calH\in\R^{d_1\times\cdots\times d_N}, \|\calH\|_F= 1, \atop \text{rank}(\calH)=(2r_1^\text{tk},\dots,2r_{N}^\text{tk}) } |\<\calX - \calX^\star,  \calH \>| \\
     \max_{\calH\in\R^{d_1\times\cdots\times d_N}, \|\calH\|_F= 1, \atop \text{rank}(\calH)=(2r_1^\text{tt},\dots,2r_{N-1}^\text{tt}) } |\<\calX - \calX^\star,  \calH \>|
    \end{cases}\nonumber\\
    &\!\!\!\!\geq \!\!\!\!& \begin{cases}
    \max_{\calH\in\R^{d_1\times\cdots\times d_N}, \|\calH\|_F= 1, \atop \text{rank}(\calH)=(2r_1^\text{tk},\dots,2r_{N}^\text{tk}) } \frac{1}{m}\<\mathcal{A}(\calX - \calX^\star),\mathcal{A}(\calH)\> - \delta_{4\ol r^\text{tk}}\|\calX - \calX^\star\|_F\|\calH\|_F \\
     \max_{\calH\in\R^{d_1\times\cdots\times d_N}, \|\calH\|_F= 1, \atop \text{rank}(\calH)=(2r_1^\text{tt},\dots,2r_{N-1}^\text{tt}) } \frac{1}{m}\<\mathcal{A}(\calX - \calX^\star),\mathcal{A}(\calH)\> - \delta_{4\ol r^\text{tt}}\|\calX - \calX^\star\|_F\|\calH\|_F
    \end{cases}\nonumber\\
    &\!\!\!\!= \!\!\!\!& \|\nabla G(\calX) - \nabla G(\calX^\star)\|_{F,2\ol r} - \delta_{4\ol r}\|\calX - \calX^\star\|_F\nonumber\\
    &\!\!\!\!= \!\!\!\!& \|\nabla G(\calX) - \nabla G(\calX^\star)\|_{F} - \delta_{4\ol r}\|\calX - \calX^\star\|_F,
\end{eqnarray}
where the first inequality follows \Cref{RIP of tensor sensing another_3}. We then rewrite \eqref{lower bound of F norm for Tu and TT} as
\begin{eqnarray}
    \label{lower bound of F norm for Tu and TT 1}
    (1 + \delta_{4 \ol r})\|\calX - \calX^\star\|_F \geq  \|\nabla G(\calX) - \nabla G(\calX^\star)\|_{F}.
\end{eqnarray}

Combining \eqref{RCG for G} and \eqref{lower bound of F norm for Tu and TT 1}, we can obtain \eqref{RCG conclusion for Tu and TT}.

\section{Upper bound of spectral initialization for Tucker tensor}
\label{Proof of Tucker initialization}
\begin{proof}
We can expand $\|\calX^{(0)}-\calX^\star\|_F$ as
\begin{eqnarray}
    \label{The proof the spectral initialization in the Tucker format}
    &\!\!\!\!\!\!\!\!&\|\calX^{(0)}-\calX^\star\|_F=\|\text{HOSVD}(\frac{1}{m}\sum_{k=1}^my_k\calA_k)-\calX^\star\|_{F,2\ol r}\nonumber\\
    &\!\!\!\!\leq\!\!\!\!&\|\text{HOSVD}(\frac{1}{m}\sum_{k=1}^my_k\calA_k)-\frac{1}{m}\sum_{k=1}^my_k\calA_k\|_{F,2\ol r}
    +\|\frac{1}{m}\sum_{k=1}^my_k\calA_k-\calX^\star\|_{F,2\ol r}\nonumber\\
    &\!\!\!\!\leq\!\!\!\!&\sqrt{N}\|\text{opt}_{\vr^\text{tk}}(\frac{1}{m}\sum_{k=1}^my_k\calA_k)-\frac{1}{m}\sum_{k=1}^my_k\calA_k\|_{F,2\ol r}+
    \|\frac{1}{m}\sum_{k=1}^my_k\calA_k-\calX^\star\|_{F,2\ol r}\nonumber\\
    &\!\!\!\!\leq\!\!\!\!&(1+\sqrt{N})\|\frac{1}{m}\sum_{k=1}^my_k\calA_k-\calX^\star\|_{F,2\ol r}=(1+\sqrt{N})\max_{\calZ\in\R^{d_1\times\cdots\times d_N}, \|\calZ\|_F\leq 1, \atop \text{rank}(\calZ) = (2r_1^\text{tk},\dots,2r_{N}^\text{tk}) } |\<(\frac{1}{m}\calA^*\calA-\mathcal{I})(\calX^\star), \calZ \>|\nonumber\\
    &\!\!\!\!\leq\!\!\!\!& \delta_{3\ol r^\text{tk}}(1+\sqrt{N})\|\calX^\star\|_F,
\end{eqnarray}
where the second inequality follows from the quasi-optimality of HOSVD projection \cite{hackbusch2012tensor}. Here, we use the notation $\text{opt}_{\vr^\text{tk}}(\frac{1}{m}\sum_{k=1}^my_k\calA_k)$ to denote the best approximation to $\frac{1}{m}\sum_{k=1}^my_k\calA_k$ in the Frobenius norm.
Furthermore, due to the definition of $\text{opt}_{\vr^\text{tk}}(\cdot)$, it is always true that $\|\text{opt}_{\vr^\text{tk}}(\frac{1}{m}\sum_{k=1}^my_k\calA_k)-\frac{1}{m}\sum_{k=1}^my_k\calA_k\|_F \leq \|\frac{1}{m}\sum_{k=1}^my_k\calA_k-\calX^\star\|_F$, as stated in the third inequality. In the last line, we make use of \eqref{RIP of tensor sensing another_3}.

\end{proof}


\begin{thebibliography}{10}

\bibitem{liu2010interior}
Zhang Liu and Lieven Vandenberghe.
\newblock Interior-point method for nuclear norm approximation with application
  to system identification.
\newblock {\em SIAM Journal on Matrix Analysis and Applications},
  31(3):1235--1256, 2010.

\bibitem{zhang2021designing}
Jing Zhang, Xiaoli Ma, Jun Qi, and Shi Jin.
\newblock Designing tensor-train deep neural networks for time-varying mimo
  channel estimation.
\newblock {\em IEEE Journal of Selected Topics in Signal Processing},
  15(3):759--773, 2021.

\bibitem{francca2021fast}
Daniel~Stilck Fran{\c{c}}a, Fernando~GS Brand{\~a}o, and Richard Kueng.
\newblock Fast and robust quantum state tomography from few basis measurements.
\newblock In {\em 16th Conference on the Theory of Quantum Computation,
  Communication and Cryptography (TQC 2021)}. Schloss Dagstuhl-Leibniz-Zentrum
  f{\"u}r Informatik, 2021.

\bibitem{lidiak2022quantum}
Alexander Lidiak, Casey Jameson, Zhen Qin, Gongguo Tang, Michael~B Wakin,
  Zhihui Zhu, and Zhexuan Gong.
\newblock Quantum state tomography with tensor train cross approximation.
\newblock {\em arXiv preprint arXiv:2207.06397}, 2022.

\bibitem{qin2024quantum}
Zhen Qin, Casey Jameson, Zhexuan Gong, Michael~B Wakin, and Zhihui Zhu.
\newblock Quantum state tomography for matrix product density operators.
\newblock {\em IEEE Transactions on Information Theory}, 70(7):5030--5056,
  2024.

\bibitem{li2017mixture}
Dongsheng Li, Chao Chen, Wei Liu, Tun Lu, Ning Gu, and Stephen Chu.
\newblock Mixture-rank matrix approximation for collaborative filtering.
\newblock {\em Advances in Neural Information Processing Systems}, 30, 2017.

\bibitem{yu2020recovery}
Ming Yu, Varun Gupta, and Mladen Kolar.
\newblock Recovery of simultaneous low rank and two-way sparse coefficient
  matrices, a nonconvex approach.
\newblock {\em Electronic Journal of Statistics}, 14(1):413--457, 2020.

\bibitem{hotelling1933analysis}
Harold Hotelling.
\newblock Analysis of a complex of statistical variables into principal
  components.
\newblock {\em Journal of educational psychology}, 24(6):417, 1933.

\bibitem{faust2012microbial}
Karoline Faust, J~Fah Sathirapongsasuti, Jacques Izard, Nicola Segata, Dirk
  Gevers, Jeroen Raes, and Curtis Huttenhower.
\newblock Microbial co-occurrence relationships in the human microbiome.
\newblock {\em PLoS computational biology}, 8(7):e1002606, 2012.

\bibitem{sewell2015latent}
Daniel~K Sewell and Yuguo Chen.
\newblock Latent space models for dynamic networks.
\newblock {\em Journal of the American Statistical Association},
  110(512):1646--1657, 2015.

\bibitem{salmon2014poisson}
Joseph Salmon, Zachary Harmany, Charles-Alban Deledalle, and Rebecca Willett.
\newblock Poisson noise reduction with non-local pca.
\newblock {\em Journal of mathematical imaging and vision}, 48(2):279--294,
  2014.

\bibitem{khrulkov2017expressive}
Valentin Khrulkov, Alexander Novikov, and Ivan Oseledets.
\newblock Expressive power of recurrent neural networks.
\newblock {\em arXiv preprint arXiv:1711.00811}, 2017.

\bibitem{stoudenmire2016supervised}
Edwin Stoudenmire and David~J Schwab.
\newblock Supervised learning with tensor networks.
\newblock {\em Advances in neural information processing systems}, 29, 2016.

\bibitem{yang2017tensor}
Yinchong Yang, Denis Krompass, and Volker Tresp.
\newblock Tensor-train recurrent neural networks for video classification.
\newblock In {\em International Conference on Machine Learning}, pages
  3891--3900. PMLR, 2017.

\bibitem{novikov2015tensorizing}
Alexander Novikov, Dmitrii Podoprikhin, Anton Osokin, and Dmitry~P Vetrov.
\newblock Tensorizing neural networks.
\newblock {\em Advances in neural information processing systems}, 28, 2015.

\bibitem{tjandra2017compressing}
Andros Tjandra, Sakriani Sakti, and Satoshi Nakamura.
\newblock Compressing recurrent neural network with tensor train.
\newblock In {\em 2017 International Joint Conference on Neural Networks
  (IJCNN)}, pages 4451--4458. IEEE, 2017.

\bibitem{yu2017long}
Rose Yu, Stephan Zheng, Anima Anandkumar, and Yisong Yue.
\newblock Long-term forecasting using tensor-train rnns.
\newblock {\em Arxiv}, 2017.

\bibitem{ma2019tensorized}
Xindian Ma, Peng Zhang, Shuai Zhang, Nan Duan, Yuexian Hou, Ming Zhou, and
  Dawei Song.
\newblock A tensorized transformer for language modeling.
\newblock {\em Advances in neural information processing systems}, 32, 2019.

\bibitem{frolov2017tensor}
Evgeny Frolov and Ivan Oseledets.
\newblock Tensor methods and recommender systems.
\newblock {\em Wiley Interdisciplinary Reviews: Data Mining and Knowledge
  Discovery}, 7(3):e1201, 2017.

\bibitem{Tucker66}
Ledyard~R Tucker.
\newblock Some mathematical notes on three-mode factor analysis.
\newblock {\em Psychometrika}, 31(3):279--311, 1966.

\bibitem{Oseledets11}
I.~Oseledets.
\newblock Tensor-train decomposition.
\newblock {\em SIAM Journal on Scientific Computing}, 33(5):2295--2317, 2011.

\bibitem{Bro97}
Rasmus Bro.
\newblock Parafac. {T}utorial and applications.
\newblock {\em Chemometrics and intelligent laboratory systems},
  38(2):149--171, 1997.

\bibitem{sidiropoulos2000uniqueness}
Nicholas~D Sidiropoulos and Rasmus Bro.
\newblock On the uniqueness of multilinear decomposition of n-way arrays.
\newblock {\em Journal of Chemometrics: A Journal of the Chemometrics Society},
  14(3):229--239, 2000.

\bibitem{li2017parsimonious}
Lexin Li and Xin Zhang.
\newblock Parsimonious tensor response regression.
\newblock {\em Journal of the American Statistical Association},
  112(519):1131--1146, 2017.

\bibitem{li2018tucker}
Xiaoshan Li, Da~Xu, Hua Zhou, and Lexin Li.
\newblock Tucker tensor regression and neuroimaging analysis.
\newblock {\em Statistics in Biosciences}, 10:520--545, 2018.

\bibitem{das2021hyperspectral}
Samiran Das.
\newblock Hyperspectral image, video compression using sparse tucker tensor
  decomposition.
\newblock {\em IET Image Processing}, 15(4):964--973, 2021.

\bibitem{hoff2015multilinear}
Peter~D Hoff.
\newblock Multilinear tensor regression for longitudinal relational data.
\newblock {\em The annals of applied statistics}, 9(3):1169, 2015.

\bibitem{cramer2010efficient}
Marcus Cramer, Martin~B Plenio, Steven~T Flammia, Rolando Somma, David Gross,
  Stephen~D Bartlett, Olivier Landon-Cardinal, David Poulin, and Yi-Kai Liu.
\newblock Efficient quantum state tomography.
\newblock {\em Nature communications}, 1(1):149, 2010.

\bibitem{lanyon2017efficient}
BP~Lanyon, C~Maier, Milan Holz{\"a}pfel, Tillmann Baumgratz, C~Hempel,
  P~Jurcevic, Ish Dhand, AS~Buyskikh, AJ~Daley, Marcus Cramer, et~al.
\newblock Efficient tomography of a quantum many-body system.
\newblock {\em Nature Physics}, 13(12):1158--1162, 2017.

\bibitem{wang2020scalable}
Jun Wang, Zhao-Yu Han, Song-Bo Wang, Zeyang Li, Liang-Zhu Mu, Heng Fan, and Lei
  Wang.
\newblock Scalable quantum tomography with fidelity estimation.
\newblock {\em Physical Review A}, 101(3):032321, 2020.

\bibitem{verstraete2004matrix}
Frank Verstraete, Juan~J Garcia-Ripoll, and Juan~Ignacio Cirac.
\newblock Matrix product density operators: Simulation of finite-temperature
  and dissipative systems.
\newblock {\em Physical review letters}, 93(20):207204, 2004.

\bibitem{pirvu2010matrix}
Bogdan Pirvu, Valentin Murg, J~Ignacio Cirac, and Frank Verstraete.
\newblock Matrix product operator representations.
\newblock {\em New Journal of Physics}, 12(2):025012, 2010.

\bibitem{werner2016positive}
Albert~H Werner, Daniel Jaschke, Pietro Silvi, Martin Kliesch, Tommaso Calarco,
  Jens Eisert, and Simone Montangero.
\newblock Positive tensor network approach for simulating open quantum
  many-body systems.
\newblock {\em Physical review letters}, 116(23):237201, 2016.

\bibitem{jarkovsky2020efficient}
Ji{\v{r}}{\'\i}~Guth Jarkovsk{\`y}, Andr{\'a}s Moln{\'a}r, Norbert Schuch, and
  J~Ignacio Cirac.
\newblock Efficient description of many-body systems with matrix product
  density operators.
\newblock {\em PRX Quantum}, 1(1):010304, 2020.

\bibitem{holtz2012manifolds}
Sebastian Holtz, Thorsten Rohwedder, and Reinhold Schneider.
\newblock On manifolds of tensors of fixed tt-rank.
\newblock {\em Numerische Mathematik}, 120(4):701--731, 2012.

\bibitem{cai2010singular}
Jian-Feng Cai, Emmanuel~J Cand{\`e}s, and Zuowei Shen.
\newblock A singular value thresholding algorithm for matrix completion.
\newblock {\em SIAM Journal on optimization}, 20(4):1956--1982, 2010.

\bibitem{cichocki2015tensor}
Andrzej Cichocki, Danilo Mandic, Lieven De~Lathauwer, Guoxu Zhou, Qibin Zhao,
  Cesar Caiafa, and Huy~Anh Phan.
\newblock Tensor decompositions for signal processing applications: From
  two-way to multiway component analysis.
\newblock {\em IEEE signal processing magazine}, 32(2):145--163, 2015.

\bibitem{sorensen2012canonical}
Mikael S{\o}rensen, Lieven~De Lathauwer, Pierre Comon, Sylvie Icart, and Luc
  Deneire.
\newblock Canonical polyadic decomposition with a columnwise orthonormal factor
  matrix.
\newblock {\em SIAM Journal on Matrix Analysis and Applications},
  33(4):1190--1213, 2012.

\bibitem{cheng2016probabilistic}
Lei Cheng, Yik-Chung Wu, and H~Vincent Poor.
\newblock Probabilistic tensor canonical polyadic decomposition with orthogonal
  factors.
\newblock {\em IEEE Transactions on Signal Processing}, 65(3):663--676, 2016.

\bibitem{tang2025revisit}
Runshi Tang, Julien Chhor, Olga Klopp, and Anru~R Zhang.
\newblock Revisit cp tensor decomposition: Statistical optimality and fast
  convergence.
\newblock {\em arXiv preprint arXiv:2505.23046}, 2025.

\bibitem{cai2019nonconvex}
Changxiao Cai, Gen Li, H~Vincent Poor, and Yuxin Chen.
\newblock Nonconvex low-rank tensor completion from noisy data.
\newblock {\em Advances in neural information processing systems}, 32, 2019.

\bibitem{hao2021sparse}
Botao Hao, Boxiang Wang, Pengyuan Wang, Jingfei Zhang, Jian Yang, and Will~Wei
  Sun.
\newblock Sparse tensor additive regression.
\newblock {\em Journal of machine learning research}, 22(64):1--43, 2021.

\bibitem{Han20}
Rungang Han, Rebecca Willett, and Anru~R Zhang.
\newblock An optimal statistical and computational framework for generalized
  tensor estimation.
\newblock {\em The Annals of Statistics}, 50(1):1--29, 2022.

\bibitem{XiaTC19}
Dong Xia and Ming Yuan.
\newblock On polynomial time methods for exact low-rank tensor completion.
\newblock {\em Foundations of Computational Mathematics}, 19(6):1265--1313,
  2019.

\bibitem{TongTensor21}
Tian Tong, Cong Ma, Ashley Prater-Bennette, Erin Tripp, and Yuejie Chi.
\newblock Scaling and scalability: Provable nonconvex low-rank tensor
  estimation from incomplete measurements.
\newblock {\em Journal of Machine Learning Research}, 23(163):1--77, 2022.

\bibitem{qin2024guaranteed}
Zhen Qin, Michael~B Wakin, and Zhihui Zhu.
\newblock Guaranteed nonconvex factorization approach for tensor train
  recovery.
\newblock {\em Journal of Machine Learning Research}, 25(383):1--48, 2024.

\bibitem{cai2022provable}
Jian-Feng Cai, Jingyang Li, and Dong Xia.
\newblock Provable tensor-train format tensor completion by riemannian
  optimization.
\newblock {\em Journal of Machine Learning Research}, 23(123):1--77, 2022.

\bibitem{chen2021deep}
Zhengyu Chen, Ziqing Xu, and Donglin Wang.
\newblock Deep transfer tensor decomposition with orthogonal constraint for
  recommender systems.
\newblock In {\em Proceedings of the AAAI Conference on Artificial
  Intelligence}, volume~35, pages 4010--4018, 2021.

\bibitem{anandkumar2014tensor}
Animashree Anandkumar, Rong Ge, Daniel Hsu, Sham~M Kakade, and Matus Telgarsky.
\newblock Tensor decompositions for learning latent variable models.
\newblock {\em Journal of machine learning research}, 15:2773--2832, 2014.

\bibitem{jain2013low}
Prateek Jain, Praneeth Netrapalli, and Sujay Sanghavi.
\newblock Low-rank matrix completion using alternating minimization.
\newblock In {\em Proceedings of the forty-fifth annual ACM symposium on Theory
  of computing}, pages 665--674, 2013.

\bibitem{Tu16}
Stephen Tu, Ross Boczar, Max Simchowitz, Mahdi Soltanolkotabi, and Ben Recht.
\newblock Low-rank solutions of linear matrix equations via procrustes flow.
\newblock In {\em International Conference on Machine Learning}, pages
  964--973. PMLR, 2016.

\bibitem{Wang17}
Lingxiao Wang, Xiao Zhang, and Quanquan Gu.
\newblock A unified computational and statistical framework for nonconvex
  low-rank matrix estimation.
\newblock In {\em Artificial Intelligence and Statistics}, pages 981--990.
  PMLR, 2017.

\bibitem{Zhu18TSP}
Zhihui Zhu, Qiuwei Li, Gongguo Tang, and Michael~B Wakin.
\newblock Global optimality in low-rank matrix optimization.
\newblock {\em IEEE Transactions on Signal Processing}, 66(13):3614--3628,
  2018.

\bibitem{Li20}
Xiao Li, Zhihui Zhu, Anthony Man-Cho~So, and Rene Vidal.
\newblock Nonconvex robust low-rank matrix recovery.
\newblock {\em SIAM Journal on Optimization}, 30(1):660--686, 2020.

\bibitem{tong2021accelerating}
Tian Tong, Cong Ma, and Yuejie Chi.
\newblock Accelerating ill-conditioned low-rank matrix estimation via scaled
  gradient descent.
\newblock {\em J. Mach. Learn. Res.}, 22:150--1, 2021.

\bibitem{Ma21TSP}
Cong Ma, Yuanxin Li, and Yuejie Chi.
\newblock Beyond procrustes: Balancing-free gradient descent for asymmetric
  low-rank matrix sensing.
\newblock {\em IEEE Transactions on Signal Processing}, 69:867--877, 2021.

\bibitem{dong2023fast}
Harry Dong, Tian Tong, Cong Ma, and Yuejie Chi.
\newblock Fast and provable tensor robust principal component analysis via
  scaled gradient descent.
\newblock {\em Information and Inference: A Journal of the IMA}, 12(3):iaad019,
  2023.

\bibitem{wang2016tensor}
Wenqi Wang, Vaneet Aggarwal, and Shuchin Aeron.
\newblock Tensor completion by alternating minimization under the tensor train
  (tt) model.
\newblock {\em arXiv preprint arXiv:1609.05587}, 2016.

\bibitem{yuan2019high}
Longhao Yuan, Qibin Zhao, Lihua Gui, and Jianting Cao.
\newblock High-order tensor completion via gradient-based optimization under
  tensor train format.
\newblock {\em Signal Processing: Image Communication}, 73:53--61, 2019.

\bibitem{Rauhut17}
Holger Rauhut, Reinhold Schneider, and {\v{Z}}eljka Stojanac.
\newblock Low rank tensor recovery via iterative hard thresholding.
\newblock {\em Linear Algebra and its Applications}, 523:220--262, 2017.

\bibitem{rauhut2015tensor}
Holger Rauhut, Reinhold Schneider, and {\v{Z}}eljka Stojanac.
\newblock Tensor completion in hierarchical tensor representations.
\newblock In {\em Compressed sensing and its applications}, pages 419--450.
  Springer, 2015.

\bibitem{chen2019non}
Han Chen, Garvesh Raskutti, and Ming Yuan.
\newblock Non-convex projected gradient descent for generalized low-rank tensor
  regression.
\newblock {\em The Journal of Machine Learning Research}, 20(1):172--208, 2019.

\bibitem{kressner2014low}
Daniel Kressner, Michael Steinlechner, and Bart Vandereycken.
\newblock Low-rank tensor completion by riemannian optimization.
\newblock {\em BIT Numerical Mathematics}, 54(2):447--468, 2014.

\bibitem{budzinskiy2021tensor}
Stanislav Budzinskiy and Nikolai Zamarashkin.
\newblock Tensor train completion: local recovery guarantees via riemannian
  optimization.
\newblock {\em arXiv preprint arXiv:2110.03975}, 2021.

\bibitem{wang2019tensor}
Junli Wang, Guangshe Zhao, Dingheng Wang, and Guoqi Li.
\newblock Tensor completion using low-rank tensor train decomposition by
  riemannian optimization.
\newblock In {\em 2019 Chinese Automation Congress (CAC)}, pages 3380--3384.
  IEEE, 2019.

\bibitem{luo2021low}
Yuetian Luo and Anru~R Zhang.
\newblock Low-rank tensor estimation via riemannian gauss-newton: Statistical
  optimality and second-order convergence.
\newblock {\em arXiv preprint arXiv:2104.12031}, 2021.

\bibitem{luo2022tensor}
Yuetian Luo and Anru~R Zhang.
\newblock Tensor-on-tensor regression: Riemannian optimization,
  over-parameterization, statistical-computational gap, and their interplay.
\newblock {\em arXiv preprint arXiv:2206.08756}, 2022.

\bibitem{qin2024computational}
Zhen Qin and Zhihui Zhu.
\newblock Computational and statistical guarantees for tensor-on-tensor
  regression with tensor train decomposition.
\newblock {\em arXiv preprint arXiv:2406.06002}, 2024.

\bibitem{absil2008optimization}
P-A Absil, Robert Mahony, and Rodolphe Sepulchre.
\newblock {\em Optimization algorithms on matrix manifolds}.
\newblock Princeton University Press, 2008.

\bibitem{zhang2017matrix}
Xian-Da Zhang.
\newblock {\em Matrix analysis and applications}.
\newblock Cambridge University Press, 2017.

\bibitem{chen2015solving}
Yuxin Chen and Emmanuel Candes.
\newblock Solving random quadratic systems of equations is nearly as easy as
  solving linear systems.
\newblock {\em Advances in Neural Information Processing Systems}, 28, 2015.

\bibitem{candes2015phase}
Emmanuel~J Candes, Xiaodong Li, and Mahdi Soltanolkotabi.
\newblock Phase retrieval via wirtinger flow: Theory and algorithms.
\newblock {\em IEEE Transactions on Information Theory}, 61(4):1985--2007,
  2015.

\bibitem{zhu2019linearly}
Zhihui Zhu, Tianyu Ding, Daniel Robinson, Manolis Tsakiris, and Ren{\'e} Vidal.
\newblock A linearly convergent method for non-smooth non-convex optimization
  on the grassmannian with applications to robust subspace and dictionary
  learning.
\newblock {\em Advances in Neural Information Processing Systems}, 32, 2019.

\bibitem{chi2019nonconvex}
Yuejie Chi, Yue~M Lu, and Yuxin Chen.
\newblock Nonconvex optimization meets low-rank matrix factorization: An
  overview.
\newblock {\em IEEE Transactions on Signal Processing}, 67(20):5239--5269,
  2019.

\bibitem{yonel2020deterministic}
Bariscan Yonel and Birsen Yazici.
\newblock A deterministic theory for exact non-convex phase retrieval.
\newblock {\em IEEE Transactions on Signal Processing}, 68:4612--4626, 2020.

\bibitem{grasedyck2010hierarchical}
Lars Grasedyck.
\newblock Hierarchical singular value decomposition of tensors.
\newblock {\em SIAM journal on matrix analysis and applications},
  31(4):2029--2054, 2010.

\bibitem{da2015optimization}
Curt Da~Silva and Felix~J Herrmann.
\newblock Optimization on the hierarchical tucker manifold--applications to
  tensor completion.
\newblock {\em Linear Algebra and its Applications}, 481:131--173, 2015.

\bibitem{cichocki2016tensor}
Andrzej Cichocki, Namgil Lee, Ivan Oseledets, Anh-Huy Phan, Qibin Zhao,
  Danilo~P Mandic, et~al.
\newblock Tensor networks for dimensionality reduction and large-scale
  optimization: Part 1 low-rank tensor decompositions.
\newblock {\em Foundations and Trends{\textregistered} in Machine Learning},
  9(4-5):249--429, 2016.

\bibitem{orus2019tensor}
Rom{\'a}n Or{\'u}s.
\newblock Tensor networks for complex quantum systems.
\newblock {\em Nature Reviews Physics}, 1(9):538--550, 2019.

\bibitem{guo2011tensor}
Weiwei Guo, Irene Kotsia, and Ioannis Patras.
\newblock Tensor learning for regression.
\newblock {\em IEEE Transactions on Image Processing}, 21(2):816--827, 2011.

\bibitem{hao2020sparse}
Botao Hao, Anru~R Zhang, and Guang Cheng.
\newblock Sparse and low-rank tensor estimation via cubic sketchings.
\newblock In {\em International Conference on Artificial Intelligence and
  Statistics}, pages 1319--1330. PMLR, 2020.

\bibitem{zhou2013tensor}
Hua Zhou, Lexin Li, and Hongtu Zhu.
\newblock Tensor regression with applications in neuroimaging data analysis.
\newblock {\em Journal of the American Statistical Association},
  108(502):540--552, 2013.

\bibitem{donoho2006compressed}
David~L Donoho.
\newblock Compressed sensing.
\newblock {\em IEEE Transactions on information theory}, 52(4):1289--1306,
  2006.

\bibitem{candes2006robust}
Emmanuel~J Cand{\`e}s, Justin Romberg, and Terence Tao.
\newblock Robust uncertainty principles: Exact signal reconstruction from
  highly incomplete frequency information.
\newblock {\em IEEE Transactions on information theory}, 52(2):489--509, 2006.

\bibitem{candes2008introduction}
Emmanuel~J Cand{\`e}s and Michael~B Wakin.
\newblock An introduction to compressive sampling.
\newblock {\em IEEE signal processing magazine}, 25(2):21--30, 2008.

\bibitem{recht2010guaranteed}
Benjamin Recht, Maryam Fazel, and Pablo~A Parrilo.
\newblock Guaranteed minimum-rank solutions of linear matrix equations via
  nuclear norm minimization.
\newblock {\em SIAM review}, 52(3):471--501, 2010.

\bibitem{grotheer2021iterative}
Rachel Grotheer, Shuang Li, Anna Ma, Deanna Needell, and Jing Qin.
\newblock Iterative hard thresholding for low cp-rank tensor models.
\newblock {\em Linear and Multilinear Algebra}, pages 1--17, 2021.

\bibitem{bierme2015modulus}
Hermine Bierm{\'e} and C{\'e}line Lacaux.
\newblock Modulus of continuity of some conditionally sub-gaussian fields,
  application to stable random fields.
\newblock 2015.

\bibitem{ZhangISLET20}
Anru~R Zhang, Yuetian Luo, Garvesh Raskutti, and Ming Yuan.
\newblock Islet: Fast and optimal low-rank tensor regression via importance
  sketching.
\newblock {\em SIAM journal on mathematics of data science}, 2(2):444--479,
  2020.

\bibitem{de2000multilinear}
Lieven De~Lathauwer, Bart De~Moor, and Joos Vandewalle.
\newblock A multilinear singular value decomposition.
\newblock {\em SIAM journal on Matrix Analysis and Applications},
  21(4):1253--1278, 2000.

\bibitem{LiSIAM21}
Xiao Li, Shixiang Chen, Zengde Deng, Qing Qu, Zhihui Zhu, and Anthony
  Man-Cho~So.
\newblock Weakly convex optimization over {S}tiefel manifold using {R}iemannian
  subgradient-type methods.
\newblock {\em SIAM Journal on Optimization}, 31(3):1605--1634, 2021.

\bibitem{hackbusch2012tensor}
Wolfgang Hackbusch.
\newblock {\em Tensor spaces and numerical tensor calculus}, volume~42.
\newblock Springer, 2012.

\end{thebibliography}

\end{document}